\theoremstyle{remark}
\newcommand{\ben}{\begin{eqnarray}}
\newcommand{\een}{\end{eqnarray}}
\newcommand{\transpose}{^{\intercal}}
\newtheorem{lemma}{Lemma}
\newcommand{\iid}{\emph{i.i.d.}}
\newtheorem{theorem}{Theorem}
\newtheorem{remark}{Remark}
\title{Detecting changes in dynamic events over networks}
\author{Shuang Li, \thanks{Shuang Li
    (Email: sli370@gatech.edu) and Yao Xie (Email: yao.xie@isye.gatech.edu)
   are with the H. Milton Stewart School of
    Industrial and Systems Engineering, Georgia Institute of
    Technology, Atlanta, GA, USA. Mehrdad Farajtabar (Email: mehrdad@gatech.edu), Apurv Verma (Email: apurvverma@gatech.edu) and Le Song (Email: lsong@cc.gatech.edu) are with College of Computing, Georgia Institute of
    Technology, Atlanta, GA, USA. This research was supported in part by CMMI-1538746 and CCF-1442635 to Y.X.; NSF/NIH BIGDATA
1R01GM108341, ONR N00014-15-1-2340, NSF IIS-1218749, NSF IIS-1639792, NSF CAREER IIS-1350983, grant from Intel and NVIDIA to L.S..} Yao Xie, Mehrdad Farajtabar, Apurv Verma, and Le Song
}
\begin{document}
%
\maketitle
\begin{abstract}
Large volume of networked streaming event data are becoming increasingly available in a wide variety of applications, such as social network analysis, Internet traffic monitoring and healthcare analytics. Streaming event data are discrete observation occurred in continuous time, and the precise time interval between two events carries a great deal of information about the dynamics of the underlying systems. How to promptly detect changes in these dynamic systems using these streaming event data? In this paper, we propose a novel change-point detection framework for multi-dimensional event data over networks. We cast the problem into sequential hypothesis test, and derive the likelihood ratios for point processes, which are computed efficiently via an EM-like algorithm that is parameter-free and can be computed in a distributed fashion. We derive a highly accurate theoretical characterization of the false-alarm-rate, and show that it can achieve weak signal detection by aggregating local statistics over time and networks. Finally, we demonstrate the good performance of our algorithm on numerical examples and real-world datasets from twitter and Memetracker.

{\bf Keywords}: Change-point Detection for Event Data, Hawkes Process, Online Detection Algorithm, False Alarm Control

\end{abstract}


\section{Introduction}

Networks have become a convenient tool for people to efficiently disseminate, exchange and search for information. Recent attacks on very popular web sites such as Yahoo and eBay \cite{laptev2015generic}, leading to a disruption of services to users, have triggered an increasing interest in network anomaly detection. 
In the positive side, surge of hot topics and breaking news can provide business opportunities. Therefore, {\it early detection} of changes, such as anomalies, epidemic outbreaks, hot topics, or new trends among streams of data from networked entities is a very important task and has been attracting significant interests \cite{laptev2015generic,levy2009detection,christakis2010social}. 

All types of the above-mentioned changes can be more concretely formulated as the changes of time interval distributions between events, combined with the alteration of interaction structures across components in networks. 
%
%
However, change-point detection based on event data occurring over the network topology is nontrivial. Apart from the possible temporal dependency of the event data as well as the complex cross-dimensional dependence among components in network, event data from networked entities are usually not synchronized in time. Dynamic in nature, many of the collected data are discrete events observed irregularly in continuous time~\cite{rodriguez2011uncovering,myers2014bursty}. The precise time interval between two events is random and carries a great deal of information about the dynamics of the underlying systems. These characteristics make such event data fundamentally different from independently and identically distributed (\iid) data, and time-series data where time and space is treated as an index rather than random variables (see Figure~\ref{fig:asyn_data} for further illustrations of the distinctive nature of event data vs. \iid~and time series data). 
Clearly, \iid~assumption can not capture temporal dependency between data points, while time-series models require us to discretize the time axis and aggregate the observed events into bins (such as the approach in \cite{spikeTrain2003} for neural spike train change detection). If this approach is taken, it is not clear how one can choose the size of the bin and how to best deal with the case when there is no event within a bin. 


\begin{figure*}[t]
  \centering
  \includegraphics[width=1.00\linewidth]{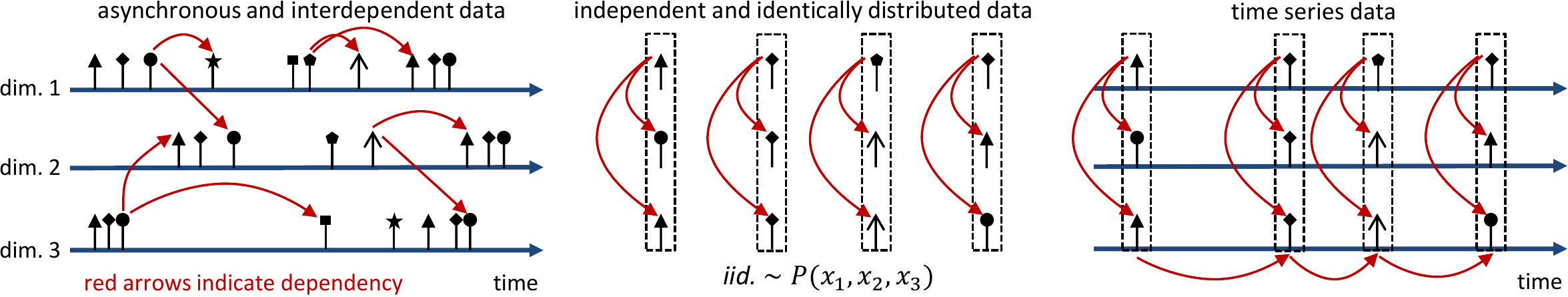}
    \vspace{-4mm}
  \caption{\small Asynchronously and interdependently generated high dimensional event data are fundamentally different from \iid~and time-series data. First, observations for each dimension can be collected at different time points; Second, there can be temporal dependence as well as cross-dimensional dependence. In contrast, the dimensions of \iid~and time-series data are sampled at the same time point, and in the figure, different marks indicate potentially different values or features of an observation.}
    \label{fig:asyn_data}
    \vspace{-4mm}
\end{figure*} 

Besides the distinctive temporal and spatial aspect, there are three additional challenges using event data over network: ({\it i}) how to detect weak changes; ({\it ii}) how to update the statistics efficiently online; and ({\it iii}) how to provide theoretical characterization of the false-alarm-rate for the statistics. 
For the first challenge, many existing approaches usually use random or ad-hoc aggregations which may not pool data efficiently or lose statistical power to detect weak signals. Occurrence of change-points (e.g., epidemic outbreaks, hot topics, etc.) over networks usually evince a certain clustering behavior over dimensions and tend to synchronize in time. 
Smart aggregation over dimensions and time horizon would manifest the strength of signals and detect the change quicker~\cite{milling2015distinguishing}. 
For the second challenge, many existing change-point detection methods based on likelihood ratio statistics do not take into account computational complexity nor can be computed in a distributed fashion and, hence, are not scalable to large networks. Temporal events can arrive at social platforms in very high volume and velocity. For instance, every day, on average, around 500 million tweets are tweeted on Twitter \cite{twitter}. 
There is a great need for developing efficient algorithms for updating the detection statistics online. 
For the third challenge, it is usually very hard to control false-alarms for change-point detection statistics over a large network. When applied to real network data, traditional detection approaches usually have a high false alarms~\cite{laptev2015generic}. This would lead to a huge waste of resources since every time a change-point is declared, subsequent diagnoses are needed. Lacking accurate theoretical characterization of false-alarms, existing approaches usually have to incur expensive Monte Carlo simulations to determine the false-alarms and are prohibitive for large networks.

{\bf Our contributions.} In this paper, we present a novel online change-point detection framework tailored to multi-dimensional intertwined event data streams over networks (or conceptual networks) tackling the above challenges. We formulate the problem by leveraging the mathematical framework of sequential hypothesis testing and point processes modeling, where before the change the event stream follows one point process, and after the change the event stream becomes a different point process. Our goal is to detect such changes {\it as quickly as possible} after the occurrences. We derive generalized likelihood ratio statistics, and present an efficient EM-like algorithm to compute the statistic online with streaming data. The EM-like algorithm is parameter-free and can be implemented in a distributed fashion and, hence, it is suitable for large networks.

Specifically, our contributions include the following: 


  ({\it i}) We present a new sequential hypothesis test and likelihood ratio approach for detecting changes for the event data streams {\it over networks}.  We will either use the Poisson process as the null distribution to detect the appearance of temporal independence, or use the Hawkes process as the null distribution to detect the possible alteration of the dependency structure.  
 For (inhomogeneous) Poisson process, time intervals between events are assumed to be independent and exponentially distributed. For Hawkes process, the occurrence intensity of events depends on the events that have occurred, which implies that the time intervals between events would be correlated. Therefore, Hawkes process can be thought of as a special autoregressive process in time, and multivariate Hawkes process also provides a flexible model to capture cross-dimension dependency in addition to temporal dependency. Our model explicitly captures the information diffusion (and dependencies) both over networks and time, and allows us to aggregate information for weak signal detection. Our proposed detection framework is quite general and can be easily adapted to other point processes. 
  
In contrast, existing work on change-point detection for point processes has also been focused on a single stream rather than the multidimensional case with networks. These work including detecting change in the intensity of a Poisson process \cite{shen2012change,zhang2014scanning,herberts2004optimal} and the coefficient of continuous diffusion process \cite{StimbergRuttor2011}; detecting change using the self-exciting Hawkes processes include 
trend detection in social networks \cite{pinto2015trend}; 
detecting for Poisson processes using a score statistic \cite{solo2012test}. 

  
  ({\it ii}) We present an efficient expectation-maximization (EM) like algorithm for updating the likelihood-ratio detection statistic online. The algorithm can be implemented in a {\it distributed} fashion due to is structure: only neighboring nodes need to exchange information for the E-step and M-step. 

  ({\it iii}) We also present accurate theoretical approximation to the false-alarm-rate (formally the average-run-length or ARL) of the detection algorithm, via the recently developed change-of-measure approach to handle highly correlated statistics. Our theoretical approximation can be used to determine the threshold in the algorithm accurately. 

  ({\it iv}) Finally, we demonstrate the performance gain of our algorithm over two baseline algorithms (which ignore the temporal correlation and correlation between nodes), using synthetic experiments and real-world data. These two baseline algorithms representing the current approaches for processing event stream data. We also show that our  algorithm is very sensitive to true changes, and the theoretical false-alarm-rates are very accurate compared to the experimental results. 
  

{\bf Related work.} 
Recently, there has been a surge of interests in using multidimensional point processes for modeling dynamic event data over networks. However, most of these works focus on modeling and inference of the point processes over networks. 
Related works include modeling and learning bursty dynamics~\cite{myers2014bursty}; 
 shaping social activity by incentivization~\cite{FarDuGomValZhaSon14};
 learning information diffusion networks~\cite{rodriguez2011uncovering};
inferring causality~\cite{xu2016learning};
 learning mutually exciting processes for viral diffusion \cite{yang2013mixture}; learning triggering kernels for multi-dimensional Hawkes processes
\cite{zhou2013learning}; 
in networks where each dimension is a Poisson process \cite{rajaram2005poisson}; 
learning latent network structure for general counting processes \cite{linderman2014discovering}; tracking parameters of dynamic point process networks \cite{hall2014tracking}; 
and estimating point process models for the co-evolution of network structure an information diffusion~\cite{FarWanGomLiZhaSon15}, just to name a few.
These existing works provide a wealth of tools through which we can, to some extent, keep track of the network dynamics if the model parameters can be sequentially updated. However, only given the values of the up-to-date model parameters, especially in high dimensional networks, it is still not clear how to perform change detection based on these models in a principled fashion. 
%

Classical statistical sequential analysis (see, e.g., \cite{detectAbruptChange93,Tartakovsky2014}),  where one monitors {\it i.i.d.} univariate and low-dimensional multivariate observations observations from a single data stream is a well-developed area. Outstanding contributions include Shewhart's control chart \cite{Shewhart31}, the minimax approach Page's CUSUM procedure \cite{Page1954,Page1955}, the Bayesian approach 
Shiryaev-Roberts procedure \cite{Shiryaev1963,Roberts66}, and window-limited procedures \cite{lai1995sequential}. 
However, there is limited research in monitoring large-scale data streams over a network, or even event streams over networks. Detection of change-points in point processes has so far mostly focused on the simple Poisson process models without considering temporal dependency, and most of the detection statistics are computed in a discrete-time fashion, that is, one needs to aggregate the observed events into bins and then apply the traditional detection approaches to time-series of count data. Examples include \cite{ihler2006adaptive,mei2011early,levy2009detection} .


The notations are standard. The remaining sections are organized as follows. Section \ref{sec:model} presents the point process model and derives the likelihood functions. Section \ref{sec:change-det} presents our sequential likelihood ratio procedure. Section \ref{sec:alg} presents the EM-like algorithm. Section \ref{sec:theory} presents our theoretical approximation to false-alarm-rate. Section \ref{sec:numerical_eg} contains the numerical examples. Section \ref{sec:numerical_eg} presents our results for real-data. Finally, Section \ref{sec:summary} summarizes the paper. All proofs are delegated to the Appendix.

\section{Model and Formulation}\label{sec:model}

Consider a sequence of events over a network with $d$ nodes, represented as a double sequence
\begin{equation}
(t_1, u_1), (t_2, u_2), \dots, (t_n, u_n), \ldots
\label{event_stream}
\end{equation}
 where $t_i \in \mathbb{R}^{+}$ denotes the real-valued time when the $i$th event happens, and $i \in \mathbb{Z}^{+}$ and $u_{i} \in \{1,2, \dots, d\}$ indicating the node index where the event happens. We use temporal point processes \cite{daley2007introduction} to model the discrete event streams, since they provide convenient tool in directly modeling the time intervals between events, and avoid the need of picking a time window to aggregate events and allow temporal events to be modeled in a fine grained fashion. 

\subsection{Temporal point processes}

A temporal point process is a random process whose realization consists of a list of discrete events localized in time, $\{t_i\}$, with $t_i \in \mathbb{R}^{+}$ and $i \in \mathbb{Z}^{+}$.   We start by considering one-dimensional point processes. Let the list of times of events up to but not including time $t$ be the history
\[\mathcal{H}_{t} = \{t_1, \ldots, t_n: t_n< t\}.\] Let  $N_t$ represent the total number of events till time $t$.  
Then the counting measure can be defined as
\begin{equation}
dN_t= \sum_{t_i \in \mathcal{H}_{t}} \delta(t-t_i) dt,
\label{cnt_measure}
\end{equation} 
where $\delta(t)$ is the Dirac function.

To define the likelihood ratio for point processes, we first introduce the notion of {\it conditional intensity function} \cite{temporalPointProcess_2011}. The conditional intensity function is a convenient and intuitive way of specifying how the present depends on the past in a temporal point process. Let $F^*(t) $ be the conditional probability that the next event $t_{n+1}$ happens before $t$ given the history of previous events
\[F^*(t) = \mathbb{P}\{t_{n+t}<t|\mathcal{H}_t\},\] and let $f^*(t)$ be the corresponding conditional density function. 
 The conditional intensity function (or the hazard function) \cite{temporalPointProcess_2011} is defined by 
\begin{equation}
\lambda_t = \frac{f^*(t)}{1-F^*(t)}, \label{def_lambda}
\end{equation}
and it can be interpreted as the probability that an event occurs in an infinitesimal interval  
\begin{equation}
\lambda_t dt = \mathbb{P}\{\mbox{event in } [t, t+ dt) | \mathcal{H}_t\}.
\end{equation}
This general model includes Poisson process and Hawkes process as special cases.

 ({\it i}) For (inhomogeneous) Poisson processes, each event is stochastically independent to all the other events in the process, and the time intervals between consecutive events are independent with each other and are exponentially distributed. As a result, the conditional intensity function is independent of the past, which is simply deterministic
$
\lambda_t = \mu_t.
$

  \vspace{0.05in}
 ({\it ii}) For one dimensional Hawkes processes, the intensity function is history dependent and models a mutual excitation between events
\begin{equation} 
\lambda_t= \mu_t+\alpha  \int_{0}^t  \varphi (t-\tau) dN_{\tau}, \label{intensity1d}
\end{equation}
where $\mu_t$ is the base intensity (deterministic), $\alpha \in (0,1)$ (due to the requirement of stationary condition) is the influence parameter, and $\varphi(t)$ is a normalized kernel function $\int \varphi(t)dt=1$. Together, they characterize how the history influences the current intensity. Fixing the kernel function, a higher value of $\alpha$ means a stronger temporal dependency between events.  A commonly used kernel function is the exponential kernel $\varphi (t)= \beta e^{-\beta t}$, which we will use through the paper.  

\vspace{0.05in}
 ({\it iii}) The multi-dimensional Hawkes process is defined similarly, with each dimension being a one-dimensional counting process. It can be used to model the sequence of events over network such as (\ref{event_stream}). We may convert a multi-dimensional process into a double sequence, using the first coordinate to represent time of the event, and the second coordinate to represent the index of the corresponding node. 
 
Define a multivariate counting process $(N^1_t, N^2_t, \dots, N^d_t)$, $t \geqslant 0$, with each component $N^i_t$ recording the number of events of the $i$-th component (node) of the network during $[0,t]$. 
The intensity function is 
\begin{equation}
\lambda_t^i=\mu^i_t+ \sum_{j=1}^d \int_{0}^{t}\alpha_{ij} \varphi (t-\tau) dN^j_{\tau},
\label{m-hawkes}
\end{equation}
where $\alpha_{ij}, \, j,i \in \{1,\dots,d\}$ represents the strength of influence of the $j$-th  node on the $i$-th node by affecting its intensity process $\lambda^i$. If $\alpha_{ij}=0$, then it means that $N^j$ is not influencing $N^i$. Written in matrix form, the intensity can be expressed as
\begin{equation} 
\bm{ \lambda}_t=\bm{\mu}_t +\bm{A} \int_{0}^{t} \varphi(t-\tau)  d\bm{N}_{\tau}, \end{equation} 
where 
\[\bm{\mu}_t=[\mu^1_t, \mu^2_t, \dots, \mu^d_t]\transpose, d\bm{N}_\tau=[dN_\tau^1, dN_\tau^2, \dots, dN_\tau^d]\transpose,\] and $\bm{A}=[\alpha_{ij}]_{1\leqslant i, j \leqslant d}$ is the {\it influence matrix}, which is our main quantity-of-interest when detect a change. The diagonal entries 
characterize the self-excitation and the off-diagonal entries capture the mutual-excitation among nodes in the network. The influence matrix can be asymmetric since influence can be bidirectional. 

\begin{figure}[t!]
\centering
\begin{tabular}{cc}
\includegraphics[width=0.45 \linewidth]{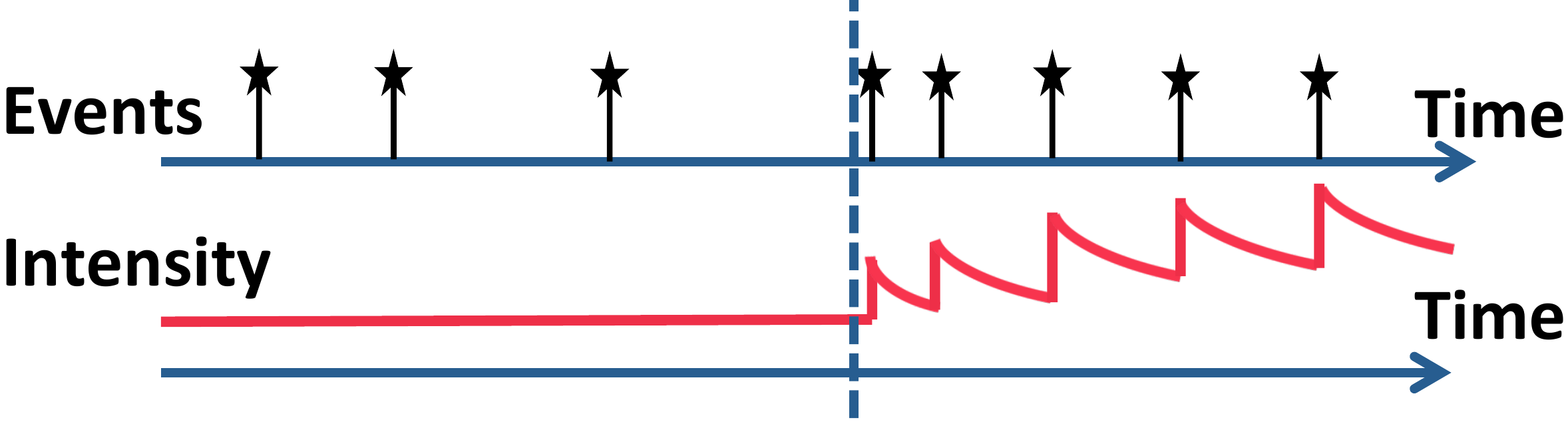} 
& \includegraphics[width=0.45\linewidth]{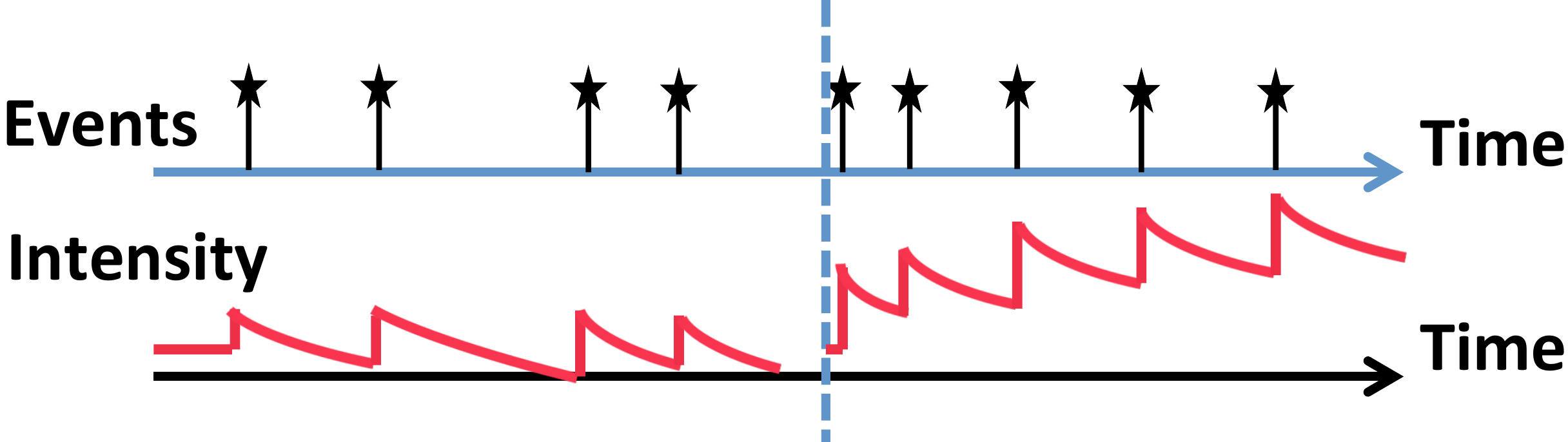} \\
(a) {\small Poisson to Hawkes} & 
(b) {\small Hawkes to Hawkes}
\end{tabular}
\caption{Illustration of scenarios for one-dimensional examples: (a) Poisson to Hawkes; (b) Hawkes to Hawkes.}
\label{fig:illustration}
\vspace{-2mm}
\end{figure}

\subsection{Likelihood function}\label{section:likelihood} 
In the following, we will explicitly denote the dependence of the likelihood function on the parameters in each setting. The following three cases are useful for our subsequent derivations. Let $f(t)$ denote the probability density function. For the one-dimensional setting, given a sequence of $n$ events (event times) $\{t_1, t_2, \ldots, t_n\}$ before time $t$. Using the conditional probability formula, we obtain
\begin{equation}
\begin{split}
& \mathcal{L} = f(t_1, \ldots, t_n)  = (1-F^*(t))\prod_{i=1}^n f(t_i|t_1, \ldots, t_{i-1}) \\
& = (1-F^*(t))\prod_{i=1}^n f^*(t_i) = \left(  \prod_{i=1}^n \lambda_{t_i} \right) \mbox{exp} \left \{ - \int_{0}^t  \lambda_s ds  \right\}.
\end{split}
\label{gen_likelihood}
\end{equation}
The last equation is from the following argument. From the definition of the conditional density function, we have
\[
\lambda_t =  \frac{d}{dt} F^*(t) / (1-F^*(t)) = - \frac{d}{dt} \mbox{log} (1-F^*(t)).\]
Hence,  $\int_{t_n}^t \lambda_s ds = -\mbox{log} (1-F^*(t)) $, where $F^*(t_n)=0$, since event $n+1$ cannot happen at time $t_n$. Therefore,
\[
F^*(t)=1-\mbox{exp} \left\{  - \int_{t_n}^t \lambda_s ds \right\},  ~
f^*(t) = \lambda_t \mbox{exp} \left\{  - \int_{t_n}^t \lambda_s ds \right\}.
\]
The likelihood function for multi-dimensional Hawkes process can be derived similarly, by redefining $f^*(t)$ and $F^*(t)$ according to the intensity functions of the multi-dimensional processes. 

Based on the above principle, we can derive the following likelihood functions.

\subsubsection{Homogeneous Poisson process}
For homogeneous Poisson process, 
$
\lambda_t = \mu.
$
Given constant intensity, the log-likelihood function for a list of events $\{t_1, t_2, \ldots, t_n\}$ in the time interval $[0, t] $ can be written as
\begin{equation}
\mbox{log} \,\mathcal{L}(\mu)= n \mbox{log} \,\mu - \mu t.\label{likepoi}
\end{equation}

\subsubsection{One dimensional Hawkes process}
For one-dimensional Hawkes process with constant baseline intensity $\mu_t =\mu$ and exponential kernel, we may obtain its log-likelihood function based on the above calculation. By substituting the conditional intensity function  (\ref{intensity1d}) into  (\ref{gen_likelihood}), the log-likelihood function for events in the time interval $[0,t] $ is given by
\begin{equation}
\begin{split}
& \mbox{log} \,\mathcal{L} (\alpha, \beta, \mu) = \sum_{i=1}^n \mbox{log}\, \left( \mu + \alpha \sum_{t_j <t_i}\beta e^{-\beta(t_i-t_j)}    \right) \\
& ~~ - \mu t - \sum_{t_i < t}\alpha \left[ 1- e^{-\beta (t-t_i)} \right]. \label{likeHaw1}
 \end{split}
 \end{equation}
To obtain the above expression, we have used the following two simple results for exponential kernels, due to the property of counting measure defined in (\ref{cnt_measure}):
 \begin{equation}
 \lambda_t = \mu +\alpha \int_{-\infty}^{t}  \varphi (t-\tau) dN_{\tau} 
 = \mu + \alpha \sum_{t_i < t} \beta e^{-\beta(t-t_i)},
 \end{equation}
and 
\begin{equation}
\int_0^t \lambda_s ds 
= \mu t + \sum_{t_i < t}\alpha \left[ 1- e^{-\beta (t-t_i)} \right].
\end{equation}

\subsubsection{Multi-dimensional Hawkes process}
For multi-dimensional point process, we consider the event stream such as (\ref{event_stream}).
Assume base intensities are constants with $\mu_t^i \triangleq  \mu_i$. Using similar calculations as above, we obtain the log-likelihood function for events in the time interval $[0, t]$ as
\begin{equation}
\begin{split}
&\mbox{log}\,\mathcal{L}\,(\bm{A}, \beta, \bm{\mu} ) 
= \sum_{i=1}^n  \mbox{log} \left[  \mu_{u_i}+  \sum_{t_j< t_i}  \alpha_{u_i, u_j} \beta e^{-\beta (t_i-t_j)} \right] \\
&-\sum_{j=1}^{d} \mu_j t
 - \sum_{j=1}^{d} \sum_{t_i < t} \alpha_{u_i,j} \left[ 1- e^{-\beta(t-t_i)}\right]. 
 \end{split}
 \label{likeHawD}
\end{equation}

\section{Sequential change-point detection}\label{sec:change-det}

We are interested in detecting two types of changes sequentially from event streams, which capture two general scenarios in real applications (Fig. \ref{fig:illustration} illustrates these two scenarios for the one dimensional setting):
 ({\it i}) The sequence before change is a Poisson process and after the change is a Hawkes process. This can be useful for applications where we are interested in detecting an emergence of self- or mutual-excitation between nodes.  
 ({\it ii}) The sequence before change is a Hawkes process and after the change the magnitude of influence matrix increases. This can be a more realistic scenario, since often nodes in a network will influence each initially. This can be useful for applications where a triggering event changes the behavior or structure of the network. For instance, detecting emergence of a community in network \cite{influence_community}.

In the following, we cast the change-point detection problems as sequential hypothesis test \cite{Siegmund1985}, and derive generalized likelihood ratio (GLR) statistic for each case. Suppose there may exist an unknown change-point $\kappa$ such that after that time, the distribution of the point process changes. 

\subsection{Change from Poisson to Hawkes}

First, we are interested in detecting the events over network changing from $d$-dimensional independent Poisson processes to an intertwined multivariate Hawkes process. This models the effect that the change affects the spatial dependency structure over the network. Below, we first consider one-dimensional setting, and then generalize them to multi-dimensional case. 

\subsubsection{One-dimensional case}
The data consists of a sequence of events occurring at time $\{t_1, t_2, \ldots, t_n\}$. Under the hypothesis of no change (i.e. $\textsf{H}_0$), the event time is a one-dimensional Poisson process with intensity $\lambda$. Under the alternative hypothesis (i.e. $ \textsf{H}_1$), there exists a change-point $\kappa$. The sequence is a Poisson process with intensity $\lambda$ initially, and changes to a one-dimensional Hawkes process with parameter $\alpha$ after the change. Formally, the hypothesis test can be stated as
\begin{equation} \label{test_poi_haw_1d}
\left\{
\begin{array}{ll}
 \textsf{H}_0: &  \lambda_s =\mu, \quad 0 < s <t;   \\
 \textsf{H}_1: &  \lambda _s=\mu, \quad 0 < s <\kappa,\\
&   \lambda_s^*=\mu+\alpha  \int_{\kappa}^{s} \varphi(s-\tau)  dN_{\tau}, \quad \kappa < s<t. 
\end{array}
\right.
\end{equation}
Assume intensity $\mu$ can be estimated from reference data and $\beta$ is given as a priori. We treat the post-change influence parameter $\alpha$ as unknown parameter since it represents an anomaly. 

Using the likelihood functions derived in Section \ref{section:likelihood}, equations (\ref{likepoi}) and (\ref{likeHaw1}), for a hypothetical change-point location $\tau$, the log-likelihood ratio as a function of $\alpha$, $\beta$ and $\mu$, is given by
\begin{equation}
\begin{split}
&\ell_{t, \tau, \alpha} = \mbox{log} \,\mathcal{L} (\alpha, \beta, \mu)  - \mbox{log} \,\mathcal{L}(\mu)  \\
&=\sum_{t_i \in (\tau, t)}  \mbox{log} \left[ \mu +  \alpha \sum_{t_j \in (\tau, t_i)} \beta e^{-\beta (t_i-t_j)} \right]  \\
& \quad- \mu(t-\tau)  - \alpha \sum_{\tau_i \in (\tau, t)} \left[1 -e^{-\beta (t-t_i)}  \right].
\end{split}
\label{loglikelihood_poi}
\end{equation}
Note that log-likelihood ratio only depends on the events in the interval $(\tau, t)$ and $\alpha$. We maximize the statistic with respect to the unknown parameters $\alpha$ and $\tau$ to obtain the log GLR statistic. Finally, the sequential change-point detection procedure is a stopping rule (related to the non-Bayesian minimax type of detection rule, see \cite{changepoint_new_book2014}):
\begin{equation}
T_{\rm one-dim} = \inf\{t: \max_{\tau<t} \max_\alpha ~\ell_{t, \tau, \alpha} > x\},
\label{one-d-T}
\end{equation} 
where $x$ is a pre-scribed threshold, whose choice will be discussed later. Even though there does not exist a closed-form expression for the estimator of $\alpha$, we can estimate $\alpha$ via an EM-like algorithm, which will be discussed in Section \ref{em}.

\begin{remark}[Offline detection] We can adapt the procedure for offline change-point detection by considering the fixed-sample hypothesis test. For instance, for the one-dimensional setting, given a sequence of $n$ events with $t_{\rm max} \triangleq t_n$, we may detect the existence of change when the detection statistic, $\max_{\tau<t_{\rm max}} \max_\alpha ~\ell_{t_{\rm max}, \tau, \alpha}$, exceeds a threshold. The change-point location can be estimated as $\tau^*$ that obtains the maximum. However, the algorithm consideration for online and offline detection are very different, as discussed in Section \ref{sec:alg}. 
\end{remark}



\subsubsection{Multi-dimensional case} 
For the multi-dimensional case, the event stream data can be represented as a double sequence defined in (\ref{event_stream}). We may construct a similar hypothesis test as above. Under the hypothesis of no change, the event times is multi-dimensional Poisson process with a vector intensity function $\bm{\lambda}_s=\bm{\mu}$. Under the alternative hypothesis, there exists a change-point $\kappa$. The sequence is a multi-dimensional Poisson process initially, and changes to a multi-dimensional Hawkes process with influence matrix $\bm{A}$ afterwards. We omit the formal statement of the hypothesis test as it is similar to (\ref{test_poi_haw_1d}). 


Again, using the likelihood functions derived in \ref{section:likelihood}, we obtain the likelihood ratio. 
The log-likelihood ratio for data up to time $t$, given a hypothetical change-point location $\tau$ and parameter $\bm{A}$, is given by
\begin{equation} \label{loglikelihood_ddpoi}
\begin{split}
&\ell_{t, \tau, \bm{A}}=\mbox{log} \,\mathcal{L} (\bm{A}, \beta, \mu)  - \mbox{log} \,\mathcal{L}(\mu) \\
&= \sum_{ t_i \in (\tau, t) } \mbox{log} \left[ 1+  \frac{ 1}{   \mu_{u_i} }  \sum_{ t_j \in (\tau, t_i)  }  \alpha_{u_i, u_j} \beta e^{-\beta (t_i-t_j)} 
\right] \\
& \quad - \sum_{j=1}^{d} \sum_{ t_i \in (\tau, t)} \alpha_{j,u_i}\left[ 1- e^{-\beta(t-t_i)}\right].  
\end{split}    
\end{equation} 
The sequential change-point detection procedure is a stopping rule:
\begin{equation}
T_{\rm multi-dim} = \inf\{t: \max_{\tau<t} \max_{\bm{A}} ~ \ell_{t, \tau, \bm{A}} > x\},
\label{mHawkT}
\end{equation} 
where $x$ is a pre-determined threshold. The multi-dimensional maximization can be computed efficiently via an EM algorithm described in Section \ref{em} . 

\begin{remark}[Topology of network]
The topology of the network has been embedded in the sparsity pattern of the influence matrix $A$, which are given as a priori. The dependency between different nodes in the network and the temporal dependence over events can be captured in updating (or tracking) the influence matrix $A$ with events stream. This can be achieved as an EM-like algorithm, which is resulted from solving a sequential optimization problem with warm start (i.e., we always initialize the parameters using the optimal solutions of the last step).
\end{remark}

\begin{figure}[h!]
  \centering
    \includegraphics[
      width=0.8 \linewidth]
      {}
    \vspace{-2mm}
    \caption{Illustration of the sliding window approach for online detection.}
    \label{model}    
 \vspace{-5mm}
\end{figure}  

\subsection{Changes from Hawkes to Hawkes} 

Next, consider the scenario where the process prior to change is a Hawkes process, and the change happens in the influence parameter $\alpha$ or the influence matrix $\bm{A}$. 

\subsubsection{One-dimensional case}
Under the hypothesis of no change, the event stream is a one-dimensional Hawkes process with parameter $\alpha$. Under the alternative hypothesis, there exists a change-point $\kappa$. The sequence is a Hawkes process with intensity $\alpha$, and after the change, the intensity changes to $\alpha^*$. 
Assume the parameter $\alpha$ prior to change is known. 

Using the likelihood functions derived in \ref{section:likelihood}, we obtain the log-likelihood ratio 
\begin{equation} \label{ratio_1dhaw}
\begin{split}
& \ell_{t, \tau, \alpha^*} =  \mbox{log} \,\mathcal{L} (\alpha^*, \beta, \mu)  - \mbox{log} \,\mathcal{L}(\mu) \\
&= \sum_{t_i \in (\tau, t)} \mbox{log} \left[    \frac{\mu + \alpha^* \sum_{t_j \in (\tau, t_i)} \beta e^{-\beta (t_i-t_j)}      }{\mu +\alpha \sum_{t_j \in (\tau, t_i)} \beta e^{-\beta (t_i-t_j)}   }  \right]   \\
& \quad -\left( \alpha^* - \alpha \right)  \sum_{t_i \in (\tau, t)}  \left[   1 -  e^{-\beta (t-t_i)} \right],
\end{split}
\end{equation}
and the change-point detection is through a procedure in the form of (\ref{one-d-T}) by maximizing with respect to $\tau$ and $\alpha$. 

\vspace{0.05in}
\subsubsection{Multi-dimensional case}
For the multi-dimensional setting, we assume the change will alter the influence parameters of the multi-dimensional Hawkes process over network. This captures the effect that, after the change, the influence between nodes becomes different. Assume that under the hypothesis of no change, the event stream is a multi-dimensional Hawkes process with parameter $\bm{A}$. Alternatively, there exists a change-point $\kappa$. The sequence is a multi-dimensional Hawkes process with influence matrix $\bm{A}$ before the change, and after the change, the influence matrix becomes $\bm{A}^*$. Assume the influence matrix $\bm{A}$ prior to change  is known. 

Using the likelihood functions derived in \ref{section:likelihood}, the log-likelihood ratio at time $t$ for a hypothetical change-point location $\tau$ and post-change parameter value $\bm{A}^*$ is given by
\begin{equation} \label{ratio_ddhaw}
\begin{split}
& \ell_{t, \tau, \bm{A}^*} = \mbox{log} \,\mathcal{L} (\bm{A}^*, \beta, \mu)  - \mbox{log} \,\mathcal{L}(\mu) \\
& = \sum_{t_i \in (\tau, t)}  \mbox{log} \left[ \frac{  \mu_{u_i}+  \sum_{t_j \in (\tau,  t_i)}  \alpha_{u_i, u_j}^* \beta e^{-\beta (t_i-t_j)} }
{   \mu_{u_i}+  \sum_{t_j \in (\tau, t_i)}  \alpha_{u_i, u_j} \beta e^{-\beta (t_i-t_j)} }\right]  \\
& \quad - \sum_{j=1}^{d} \sum_{t_i \in (\tau, t)} \left(\alpha^*_{j,u_i} - \alpha_{j,u_i}     \right) \left[1- e^{-\beta(t-t_i)}\right],     
\end{split} 
\end{equation}
and the change-point detection is through a procedure in the form of (\ref{mHawkT}) by maximizing with respect to $\tau$ and $\bm{A}^*$.

\begin{algorithm}[h!]
  \caption{Online Detection Algorithm}
  \label{alg:onlinealg}
  \begin{algorithmic}[1]
    \REQUIRE   Data $\{(t_i, u_i)\}$.
                Scanning window length $L$;
             Update frequency $\gamma$ (per events);
             Initialization for parameters $\alpha$ (one-dimension) or $\bm{A}$ (multi-dimension);
             Pre-defined threshold: $x$;
             Estimation accuracy: $\epsilon$.
    \REPEAT 
     \IF{$\mod(i, \gamma) =0$}
     \STATE Initialize $\alpha^{(0)}=\hat{\alpha}$ or $\bm{A}^{(0)}=\hat{\bm{A}}$ \COMMENT{warm start}
        \REPEAT 
           \STATE Perform \{E-step\} and \{M-step\} from Section \ref{em} 
      \UNTIL{$\|  \alpha^{(k+1)} -\alpha^{(k)}  \|  < \epsilon $ or $\|  \bm{A}^{(k+1)} -\bm{A}^{(k)}  \|  < \epsilon $} 
      \STATE Let $\hat{\alpha}=\alpha^{(k+1)}$ and $\hat{\bm{A}}=\bm{A}^{(k+1)}$.
    \STATE  Use $\hat{\alpha}$ or $\hat{\bm{A}}$ to compute log likelihood using (\ref{loglikelihood_poi}), (\ref{loglikelihood_ddpoi}), (\ref{ratio_1dhaw}) or (\ref{ratio_ddhaw}).    
       \ENDIF
          \UNTIL{$ \ell_{t,\tau, \hat{\alpha}} >x\) or \( \ell_{t,\tau, \hat{\bm{ A}}}>x\)} 
         and announce a change. 
  \end{algorithmic}
  \label{alg}
\end{algorithm}

\section{Algorithm for computing likelihood online}\label{sec:alg}

In the online setting, we obtain new data continuously. Hence, in order to perform online detection, we need to update the likelihood efficiently to incorporate the new data. To reduce computational cost, update of the likelihood function can be computed recursively and the update algorithm should have low cost. To reduce memory requirement, the algorithm should only store the minimum amount of data necessary for detection rather than the complete history. These requirements make online detection drastically different from offline detection. Since in the offline setting, we can afford more computational complexity. 

\subsection{Sliding window procedure}

The basic idea of online detection procedure is illustrated in Fig.~\ref{model}. We adopt a {\it sliding window} approach to reduce computational complexity as well the memory requirement. When evaluating the likelihood function, instead of maximizing over possible change-point location $\tau < t$, we pick a window-size $L$ and set $\tau$ to be a fixed-value
\[\tau = t - L.\] This is equivalent to constantly testing whether a change-point occurs $L$ samples before. By fixing the window-size, we reduce the computational complexity, since we eliminate the maximization over the change-point location. This also reduces the memory requirement as  we only need to store events that fall into the sliding window. The drawback is that, by doing this, some statistical detection power is lost, since we do not use the most likely change-point location, and it may increase detection delay. When implementing the algorithm, we choose $L$ to achieve a good balance in these two aspect. We have to choose $L$ large enough so that there is enough events stored for us to make a consistent inference. In practice, a proper length of window relies on the nature of the data. If the data are noisy, usually a longer time window is needed to have a better estimation of the parameter and reduce the false alarm. 




\subsection{Parameter Free EM-like Algorithm} \label{em}

We consider one-dimensional point process to illustrate the derivation of the EM-like algorithm. It can be shown that the likelihood function (\ref{loglikelihood_poi}) is a concave function with respect to the parameter $\alpha$. One can use gradient descent to optimize this objective, where the algorithm will typically involves some additional tuning parameters such as the learning rate. Although there does not exist a closed-form estimator for influence parameter $\alpha$ or influence matrix $\bm{A}$, we develop an efficient EM algorithm to update the likelihood, exploiting the structure of the likelihood function \cite{simma2012modeling}. The overall algorithm is summarized in Algorithm~\ref{alg:onlinealg}. 

First, we obtain a concave lower bound of the likelihood function using Jensen's inequality. Consider all events fall into a sliding window $t_i \in (\tau, t)$ at time $t$. Introduce auxiliary variables $p_{ij}$ for all pair of events $(i, j)$ within the window and such that $t_j < t_i$. The variables are subject to the constraint
\begin{equation} 
\forall i, \sum_{t_j < t_i} p_{ij} = 1, \quad p_{ij} \geqslant 0. \label{constr}
\end{equation}
These $p_{ij}$ can be interpreted as the probability that $j$-th event influence the $i$-th event in the sequence. It can be shown that the likelihood function defined in (\ref{likeHaw1}) can be lower-bounded
\begin{align}
& \ell_{t, \tau, \alpha}  \geqslant  \sum_{t_i \in (\tau, t)} \left( p_{ii} \mbox{log}  (\mu) + \sum_{t_j \in (\tau, t_i)}  p_{ij} \mbox{log} \left[  \alpha \beta e^{-\beta (t_i-t_j)}   \right]  \right. \nonumber \\
&  \left.  -\sum_{t_j\in (\tau, t)} p_{ij} \mbox{log} p_{ij} \right) - \mu (t-\tau) - \alpha  \sum_{t_i \in (\tau, t)} \left[1 -e^{-\beta (t-t_i)}  \right]  ,\nonumber 
\end{align}
Note that the lower-bound is valid for every choice of $\{p_{ij}\}$ which satisfies (\ref{constr}).

To make the lower bound tight and ensure improvement in each iteration, we will maximize it with respect to $p_{ij}$ and obtain (\ref{estep_1d}) (assuming we have $\alpha^{(k)}$ from previous iteration or initialization). Once we have the tight lower bound, we will take gradient of this lower-bound with respect to $\alpha$. When updating from the $k$-th iteration to the $(k+1)$-th iteration, we obtain (\ref{mstep_1d})
\begin{align}
\hspace{-4mm}
p_{ij}^{(k)}&=\frac{\alpha^{(k)} \beta e^{-\beta (t_j-t_i)}}{\mu + \alpha^{(k)} \beta \sum_{t_m \in (\tau, t_j)} e^{-\beta (t_j-t_m)}} \quad \{\mbox{E-step}\}  \label{estep_1d}  
\\
\hspace{-4mm}
\alpha^{(k+1)}&=\frac{\sum_{i<j} p_{ij}^{(k)}}{\sum_{t_i \in (\tau, t)} [1-e^{-\beta (t-t_i)}]} 
\quad \{\mbox{M-step}\}
\label{mstep_1d}
\end{align}
where the superscript denotes the number of iterations. 
The algorithm iterates these two steps until the algorithm converges and obtains the estimated $\alpha$. In practice, we find that we only need 3 or 4 iterations to converge if using warm start. 

Similarly, online estimate for the influence matrix for multi-dimensional case  can be estimated by iterating the following two steps:
\begin{align*}
\hspace{-5mm}
p_{ij}^{(k)} &= \frac{\alpha^{(k)}_{u_i, u_j} \beta e^{-\beta (t_i-t_j)}}{\mu_{u_i} + \beta \sum_{t_m \in (\tau, t_i)} \alpha^{(k)}_{u_i, u_m}  e^{-\beta (t_i-t_m)}} \quad \{\mbox{E-step}\}  
\\
\hspace{-5mm}
\alpha^{(k+1)}_{u,v} &= \frac{ \sum_{i:\,u_i=u} \sum_{j <i:\,u_j=v} p_{ij}^{(k)}   }{ \sum_{j:\,t_j \in (\tau, t), u_j=v}  \left[1-e^{-\beta(t-t_j)} \right]}.  \quad \{\mbox{M-step}\} 
\end{align*}

The overall detection procedure is summarized in Fig.~\ref{model} and Algorithm~\ref{alg}. 

\begin{remark}[Computational complexity]
The key computation is to compute pairwise inter-event times for pairs of event $t_i - t_j$, $i<j$. It is related to the window size (since we have adopted a sliding window approach), the size of the network, and the number of EM steps. However, note that in the EM algorithm, we only need to compute the inter-event times for nodes that are connected by an edge, since the summation is weighted by $\alpha_{ij}$ and the term only counts if $\alpha_{ij}$ is non-zero. Hence, the updates only involve neighboring nodes and the complexity is proportional to the number of edges in the network. Since most social networks are sparse, the will significantly lower the complexity. We may reduce the number of EM iterations for each update, by  leveraging a warm-start for initializing the parameter values: since typically for two adjacent sliding window, the corresponding optimal parameter values should be very close to the previous one. 
\end{remark}


\begin{remark}[Distributed implementation]
Our EM-like algorithm in the network setting can be implemented in a distributed fashion. This has embedded in the form of the algorithm already.  Hence, the algorithm can be used for process large networks. In the E-step, when updating the $p_{ij}$, we need to evaluate a sum in the denominator, and this is the only place where different nodes need to exchange information, i.e., the event times happened at that node. Since we only need to sum over all events such that the corresponding $\alpha_{u_i, u_j}$ is non-zero, this means that each node only needs to consider the events happened at the neighboring nodes. Similarly, in the M-step, only neighboring nodes need to exchange their values of $p_{ij}$ and event times to update the influence parameter values. 
\end{remark}

\vspace{-0.1in}
\section{Theoretical threshold}\label{sec:theory}

A key step in implementing the detection algorithm is to set the threshold. The choice of threshold involves a trade-off between two standard performance metrics for sequential change-point detection: the false-alarm rate and how fast we can detect the change. Formally, these two performance metrics are: (i) the expected stopping time when there is no change-points, or named average run length (ARL); and (ii) the expected detection delay when there exists a change-point. 

Typically, a higher threshold $x$ results in a larger ARL (hence smaller false-alarm rate) but larger detection delay. A usual practice is to set the false-alarm-rate (or ARL) to a pre-determined value, and find the corresponding threshold $x$. The pre-determined ARL depends on how frequent we can tolerate false detection (once a month or once a year). 
Usually, the threshold is estimated via direct Monte Carlo by relating threshold to ARL assuming the data follow the null distribution. However, Monte Carlo is not only computationally expensive, in some practical problems, repeated experiments would be prohibitive. Therefore it is important to find a cheaper way to accurately estimate the threshold. 

We develop an analytical function which relates the threshold to ARL. That is, given a prescribed ARL, we can solve for the corresponding threshold $x$ analytically. We first characterize the property of the likelihood ratio statistic in the following lemma, which states that the mean and variance of the log-likelihood ratios both scale roughly linearly with the post-change time duration. This property of the likelihood ratio statistics is key to developing our main result. 
\begin{lemma}[Mean and variance of log-likelihood ratios]
When the number of post-change samples $(t-\tau)$ is large, the mean and variance of log-likelihood ratio for the single-dimensional and the multi-dimensional cases, denoted as $\ell_{t, \tau, \cdot}$, for our cases converges to simple linear form. Under the null hypothesis, 
$
\mathbb{E}[\ell_{t, \tau, \cdot}] \approx (t-\tau) I_0
$
and 
$
\mathbb{E}[\ell_{t, \tau, \cdot}] \approx (t-\tau) \sigma_0^2
$. Under the alternative hypothesis, 
$
\mathbb{E}[\ell_{t, \tau, \cdot}] \approx (t-\tau) I
$
and 
$
\mathbb{E}[\ell_{t, \tau, \cdot}] \approx (t-\tau) \sigma^2
$.
Above, $I$, $I_0$, $\sigma^2$, and $\sigma_0^2$ are defined in Table \ref{table1} for various settings we considered. \end{lemma}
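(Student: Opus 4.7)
I will give a unified argument that handles all four settings (one/multi-dimensional, Poisson$\to$Hawkes and Hawkes$\to$Hawkes) via the Doob--Meyer decomposition for point processes, relying only on the fact that a stable Hawkes process ($\alpha<1$, or spectral radius of $\bm{A}$ less than one) admits a stationary ergodic regime with exponentially decaying correlations.

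\medskip

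\textbf{Step 1: Stochastic-integral form.} Observe that in every case the log-likelihood ratio on $(\tau,t]$ can be written as
\begin{equation*}
\ell_{t,\tau,\cdot} \;=\; \sum_j \int_\tau^t H_j(s)\, dN^j_s \;-\; \sum_j \int_\tau^t G_j(s)\, ds,
\end{equation*}
where $H_j(s)=\log(\lambda^{(1),j}_s/\lambda^{(0),j}_s)$ and $G_j(s)=\lambda^{(1),j}_s-\lambda^{(0),j}_s$, and $\lambda^{(0)},\lambda^{(1)}$ are the pre-/post-change intensities. This is obtained directly from the log-likelihoods (\ref{likepoi})--(\ref{likeHawD}) by writing sums over events as integrals against the counting measure (\ref{cnt_measure}).

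\medskip

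\textbf{Step 2: Martingale decomposition.} Let $\mathbb{P}_*$ denote the true law (either $H_0$ or $H_1$) with intensity $\lambda^{(*)}_s$. Then $dM^j_s := dN^j_s - \lambda^{(*),j}_s\, ds$ defines a collection of mutually orthogonal square-integrable martingales with predictable quadratic variation $d\langle M^j\rangle_s = \lambda^{(*),j}_s\, ds$. Split
\begin{equation*}
\ell_{t,\tau,\cdot} \;=\; \underbrace{\sum_j\int_\tau^t H_j(s)\, dM^j_s}_{=:\,\mathcal M_{t,\tau}} \;+\; \underbrace{\sum_j\int_\tau^t\bigl[H_j(s)\lambda^{(*),j}_s - G_j(s)\bigr] ds}_{=:\,\mathcal D_{t,\tau}}.
\end{equation*}

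\medskip

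\textbf{Step 3: Linear growth of the mean.} Since $\mathcal M_{t,\tau}$ is a mean-zero martingale, $\mathbb E_*[\ell_{t,\tau,\cdot}]=\mathbb E_*[\mathcal D_{t,\tau}]$. Under stability the integrand $\Phi(s):=\sum_j[H_j(s)\lambda^{(*),j}_s-G_j(s)]$ is a measurable functional of the stationary ergodic intensity process, so $\mathbb E_*[\Phi(s)]$ is constant once we are past a burn-in of order $1/\beta$. Define $I_0$ (resp.\ $I$) to be this stationary mean under $H_0$ (resp.\ $H_1$); these match the entries in Table~\ref{table1} after substituting the closed-form stationary intensities (e.g.\ $\mu/(1-\alpha)$ in 1-D Hawkes, $(\bfI-\bfA)^{-1}\bm\mu$ in the multi-D case). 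Hence $\mathbb E_*[\ell_{t,\tau,\cdot}]=(t-\tau)\,I_*+O(1)$.

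\medskip

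\textbf{Step 4: Linear growth of the variance.} Write $\mathrm{Var}_*[\ell]=\mathrm{Var}_*[\mathcal M]+\mathrm{Var}_*[\mathcal D]+2\,\mathrm{Cov}_*[\mathcal M,\mathcal D]$. By the point-process isometry,
\begin{equation*}
\mathrm{Var}_*[\mathcal M_{t,\tau}] \;=\; \mathbb E_*\!\left[\sum_j\int_\tau^t H_j(s)^2\,\lambda^{(*),j}_s\,ds\right] \;=\; (t-\tau)\,\sigma^2_{\mathcal M,*} + O(1)
\end{equation*}
by the same ergodic argument as Step 3. For $\mathcal D$, both $\Phi(s)$ and its shifts are functionals of the Hawkes intensity; exponential-kernel Hawkes processes have exponentially decaying $\rho$-mixing coefficients (a standard consequence of the cluster/branching representation, valid for stable $\bm A$), so $\int\!\int\mathrm{Cov}(\Phi(s),\Phi(u))\,ds\,du = O(t-\tau)$ with rate $\sigma^2_{\mathcal D,*}:=\int_{-\infty}^{\infty}\mathrm{Cov}(\Phi(0),\Phi(u))\,du$. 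The cross-term is handled by Cauchy--Schwarz once one observes $\mathcal M$ has linear variance and $\mathcal D - \mathbb E\mathcal D$ does too; in fact for the Poisson null the intensity is deterministic, so $\Phi(s)\equiv$ const and the cross-term vanishes identically. Setting $\sigma^2_* := \sigma^2_{\mathcal M,*}+\sigma^2_{\mathcal D,*}+2\sigma^2_{\mathcal{MD},*}$ yields the claimed linear variance.

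\medskip

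\textbf{Main obstacle and closing.} The routine part is the mean computation (essentially KL divergences of Hawkes against Poisson/Hawkes in the stationary regime). The delicate part is Step 4: one must (a) justify that $\Phi(s)$ inherits exponential mixing from the Hawkes intensity process, which I would import from the cluster-process representation of Hawkes processes under the sub-criticality assumption, and (b) control the martingale/drift cross-covariance in the Hawkes$\to$Hawkes case where the pre-change intensity itself is random. Both follow from stability ($\alpha<1$, resp.\ $\|\bm A\|<1$) and the exponential kernel, so the lemma holds in each of the four settings after identifying $I_*,\sigma^2_*$ with the entries tabulated in Table~\ref{table1}.
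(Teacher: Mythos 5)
Your martingale route to the qualitative claim (linear growth of the mean and variance in $t-\tau$) is sound and arguably cleaner than what the paper does: the paper never introduces the compensated martingale $dM_s=dN_s-\lambda^{(*)}_s\,ds$, but instead replaces $dN_s$ by $\lambda^{(*)}_s\,ds$ inside the stochastic integrals and then works with the full second-order structure of the counting measure — its covariance density, including the diagonal atom $\mbox{diag}(\bar{\bm{\lambda}})\,\delta(\cdot)$, computed in closed form by Laplace transforms in the appendix. Your isometry term $\int H^2\lambda^{(*)}ds$ is exactly that diagonal atom, and your drift autocovariance corresponds to the continuous part of the paper's $c(\cdot)$, so the two decompositions are consistent with each other.

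The gap is in identifying the constants with Table~\ref{table1}, which is the part of the lemma that Theorem~\ref{thm:ARL} actually consumes. First, Step 3 asserts that the exact stationary mean of $\Phi(s)=\sum_j[H_j(s)\lambda^{(*),j}_s-G_j(s)]$ ``matches the entries in Table~\ref{table1} after substituting the closed-form stationary intensities.'' That is not true exactly: $H_j(s)$ is a nonlinear (logarithmic) functional of the random intensity, so $\mathbb{E}[\lambda_s\log\lambda_s]\neq\bar\lambda\log\bar\lambda$ — there is a Jensen gap. The tabulated values are obtained only after a mean-field/first-order Delta-method linearization around $\bar\lambda$; the paper states this assumption ($|\lambda^*_s-\bar\lambda^*|/\bar\lambda^*\ll 1$) explicitly, and your argument needs it too. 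Second, in Step 4 you bound the martingale--drift cross-covariance by Cauchy--Schwarz and fold it into an undetermined constant $\sigma^2_{\mathcal{MD},*}$, but to recover the tabulated $\sigma^2$ and $\sigma_0^2$ you must actually compute it: it does not vanish in the Hawkes cases, since the drift at time $u$ depends on $dM_s$ for $s<u$. The paper sidesteps this by linearizing $H_j$ to a constant and then invoking the explicit covariance density of $d\bm{N}$, whose diagonal and continuous contributions are evaluated in closed form. In short, your argument gives a more principled proof of \emph{linearity}, but the numerical constants — the actual content of Table~\ref{table1} — still require the mean-field and Delta-method computations carried out in Appendix~\ref{KL}.
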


Our main theoretical result is the following general theorem that can be applied for all hypothesis test we consider. Denote the probability and the expectation under the hypothesis of no change by $\mathbb{P}^\infty$ and $\mathbb{E}^\infty$, respectively. 

\begin{theorem} [ARL under the null distribution] \label{thm:ARL}
When $x\rightarrow \infty$ and $x/\sqrt{L} \rightarrow c'$ for some constant $c'$, the average run length (ARL) of the stopping time $T$ defined in (\ref{one-d-T}) for one-dimensional case, is given by
\begin{equation}
\mathbb{E}^{\infty} [T_{\rm one-dim}] = e^x \left[    \int_{\alpha}   \nu \left( \frac{2 \xi}{\eta^2}\right) \frac{\phi\left( \frac{LI-x}{\sqrt{L\sigma^2}} \right) }{\sqrt{L\sigma^2}} d\alpha    \right]^{-1}  \cdot (1 +o(1) ).
\label{ARL_expr}
\end{equation}
For multi-dimensional case, the same expression holds for $\mathbb{E}^{\infty} [T_{\rm multi-dim}]$ except that $\int_{\alpha}$ is replaced by $\int_{\bm{A}}$, which means taking integral with respect to all \it{nonzero entries} of the matrix
$
\int_{\bm{A} } = \int \cdots \int \int_{\{ \alpha_{ij}, \alpha_{ij} \neq 0\}}.$
Above, the special function 
\[ \nu(\mu) \approx \frac{(2/\mu) \left( \Phi(\mu/2) -0.5      \right)}{(\mu/2) \Phi(\mu/2)+\phi(\mu/2)}.\]
The specific expressions for $I$, $I_0$, $\sigma^2$, and $\sigma_0^2$ for various settings are summarized in Table \ref{table1}, and 
\begin{equation}
\xi = -(I_0 - I), \quad \eta^2 = \sigma_0^2 + \sigma^2. \label{def_xi_eta}
\end{equation}
Above,  $\Phi(x)$ and $\phi(x)$ are the cumulative distribution function (CDF) and the probability density function (PDF) of the standard normal, respectively.

\end{theorem}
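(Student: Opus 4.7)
The plan is to combine the change-of-measure technique with the Poisson-clumping heuristic that has become standard for analyzing the ARL of GLR-type detection procedures (following Siegmund, Siegmund--Venkatraman, and Yakir's random-field framework). The key relation is
$$\mathbb{E}^\infty[T] \;\approx\; \frac{1}{\pi},\qquad \pi \;\triangleq\; \mathbb{P}^\infty\bigl\{\max_{\alpha}\ell_{t,t-L,\alpha} > x\bigr\},$$
which is valid because, under the null, the super-threshold events form an approximate Poisson process on the time axis. The entire work reduces to producing a sharp asymptotic for $\pi$ as $x\to\infty$ with $x/\sqrt{L}\to c'$, and then reading off $\mathbb{E}^\infty[T]$.

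First I would rewrite $\pi$ via a change of measure. For each candidate post-change parameter $\alpha$ (resp.\ $\bm A$) let $\mathbb{P}_\alpha$ denote the law under which the process on $(\tau,t]$ is Hawkes with that parameter; then on the sliding window the Radon--Nikodym derivative is $d\mathbb{P}_\alpha / d\mathbb{P}^\infty = e^{\ell_{t,\tau,\alpha}}$. For a fixed parameter one gets the identity
$$\mathbb{P}^\infty\{\ell_{t,\tau,\alpha}>x\} = \mathbb{E}_\alpha\bigl[e^{-\ell_{t,\tau,\alpha}}\mathbf{1}\{\ell_{t,\tau,\alpha}>x\}\bigr].$$
To handle the maximum over $\alpha$ I would use a localization argument around the maximizer $\hat\alpha$, express the exceedance probability as an integral over the parameter of the tilted measure, and track the local geometry of the log-likelihood field $\alpha\mapsto\ell_{t,\tau,\alpha}$. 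Lemma 1 is exactly the input needed here: under $\mathbb{P}_\alpha$ the statistic $\ell_{t,\tau,\alpha}$ has mean $LI$ and variance $L\sigma^2$ (linear in the post-change duration), so a central-limit approximation produces the Gaussian density factor $\phi\!\bigl((LI-x)/\sqrt{L\sigma^2}\bigr)/\sqrt{L\sigma^2}$ appearing in the claimed formula.

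Next I would handle the overshoot. When the random-walk-like statistic $\ell_{t,\tau,\alpha}$ crosses the level $x$, the expected value of $e^{-(\ell - x)}$ over the overshoot contributes the renewal-theoretic correction captured by Siegmund's special function $\nu(\mu)$, evaluated at the appropriate ratio of drift to variance of the increments. A direct computation of the increment drift and variance of the log-likelihood ratio process, under $\mathbb{P}_\alpha$ versus $\mathbb{P}^\infty$, produces the arguments $\xi = I-I_0$ and $\eta^2=\sigma_0^2+\sigma^2$ displayed in \eqref{def_xi_eta}, so that the correction takes the form $\nu(2\xi/\eta^2)$. A Laplace-type integration over the parameter (running over the nonzero entries of $\alpha$ or $\bm A$, whose sparsity pattern is given by the known network topology) then converts the point-probability into the integral $\int_\alpha \nu(\cdot)\phi(\cdot)\,d\alpha$. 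Combining with $\mathbb{E}^\infty[T]\approx 1/\pi$ and the $e^{-x}$ factor coming from the exponential tilt yields \eqref{ARL_expr}. The multi-dimensional version requires no new idea: the same argument proceeds componentwise over the free entries of $\bm A$, replacing $\int_\alpha$ by $\int_{\bm A}$.

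The main obstacle, in my view, is the overshoot analysis, because $\ell_{t,\tau,\alpha}$ is a sum indexed by the random event times of a point process rather than a standard \iid\ random walk: one has to verify that, under both $\mathbb{P}^\infty$ and $\mathbb{P}_\alpha$, the jumps of the log-likelihood have increments whose nonlattice renewal behavior is governed by the Gaussian drift/variance of Lemma 1, so that the $\nu$-function correction is indeed the correct first-order overshoot factor. A secondary technical hurdle is justifying the Laplace integration over the parameter: the field $\ell_{t,\tau,\cdot}$ must be shown to be locally quadratic around its maximizer with the right Fisher-information-like Hessian, so that after integrating out $\alpha$ one recovers precisely the volume element $d\alpha$ without extra determinantal factors. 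Both steps can be controlled by arguments analogous to those in Siegmund--Yakir for exponential-family GLR tests, adapted to the Hawkes/Poisson conditional-intensity setting via the martingale form of the log-likelihood.
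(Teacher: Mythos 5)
Your proposal follows essentially the same route as the paper: the change-of-measure identity under the exponential tilt, Yakir-style localization separating a Gaussian ``global'' term (whose mean $LI$ and variance $L\sigma^2$ come from Lemma~1) from a ``local'' field whose drift $\xi=I-I_0$ and variance $\eta^2=\sigma_0^2+\sigma^2$ per unit time-shift yield the Siegmund correction $\nu(2\xi/\eta^2)$, and the final passage to $\mathbb{E}^\infty[T]\sim C^{-1}$ via the asymptotic exponentiality of $T$. The only imprecision is your opening relation $\mathbb{E}^\infty[T]\approx 1/\pi$ with $\pi$ a fixed-time marginal exceedance probability --- the correct reciprocal is the clump \emph{rate} per unit time (the factor $C$ multiplying $m$ in $\mathbb{P}[T<m]\approx mC$), which is exactly what your subsequent localization/overshoot analysis computes, so the substance is unaffected.
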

\begin{remark}[Evaluating integral] The multi-dimensional integral can be evaluated using Monte Carlo method \cite{MCCasella04}. We use this approach for our numerical examples as well.
\end{remark}

\begin{remark}[Interpretation]
The parameters $I_0$, $I$, $\sigma_0^2$ and $\sigma^2$ have the following interpretation
\begin{align}
 & I_0 = \mathbb{E}[\ell_{t, \tau, \alpha}]/L, \quad
 \sigma_0^2 = \mbox{Var}[\ell_{t, \tau, \alpha}]/L, \nonumber \\
& I =  {\mathbb{E}_{t, \tau, \alpha} [\ell_{t, \tau, \alpha}]}/{L}, \quad
 \sigma^2 =  {\mbox{Var}_{t, \tau, \alpha} [\ell_{t, \tau, \alpha}]}/{L}, 
\end{align}
which are the mean and the variance of the log-likelihood ratio under the null and the alternative distributions, {\it per unit time}, respectively. 
Moreover, $I$ can be interpreted roughly as the Kullback-Leibler information per time for each of the hypothesis test we consider.
\end{remark}

The proof of the Theorem \ref{thm:ARL} combines the recently developed change-of-measure techniques for sequential analysis, with properties the likelihood ratios for the point processes, mean field approximation for point processes, and Delta method \cite{StatsInference01}.

\begin{table*}[h!]
\caption{Expressions for $I$, $I_0$, $\sigma^2$ and $\sigma_0^2$ under different settings.}
\setlength{\tabcolsep}{0pt}
\newcommand{\tabincell}[2]{\begin{tabular}{@{}#1@{}}#2\end{tabular}}
\begin{center}
  \begin{tabular}{c|c|c|c|c}
    \hline \hline
 Setting &$I$&$I_0$&$\sigma^2$&$\sigma_0^2$  \\ \hline \hline
  \tabincell{l} {Poi. $\to$ Haw. \\ (one dim.) 
   }
 & \small{$ \frac{\mu}{1-\alpha} \mbox{log} \left( \frac{1}{1-\alpha}\right) -\frac{\alpha}{1-\alpha}\mu $ }&\small{$   \mu \mbox{log} \left( \frac{1}{1-\alpha}\right) -\frac{\alpha}{1-\alpha}\mu $}
 &\small{  \tabincell{l}{ $\left[ \mbox{log} \left( \frac{1}{1-\alpha} \right) \right]^2 \cdot $\\$
~\left[ \frac{\mu}{1-\alpha}+ \frac{\alpha (2-\alpha)\mu}{(1-\alpha)^3} \right]$} } 
&\small{ $ \mu \left[ \mbox{log}\left(  \frac{1}{1-\alpha} \right) \right]^2$ } \\ \hline 
\tabincell{l} {Poi. $\to$ Haw. \\ (high dim.)}
  &
\small{  \tabincell{l} {   $\bar{ \bm{ \lambda}} ^{*\intercal} \left( \mbox{log} (\bar{\bm{\lambda}} ^* )-   \mbox{log} (\bm{\mu}) \right) $ \\ $-\bm{e}^{\intercal}(\bar{\bm{\lambda}} ^*-\bm{\mu}) $    }  }
   & 
 \small{   \tabincell{l} { $ \bm{ \mu}^{\intercal} \left( \mbox{log} (\bar{\bm{\lambda}}^*)-   \mbox{log} (\bm{\mu}) \right)$ \\ $ -\bm{e}\transpose(\bar{ \bm{\lambda}}^*-\bm{\mu}) $  } } 
    &
  \small{ $\bm{e}^{\intercal} \left( \bm{H} \circ \bm{C}\right) \bm{e}$ }
  &
 \small{ $ \bm{ \mu}^{\intercal} \left( \mbox{log} (\bar{ \bm{\lambda}} ^*)-   \mbox{log} (\bm{\mu} )\right)^{(2)}$ }\\ \hline
  \tabincell{l} {Haw. $\to$ Haw. \\ (one dim.) 
  }   
   &
\small{    \tabincell{l}{  $ \frac{\mu}{1-\alpha^*} \mbox{log} \left( \frac{1-\alpha}{1-\alpha^*} \right) $\\
 $-\frac{\mu}{1-\alpha^*} +\frac{\mu}{1-\alpha}    $}   }
   &
   \small{ \tabincell{l}{ $ \frac{\mu}{1-\alpha} \mbox{log} \left(  \frac{1-\alpha}{1-\alpha^*} \right) $ \\
  $ - \frac{\mu}{1-\alpha^*} +\frac{\mu}{1-\alpha}$ } }
  &
\small{ \tabincell{l} { $  \left[ \mbox{log} \left( \frac{1-\alpha}{1-\alpha^*} \right) \right]^2  \cdot $\\
$~~\left[ \frac{\mu}{1-\alpha^*}+ \frac{\alpha^* (2-\alpha^*)\mu}{(1-\alpha^*)^3}    \right]  $ \\
$+\left(  1-\frac{1-\alpha}{1-\alpha^*}\right)^2 \cdot $\\
$\left[ \frac{\mu}{1-\alpha}+ \frac{\alpha(2-\alpha)\mu}{(1-\alpha)^3}    \right]  $ }}
  &
\small{  \tabincell{l} { $   \left[  1-\frac{1-\alpha^*}{1-\alpha}\right]^2  \cdot $ \\
$~~ \left[ \frac{\mu}{1-\alpha^*}+ \frac{\alpha^* (2-\alpha^*)\mu}{(1-\alpha^*)^3}    \right]  $\\
$ +\left[ \mbox{log} \left( \frac{1-\alpha}{1-\alpha^*} \right) \right]^2 \cdot$ \\
$\left[ \frac{\mu}{1-\alpha}+ \frac{\alpha(2-\alpha)\mu}{(1-\alpha)^3}    \right] $ }}\\  \hline
 \tabincell{l} {Haw. $\to$ Haw. \\ (multi dim.) 
}  
& 
\small{ \tabincell{l} {  $\bar{\bm{\lambda}}^{*\intercal} \left[ \mbox{log} \bar{\bm{\lambda}}^*-\mbox{log} \bar{\bm{\lambda}}  \right] $\\
$ -\bm{e}^{\intercal} [\bar{\bm{\lambda}}^* -\bar{\bm{\lambda}}]$} }
& 
\small{ \tabincell{l} { $\bar{ \bm{\lambda}}^{\intercal} \left[ \mbox{log}\bar{ \bm{\lambda}}^*-\mbox{log} \bar{\bm{\lambda}}  \right]  $  \\
$ -\bm{e}^{\intercal} [\bar{\bm{\lambda}}^* -\bar{\bm{\lambda}}]$ } }
& 
\small{ $\bm{e}^{\intercal} \left(  \bm{G} \circ \bm{C}^* +\bm{F} \circ \bm{C}\right) \bm{e} ~$}
& \small{ $\bm{e}^{\intercal} \left(  \bm{R} \circ \bm{C}^* + \bm{G} \circ \bm{C}\right) \bm{e}$} \\
    \hline\hline
  \end{tabular}
  \label{table1}
\end{center}
In the table above, $\bm{M}^{(2)}=\bm{M}\circ \bm{M}$ denote the Hadamard product, and related quantities are defined as
 \begin{align*}  
& \bar{ \bm{\lambda}}^* = (\bm{I}-\bm{A}^*)^{-1} \bm{\mu}, \quad
\bar{\bm{\lambda}} = (\bm{I}-\bm{A})^{-1} \bm{\mu}, \quad  
 \bm{H} = \left[  \mbox{log} \left(  (\bm{I}-\bm{A})^{-1} \bm{\mu}  \right) - \mbox{log} \left(  \bm{\mu} \right)  \right] \cdot \left[  \mbox{log} \left( (\bm{I}-\bm{A})^{-1} \bm{\mu}   \right) - \mbox{log} \left(  \bm{\mu} \right)  \right]^{\intercal} ,\nonumber \\
&\bm{C} =   (\bm{I}-\bm{A})^{-1}  \bm{A} \left( 2 \bm{I} +(\bm{I}-\bm{A})^{-1}
\bm{A}  \right)  \mbox{diag} \left(  (\bm{I}-\bm{A})^{-1} \bm{\mu}\right) + \mbox{diag} \left(  (\bm{I}-\bm{A})^{-1} \bm{\mu}\right), \nonumber  \\
& \bm{C}^* =  (\bm{I}-\bm{A}^*)^{-1}  \bm{A}^* \left( 2 \bm{I} +(\bm{I}-\bm{A}^*)^{-1}
\bm{A}^*  \right)  \cdot \mbox{diag} \left(  (\bm{I}-\bm{A}^*)^{-1} \bm{\mu}\right) + \mbox{diag} \left(  (\bm{I}-\bm{A}^*)^{-1} \bm{\mu}\right), \nonumber \\
& \bm{G}_{ij} =  [\mbox{log}  \left(\bar{\lambda}^*_i/\bar{\lambda}_i \right)] \cdot [\mbox{log} \left(\bar{\lambda}^*_j/\bar{\lambda}_j \right)], \quad
 \bm{F}_{ij} = \left(1-\bar{\lambda}^*_i/\bar{\lambda}_i \right) \left(1-\bar{\lambda}^*_j/\bar{\lambda}_j \right), \quad \bm{R}_{ij} = \left(\bar{\lambda}_i/\bar{\lambda}^*_i -1 \right)  \left(\bar{\lambda}_j/\bar{\lambda}^*_j -1 \right),  \quad 1 \leqslant i \leqslant j \leqslant d. \nonumber  
\end{align*} 
\vspace{-0.3in}
\end{table*}


\begin{figure}[]
\begin{center}
\includegraphics[width=0.7 \linewidth]{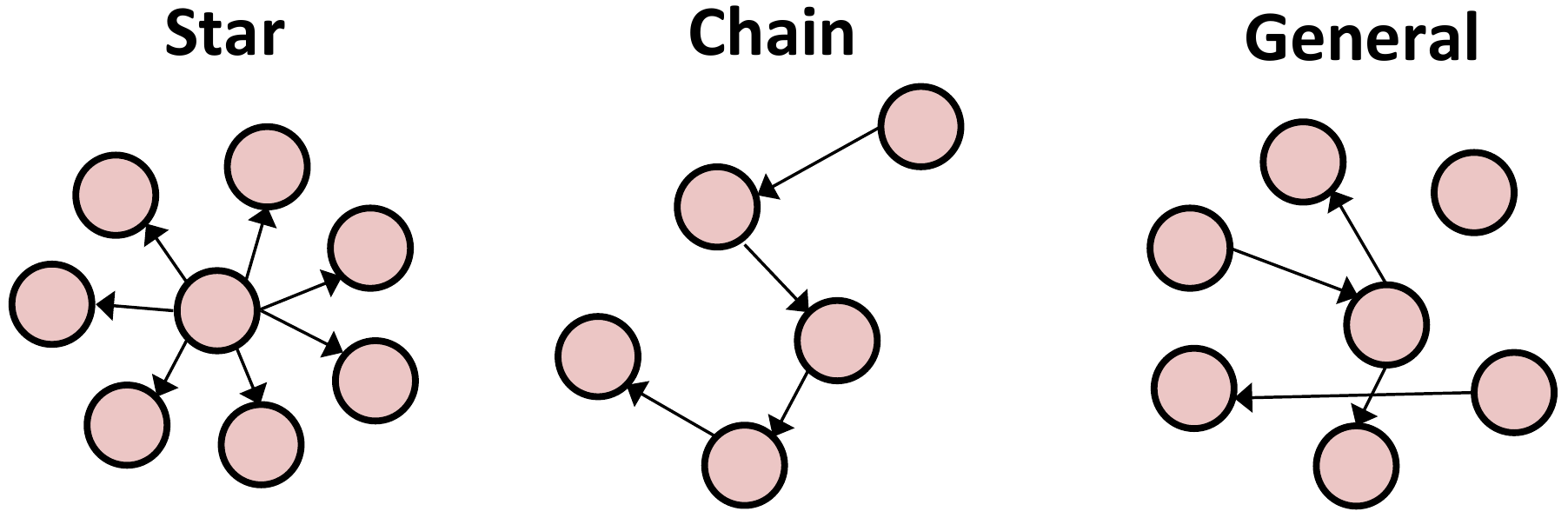} 
\end{center}
\vspace{-2mm}
\caption{Illustration of network topology.}
\label{fig:topology}
\vspace{-2mm}
\end{figure}

\section{Numerical examples}\label{sec:numerical_eg}

In this section, we present some numerical experiments using synthetic data. We focus on comparing EDD of our algorithm with two baseline methods, and demonstrate the accuracy of the analytic threshold. 

\subsection{Comparison of EDD}
\subsubsection{Two baseline algorithms}
We compare our method to two baseline algorithms: 

{\it (i)} \textbf{Baseline 1} is related to the commonly used ``data binning'' approach for processing discrete event data such as \cite{spikeTrain2003}. This approach, however,  ignores the temporal correlation and correlation between nodes. Here, we convert the event data into counts, by discretize time into uniform grid, and count the number of events happening in each interval. Such counting data can be modeled via Poisson distribution. We may derive a likelihood ratio statistic to detect  a change. Suppose $n_1, n_2, \dots, n_c$ are the sequence of counting numbers following Poisson distribution with intensity $\lambda_i$, $i = 1, 2, \dots, c$ is the index of the discrete time step. Assume under the null hypothesis, the intensity function is $\lambda_i = \mu$. Alternatively, there may exist a change-point $\kappa$ such that before the change, $\lambda_i=\mu$, and after the change, $\lambda_i= \mu^*$. It can be shown that 
the log-likelihood ratio statistic as 
\begin{equation*}
\ell_{c, k, \mu^*} = -(c-k) (\mu^*-\mu) +\sum_{i=k+1}^c n_i \mbox{log} \frac{\mu^*}{\mu}. \label{1d_baseratio}
\end{equation*}
We detect a change whenever $\max_{k<c} \max_{\mu^*} \ell_{k,c,\mu^*} > x$ for a pre-determined threshold $x$. 
Assume every dimension follows an independent Poisson process, then the log-likelihood ratio for the multi-dimensional case is just a summation of the log-likelihood ratio for each dimension. Suppose the total dimension is $d$, then
$$
\ell_{k,c, \bm{\mu}^*} =\sum_{j=1}^d \left[ -(c-k) (\mu_j^*-\mu_j) +\sum_{i=k+1}^c n_i^j \mbox{log} \frac{\mu_j^*}{\mu_j} \right].
$$
We detect a change whenever $\max_{k<c} \max_{\bm{\mu}^*} \ell_{k,c,\bm{\mu}^*} > x$.

{\it (ii)} \textbf{Baseline 2} method calculates the one-dimensional change-point detection statistic at each node separately as (\ref{loglikelihood_poi}) and (\ref{ratio_1dhaw}), and then combine the statistics by summation into a {\it global statistic} to perform detection. This approach, however, ignores the correlation between nodes, and can also be viewed as a {\it centralized} approach for change-point detection and it is related to multi-chart change-point detection \cite{changepoint_new_book2014}. 
\subsubsection{Set-up of synthetic experiments}

We consider the following scenarios and compare the EDD of our method to two baseline methods. EDD is defined as the average time (delay) it takes before we can detect the change, and can be understood as the power of the test statistic in the sequential setting. The thresholds of all the three methods are calibrated so that the ARL under the null model is $10^4$ unit time and the corresponding thresholds are obtained via direct Monte Carlo for a fair comparison. The sliding window is set to be $L=10$ unit time.  The exponential kernel $ \varphi (t)= \beta e^{-\beta t}$ is used and $\beta = 1$. The scenarios we considered are described below. The illustrations of the \textit{Case 1} and \textit{Case 2} scenarios are displayed in Fig. \ref{fig:illustration}. The network topology for  \textit{Case 3} to \textit{Case 7} are demonstrated in Fig. \ref{fig:topology}.

\textit{Case 1}. Consider a situation when the events first follow a one-dimensional Poisson process with intensity $\mu = 10$  and then shift to a Hawkes process with influence parameter $ \alpha = 0.5$. This scenario describes the emergence of temporal dependency in the event data. 

\textit{Case 2}. The process shifts from a one-dimensional Hawkes process with parameter $\mu=10$, $\alpha=0.3$ to another Hawkes process with a larger influence parameter $\alpha=0.5$. The scenario represents the change of the temporal dependency in the event data. 

\textit{Case 3}. Consider a star network scenario with one parent and nine children, which is commonly used in modeling how the information broadcasting over the network. 
Before the change-point, each note has a base intensity $\mu = 1$  and the self-excitation $\alpha_{i,i}=0.3$, $ 1 \leq i \leq 10$. The mutual-excitation from the parent to each child is set to be $\alpha_{1,j}=0.3$, $2 \leq j \leq 10$ (if we use the first node to represent the parent). After the change-point, all the self- and mutual- excitation increase to 0.5. 

\textit{Case 4}. The network topology is the same as Case 3. But we consider a more challenging scenario. Before the change, parameters are set to be the same as Case 3. After the change, the self-excitation $\alpha_{i,i}$, $1\leq i \leq 10$ deteriorate to $0.01$, and the influence from the parent to the children increase to $\alpha_{1,j}=0.6$, $j= 2 \leq j \leq10 $. In this case, for each note, the occurring frequency of events would be almost the same before and after the change-points. But the influence structure embedded in the network has actually changed.   

\textit{Case 5}. Consider a network with a chain of ten nodes, which is commonly used to model information propagation over the network. Before the change, each note has a base intensity $\mu = 1$  and the self-excitation $\alpha_{i,i}=0.3$, $1 \leq i \leq10$ and mutual-excitation $\alpha_{i,j}=0.3$, where $j-i=1, 1 \leq  i \leq 9$. After the change-point, all the self- and mutual-excitation parameters increase to 0.5.

\textit{Case 6}. Consider a {\it sparse} network with an arbitrary topology and one hundred nodes. Each note has a base intensity $\mu = 0.1$  and the self-excitation $\alpha_{i,i}=0.3$, $1 \leq i\leq100$. We randomly select twenty directed edges over the network and set the mutual-excitation to be $\alpha_{i,j}=0.3$, where $ i \neq j$, $i, j \text{ are randomly selected}$. After the change-point, all the self- and mutual-excitation increase to 0.5.

\textit{Case 7}. The {\it sparse} network topology and the pre-change parameters are the same with Case 6. The only difference is that after the change-point, only half of the self- and mutual-excitation parameters increase to 0.5.

\subsubsection{EDD results and discussions}
For the above scenarios, we compare the EDD of our method and two baseline algorithms. The results are shown in Table \ref{EDD}. We see our method compares favorably to the two baseline algorithms. In the first five cases, our method has a significant performance gain. Especially for Case 4, which is a challenging setting, only our method succeeds in detecting the spatial structure changes. For Case 6 and Case 7, our method has similar performance as Baseline 2. A possible reason is that in these cases the network topology is a sparse graph so the nodes are ``loosely'' correlated. Hence, the advantage of combining over graph is not significant in these cases.

Moreover, we observe that Baseline 1 algorithm is not stable. In certain cases (Case 6 and Case 7), it completely fails to detect the change. An explanation is that there is a chance that the number of events fall into a given time bin is extremely small or close to zero, and this causes numerical issues when calculating the the likelihood function (since there is a log function of the number of events). On the other hand, our proposed log-likelihood ratio is event-triggered, and hence will avoid such numerical issues.

\begin{table}[H]
\centering
\caption{EDD comparison. Thresholds for all methods are calibrated such that $ARL = 10^4$.}
\label{EDD}
\begin{tabular}{|c|c|c|c|}
\hline
       & \textbf{Baseline 1}      & \textbf{Baseline 2}      & \textbf{Our Method} \\
       \hline \hline
\textit{Case 1} & 22.1           &    $-$            & \textbf{4.8}       \\
   \hline 
\textit{Case 2} & 19.6           &    $-$         & \textbf{18.8}      \\
   \hline 
\textit{Case 3} & 8.2            & 6.9            & \textbf{4.3}       \\
   \hline 
\textit{Case 4} & $\times$  & $\times$ & \textbf{19.8}      \\
   \hline 
\textit{Case 5} & 6.1            & 5.7           & \textbf{4.7}       \\
   \hline 
\textit{Case 6} & $\times$             & \textbf{10.5}           & 10.8      \\
   \hline 
\textit{Case 7} & $\times$  & \textbf{32.5}           & \textbf{32.5}     \\
\hline
\end{tabular}
\vspace{1ex}
 
\raggedright{\textit{Note: `$\times$' means the corresponding method fails to detect the changes; `$-$' means in one-dimensional case Baseline 2 is identical to ours.}}
\vspace{-0.1in}
\end{table}


\subsection{Sensitivity analysis}
We also perform the sensitivity analysis by comparing our method to Baseline 1 algorithm via numerical simulation. The comparison is conducted under various kernel decay parameter $\beta$, and the strength of the post-change signals, which can be controlled by the magnitudes of the changes in $\alpha$ (or $\bm{A}$). For each dataset, we created 500 samples of sequences with half of them containing one true change-point and half of them containing no change-point. We then plot the {\it area under the curve} (AUC)  (defined as the true positive rate versus the false positive rate under various threshold) for comparison, as shown in Fig.~\ref{AUC}. 

\subsubsection{Set-up of synthetic experiments}
Overall, we consider various decay parameter $\beta$ and the magnitudes of the changes in $\alpha$ to compare the approaches. 

\textit{One-dimensional setting.} First, consider that before the change the data is a Poisson process with base intensity $\mu = 1$. For A.1-A.4, the post-change data become one dimensional Hawkes process: for A.1--A.3, $\alpha=0.2$, and $\beta = 1, 10, 100$, respectively; for A.4, $\alpha=0.3$, and $\beta = 10$. By comparing the AUC curves, we see that, our method has a remarkably better performance in distinguishing the true positive changes from the false positive changes compared to the baseline method. The superiority would become more evident under larger $\beta$ and bigger magnitudes of shifts in $\alpha$. For weak changes, the baseline approach is just slightly better than the random guess, whereas our approach consistently performs well. 
Similar results can be found if the pre-change data follow the Hawkes process. For example, in  B.1-B.3, the pre-change data follow Hawkes process with $\mu=1$, $\alpha=0.3$, and $\beta=1$, and the post-change parameters shift to a Hawkes process with $\alpha=0.5$, and $\beta=1, 10, 100$, respectively. We can see the similar trend as before by varying $\beta$ and $\alpha$. 

\textit{Network setting.} We first consider the two-dimensional examples in the following and get the same results. For C.1-C.2, the pre-change data follow two dimensional Poisson processes with $\bm{\mu} = [0.2,0.2]^{\intercal}$, and the post-change data follow two dimensional Hawkes processes with influence parameter $\bm{A}=[0.1,0.1;0.1,0.1]$, with $\beta = 1, 10$, respectively. For D.1--D.3, consider the star network with one parent and nine children. Before the change-point, for each node the base intensity is $\mu=0.1$, $\beta=1$, and the influence from the parent to each child is $\alpha=0.3$. After the change, $\alpha$ changes to 0.4 for D.1, and $\alpha$ changes to 0.5,  $\beta=1,10$ respectively for D.2 and D.3. 
\begin{figure}[]
        \begin{center}
        \begin{tabular}{ccc}
                \includegraphics[
                width = 0.3\linewidth
                            ]{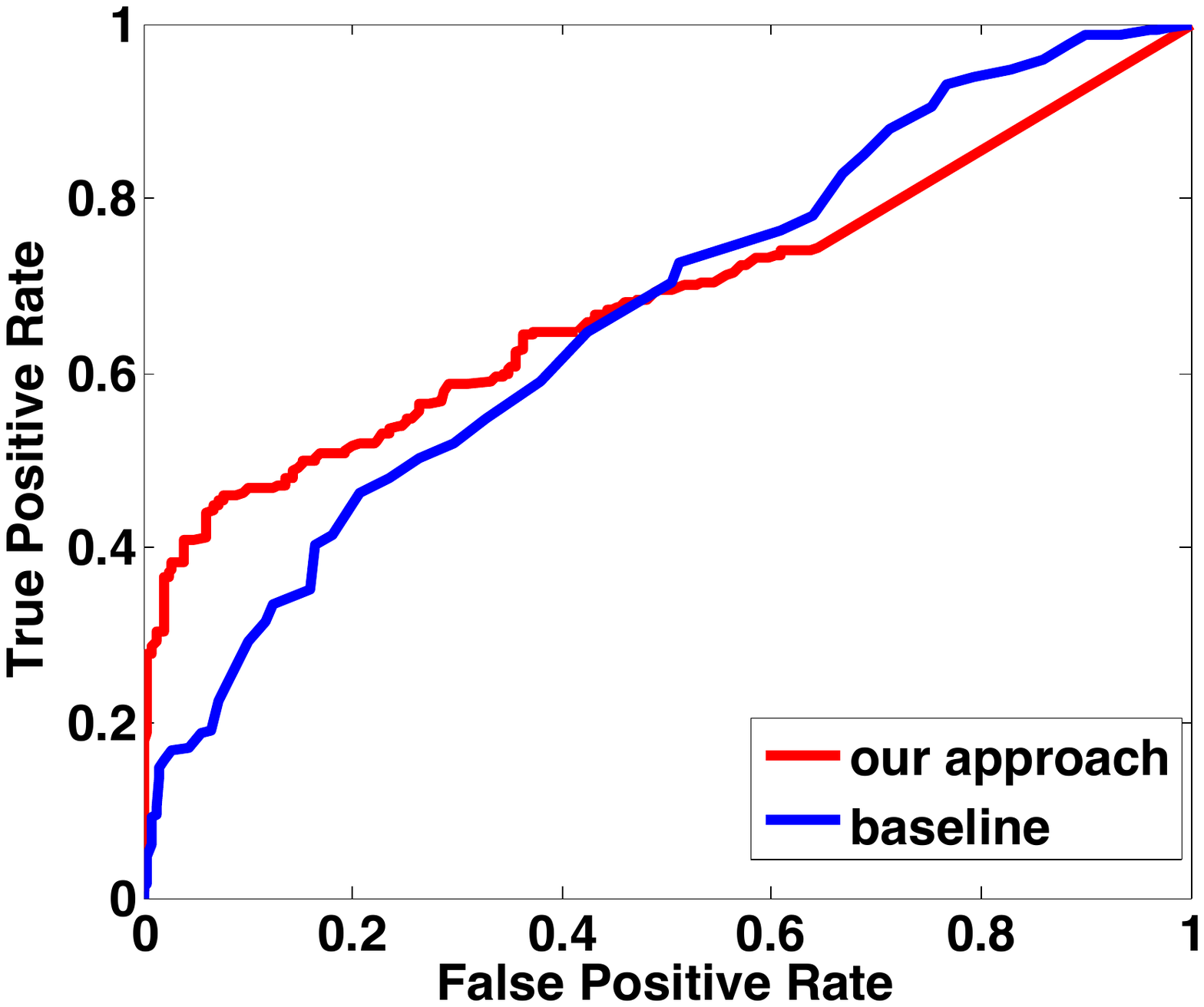} 
                 &
                    \includegraphics[
                    width = 0.28\linewidth
                    ]   {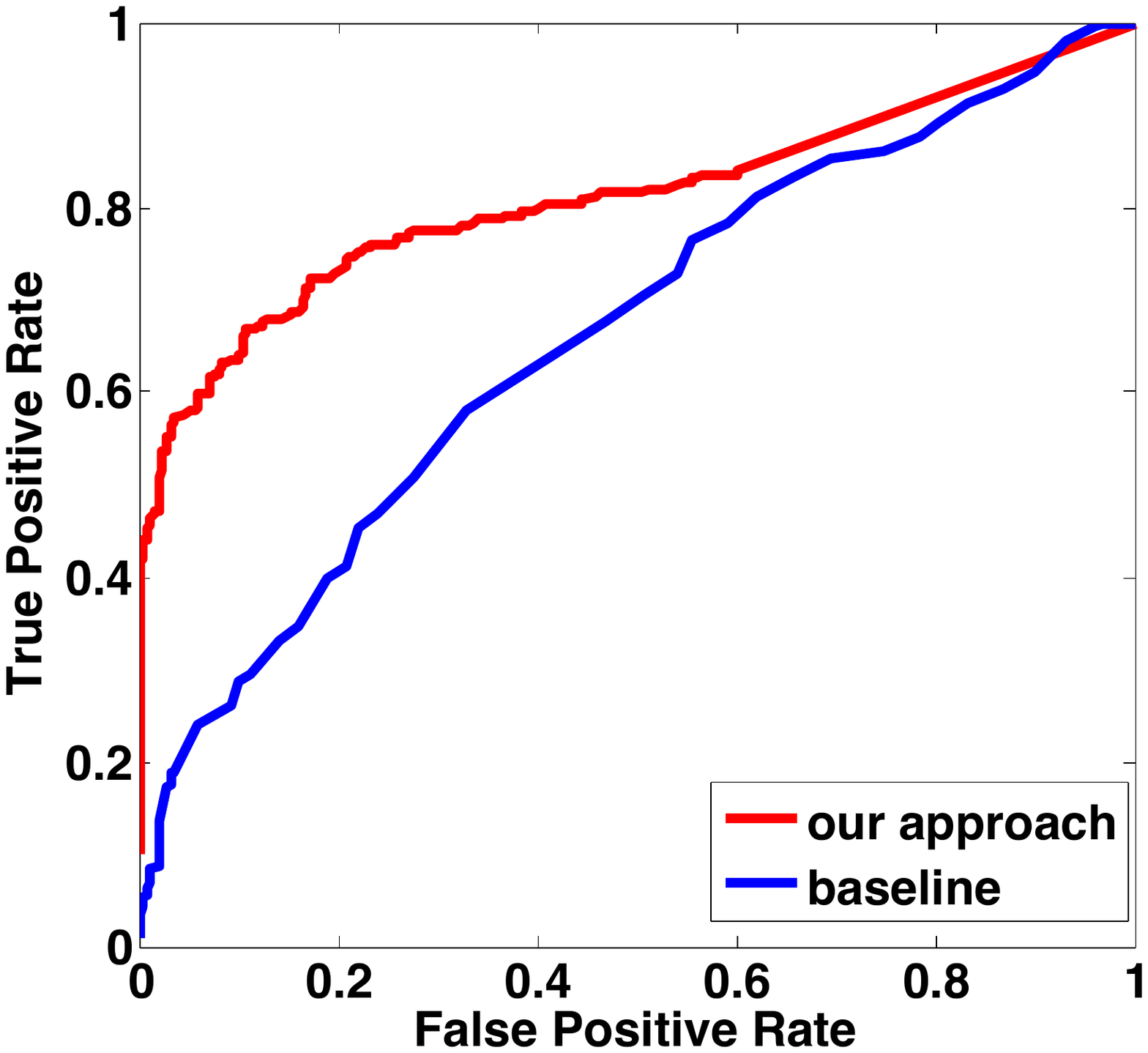}
                  &
                    \includegraphics[
                    width = 0.3\linewidth
                    ]  {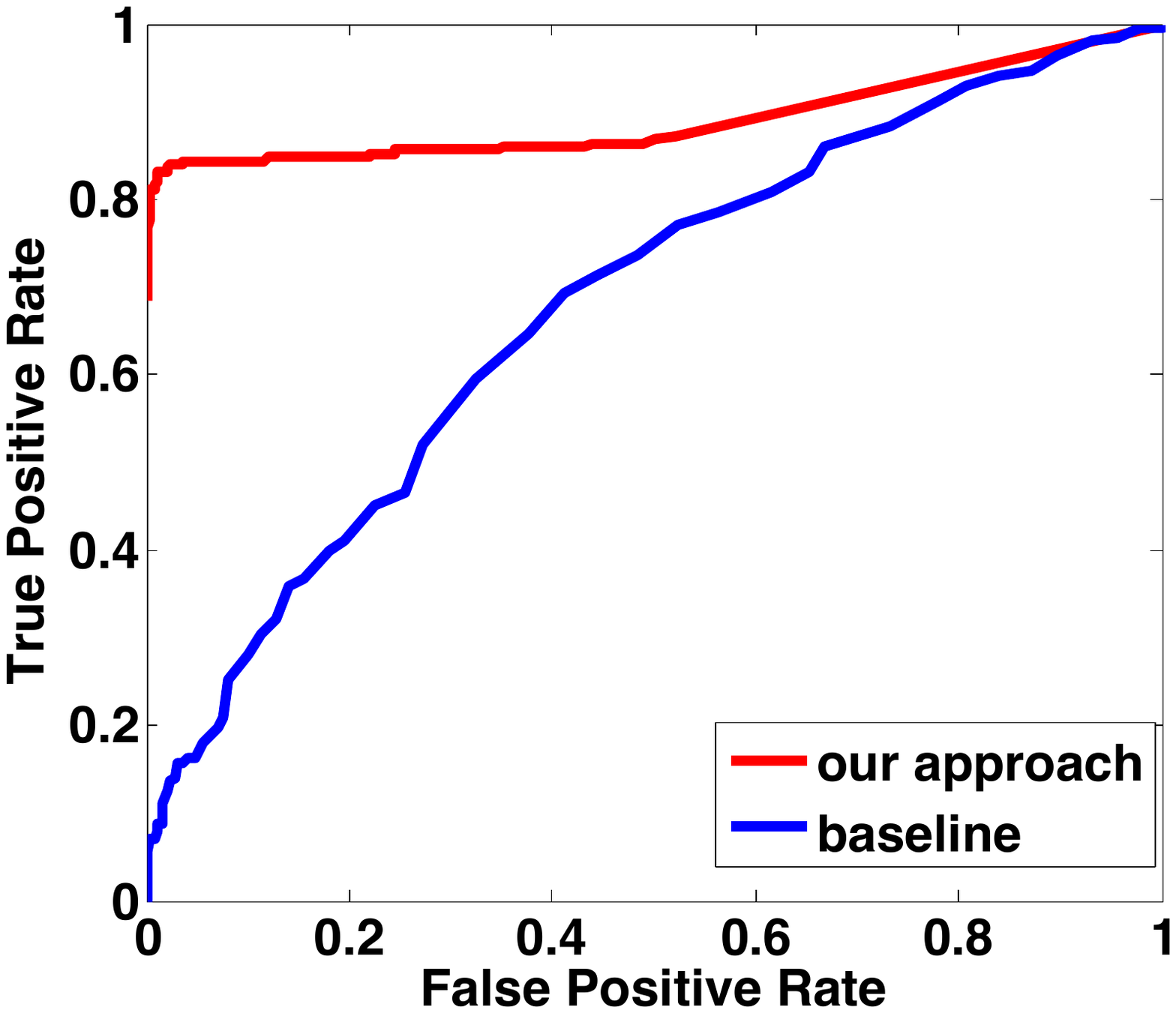}  \\  
                 A.1&A.2&A.3 \\
                    \includegraphics[
                    width = 0.3\linewidth
                    ]  {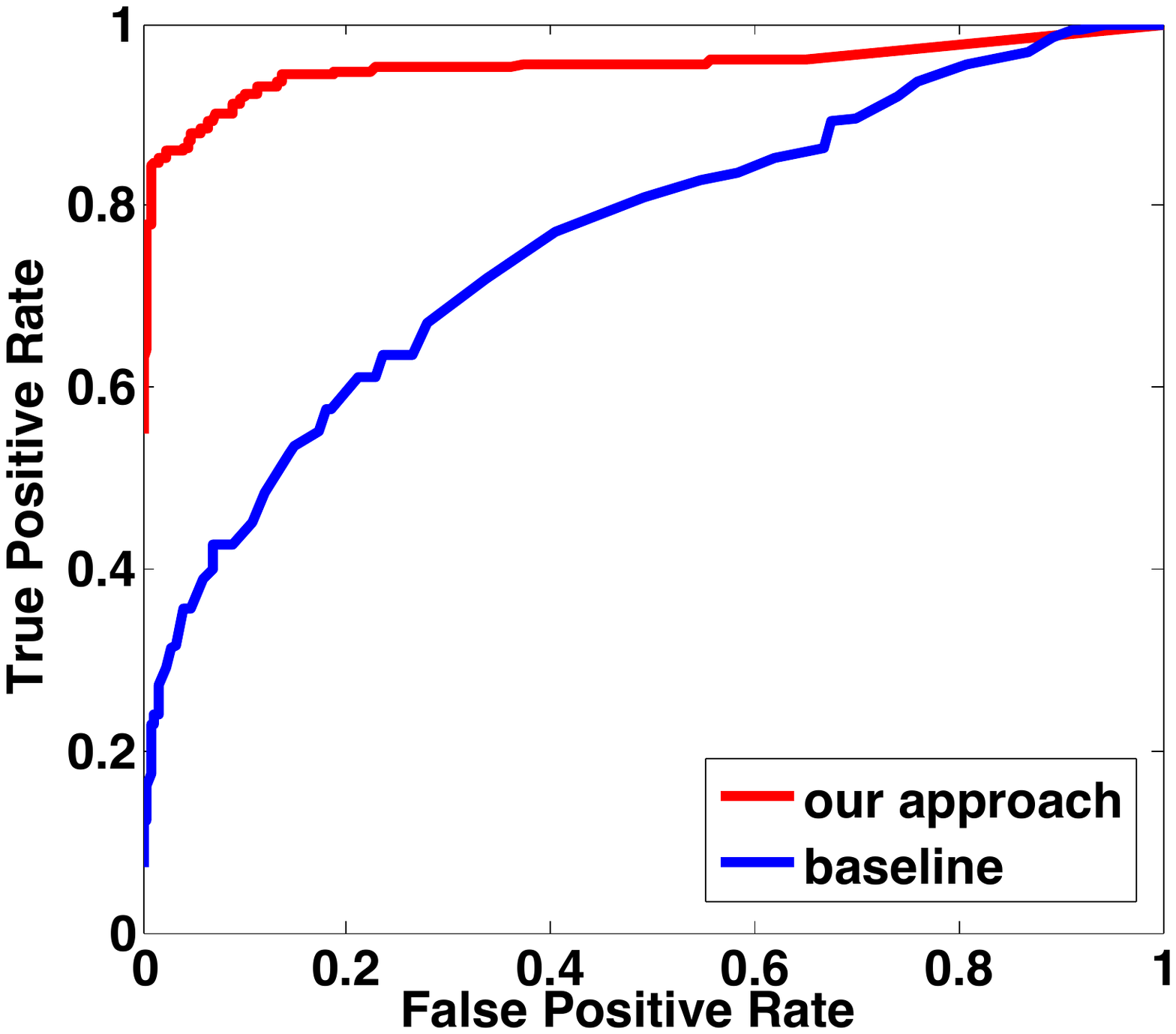}    
               &
               \includegraphics[
               width = 0.3\linewidth
                            ]{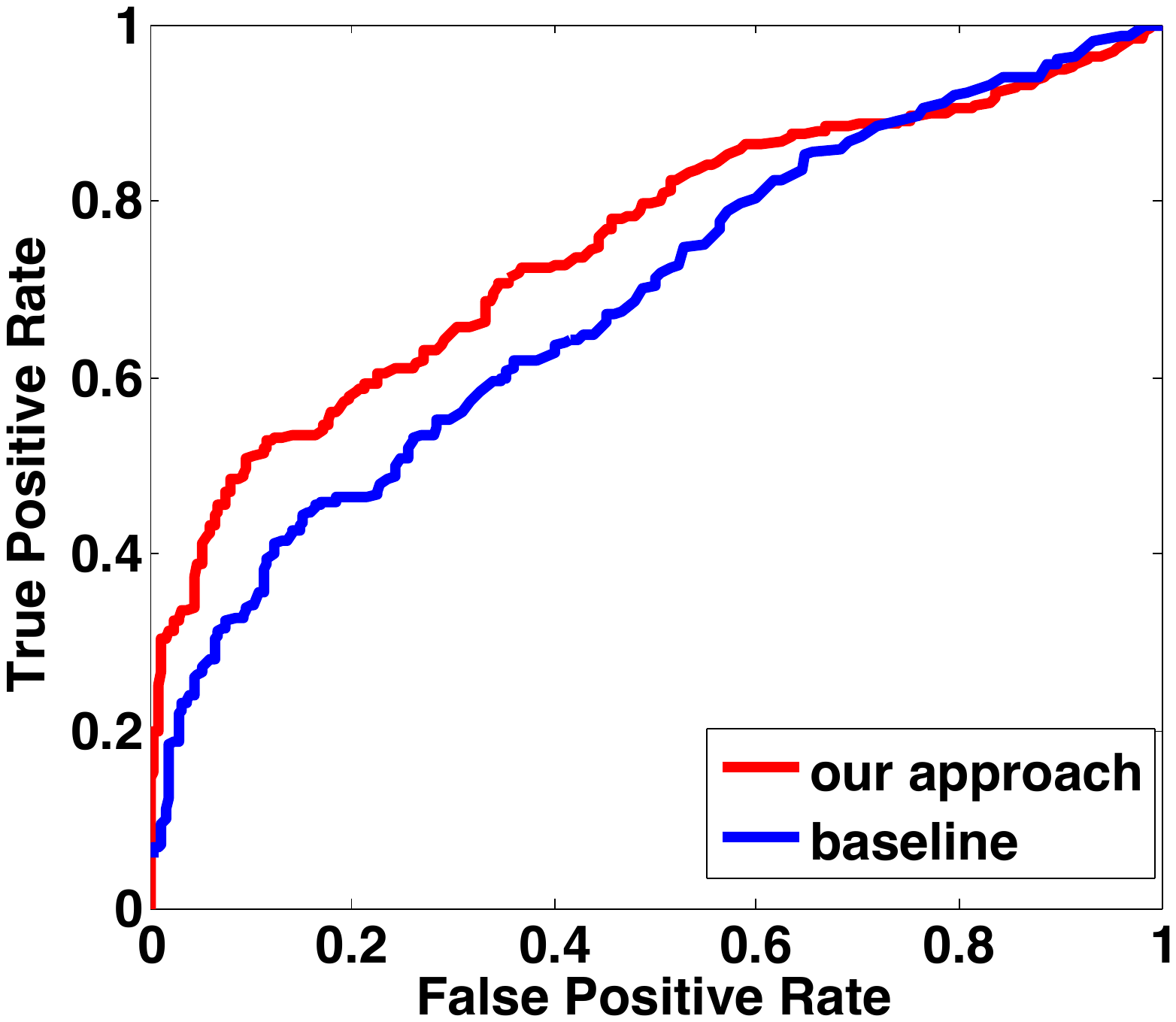} 
                 &
                    \includegraphics[
                    width = 0.3\linewidth
                    ]   {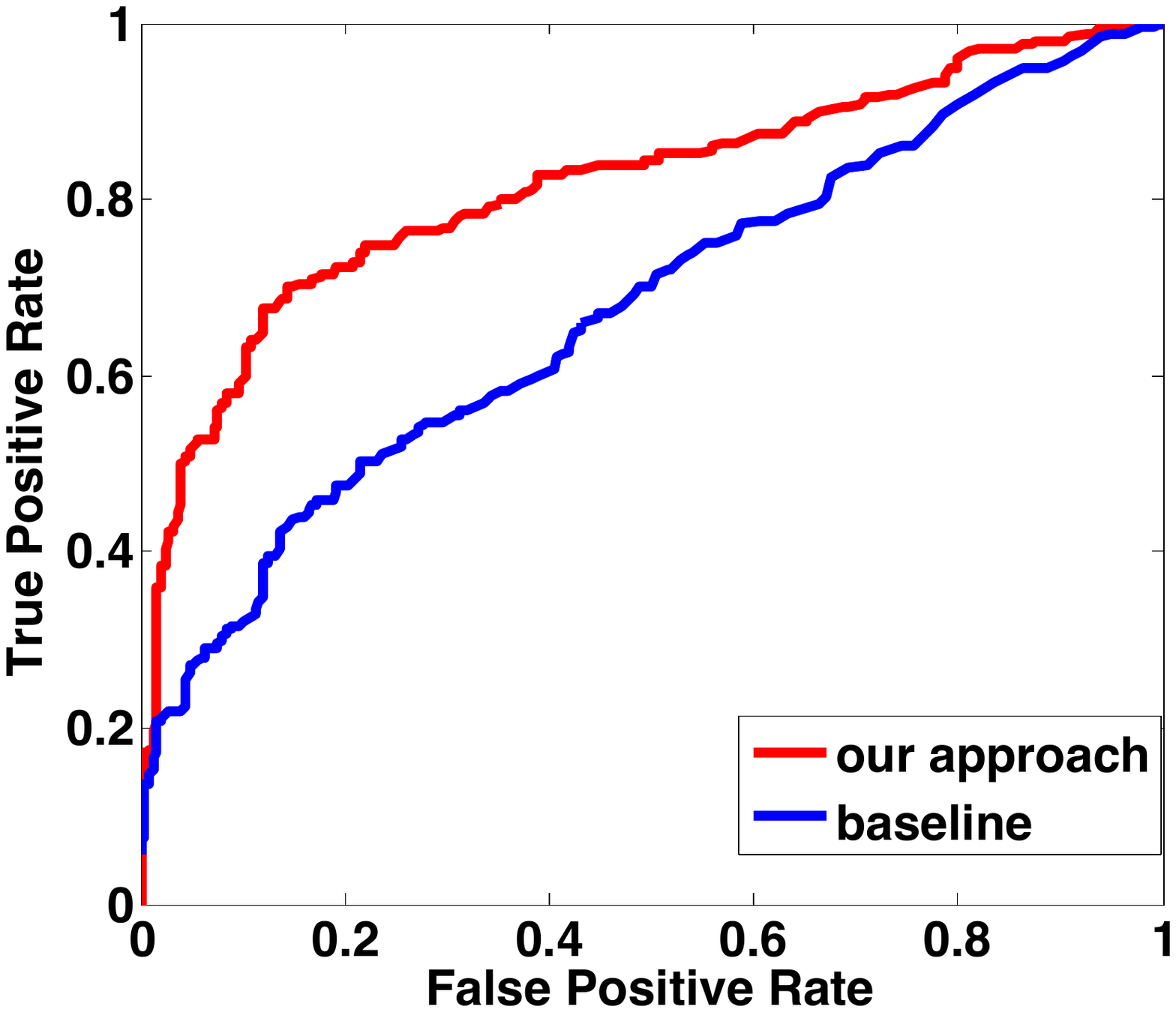} \\
                A.4&B.1&B.2 \\    
                    \includegraphics[
                    width = 0.3\linewidth
                    ]  {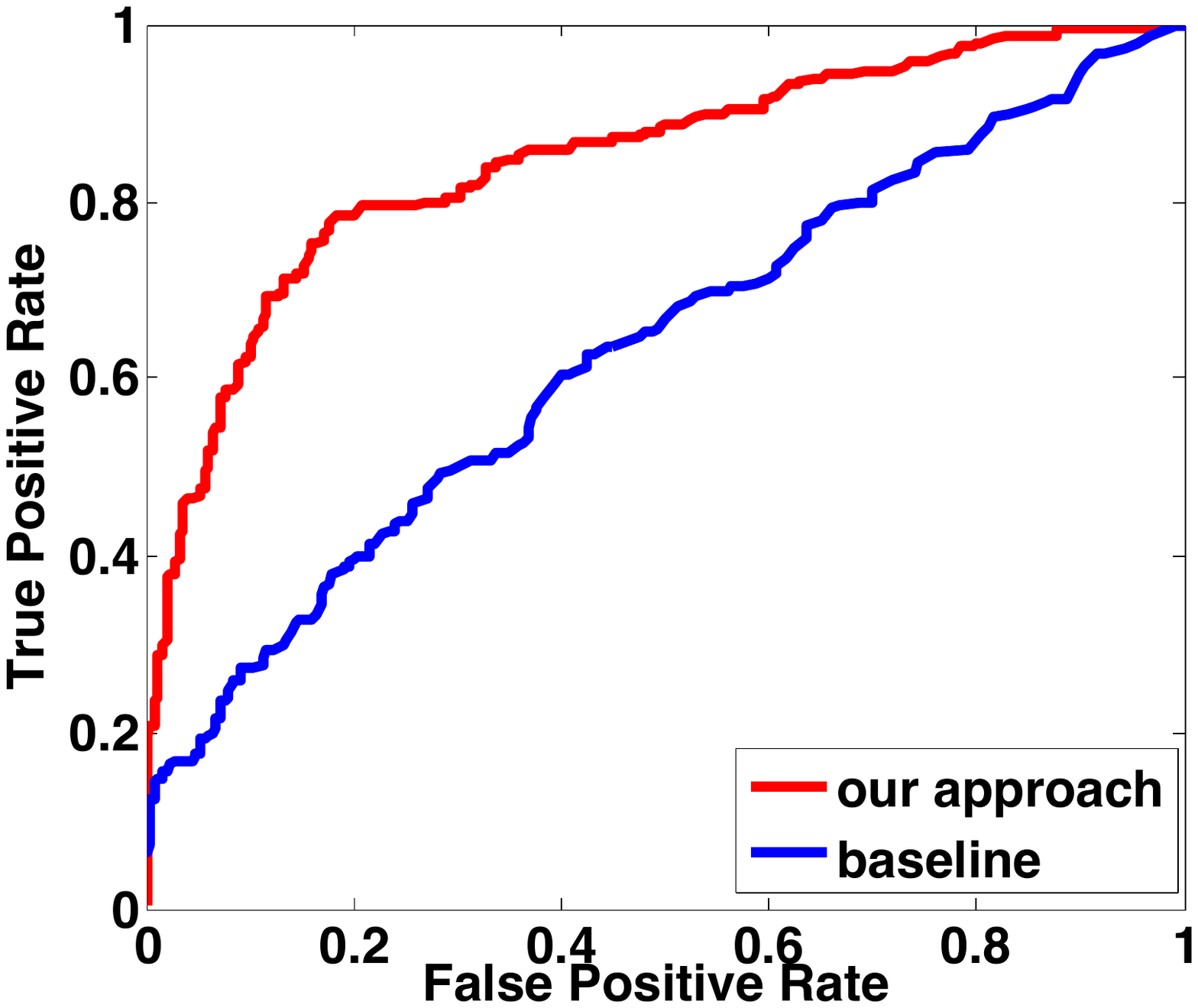}    
                &
                    \includegraphics[
                    width = 0.3\linewidth
                    ]  {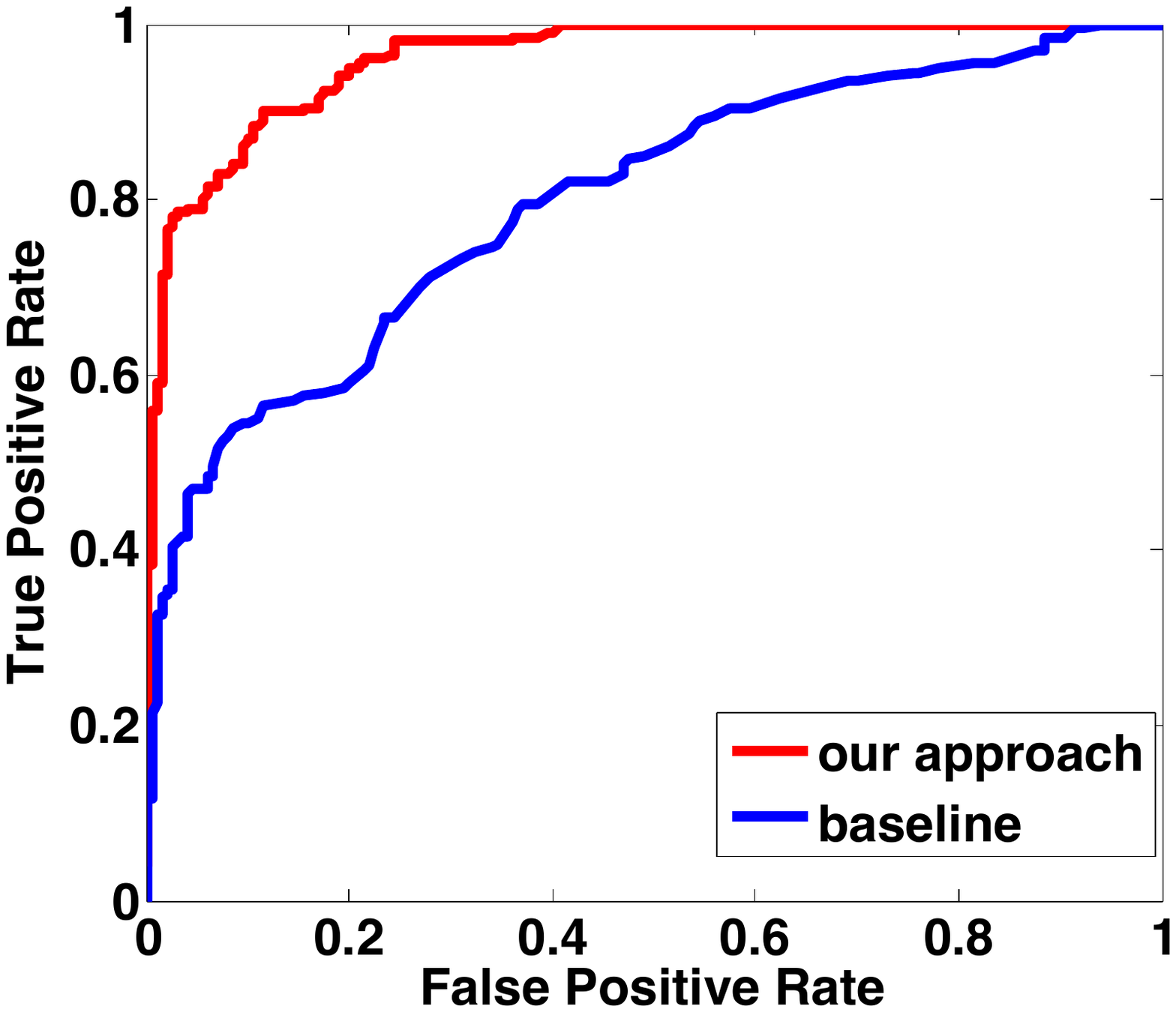}    
               &
             \includegraphics[
                    width = 0.3\linewidth
                            ]{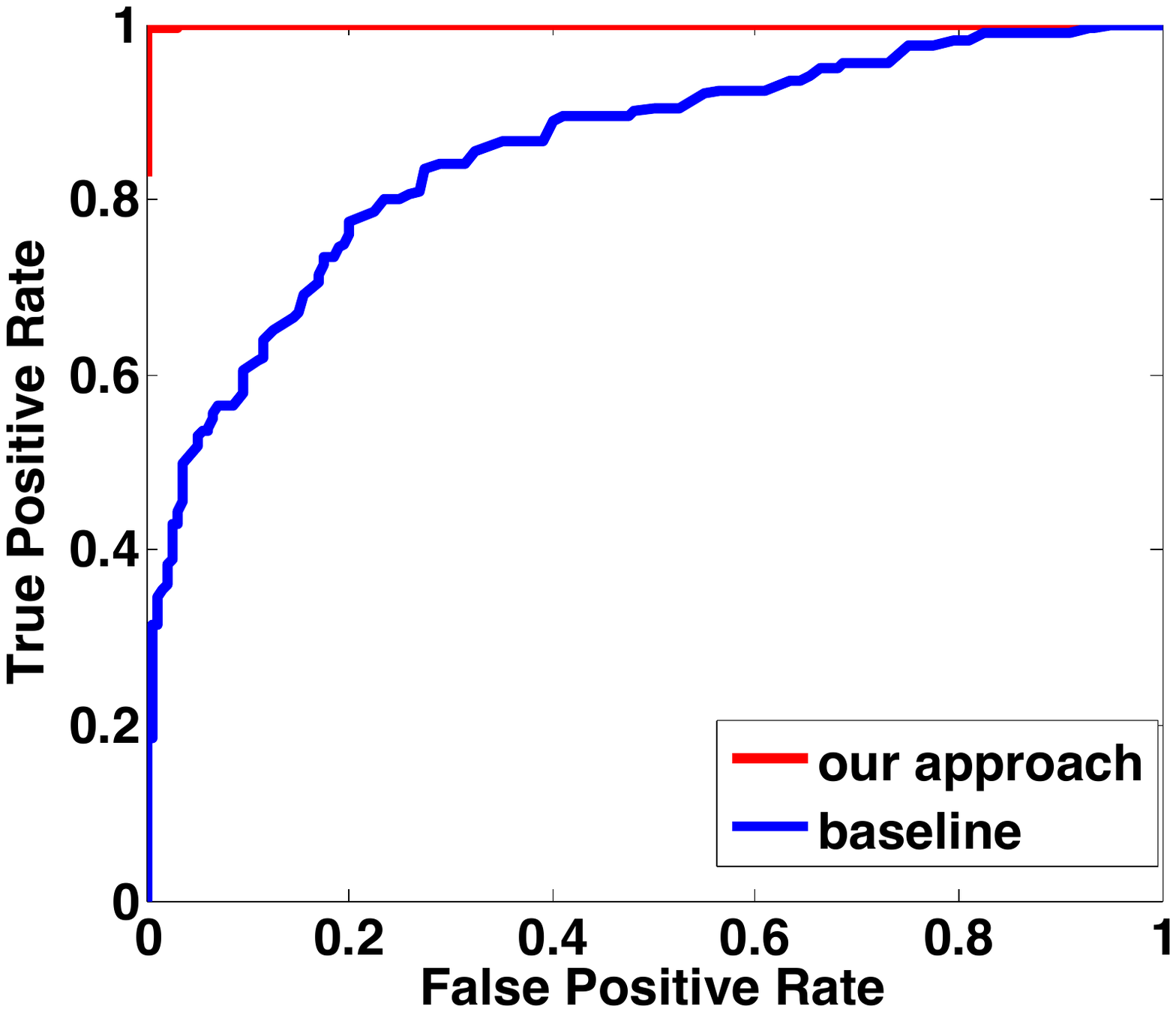}  \\         
                 B.3&C.1&C.2 \\    
                    \includegraphics[
                    width = 0.3\linewidth
                    ]   {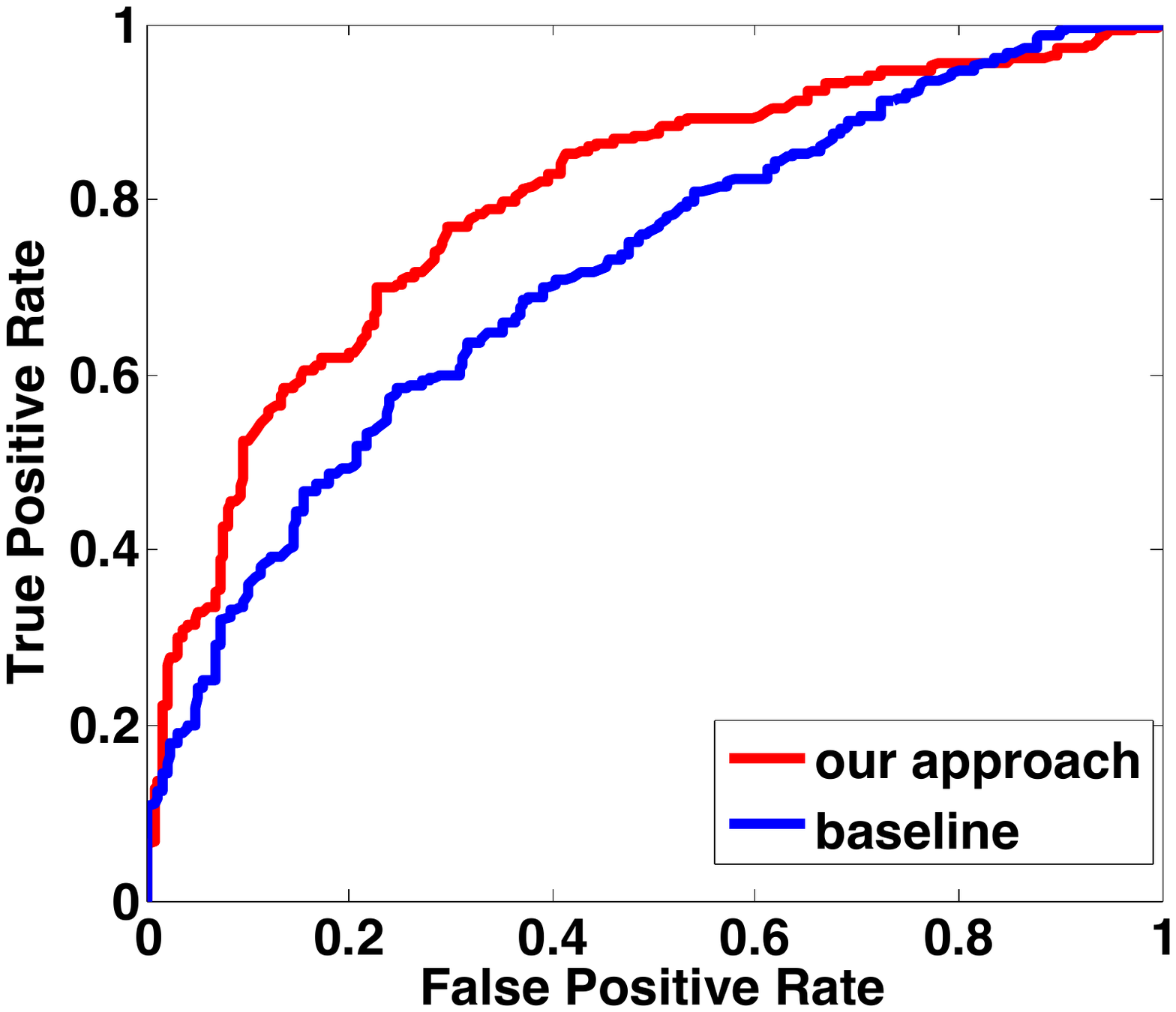}
                  &
                    \includegraphics[
                    width = 0.3\linewidth
                    ]  {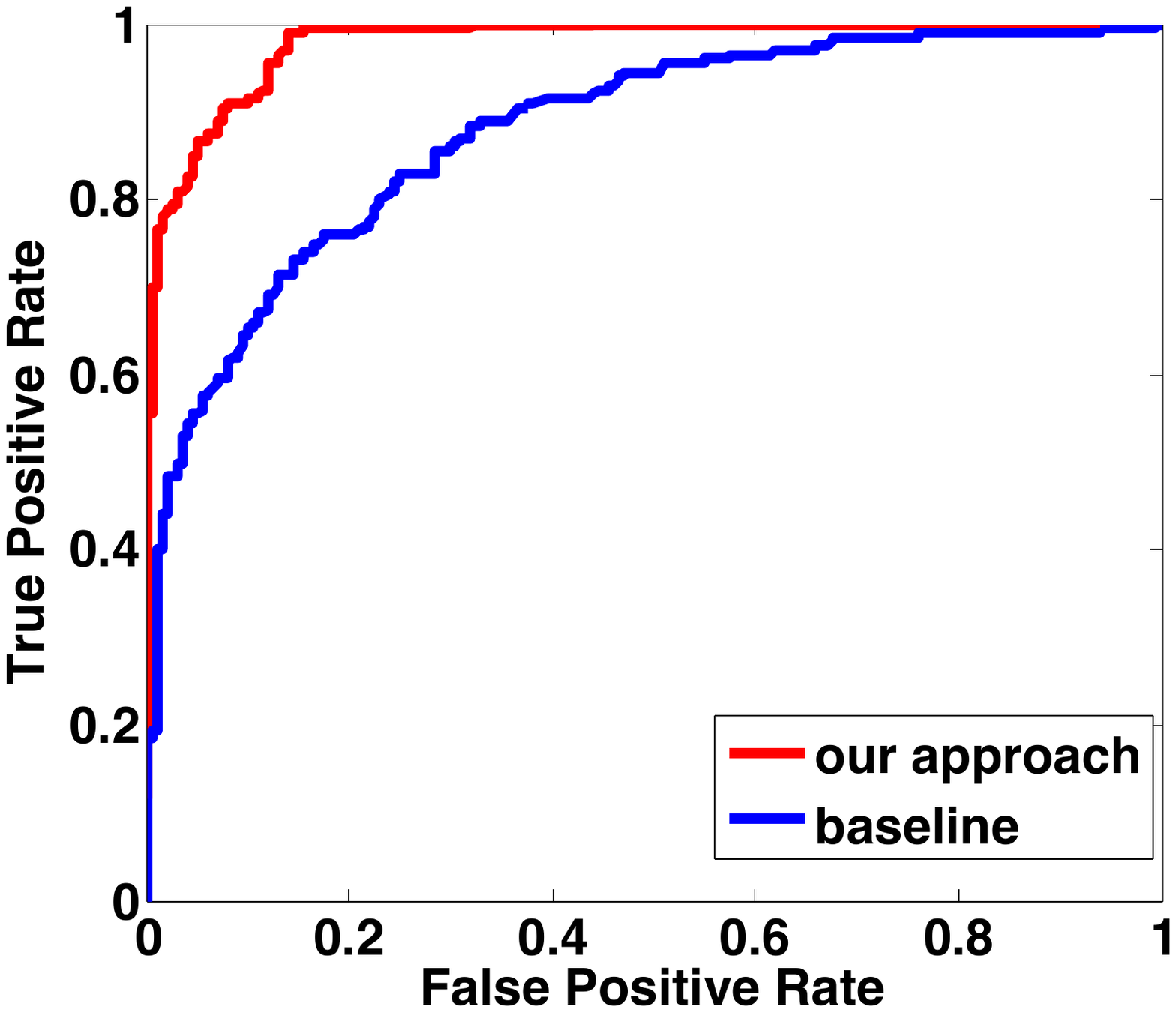}    
                &
                    \includegraphics[
                    width = 0.3\linewidth
                    ]  {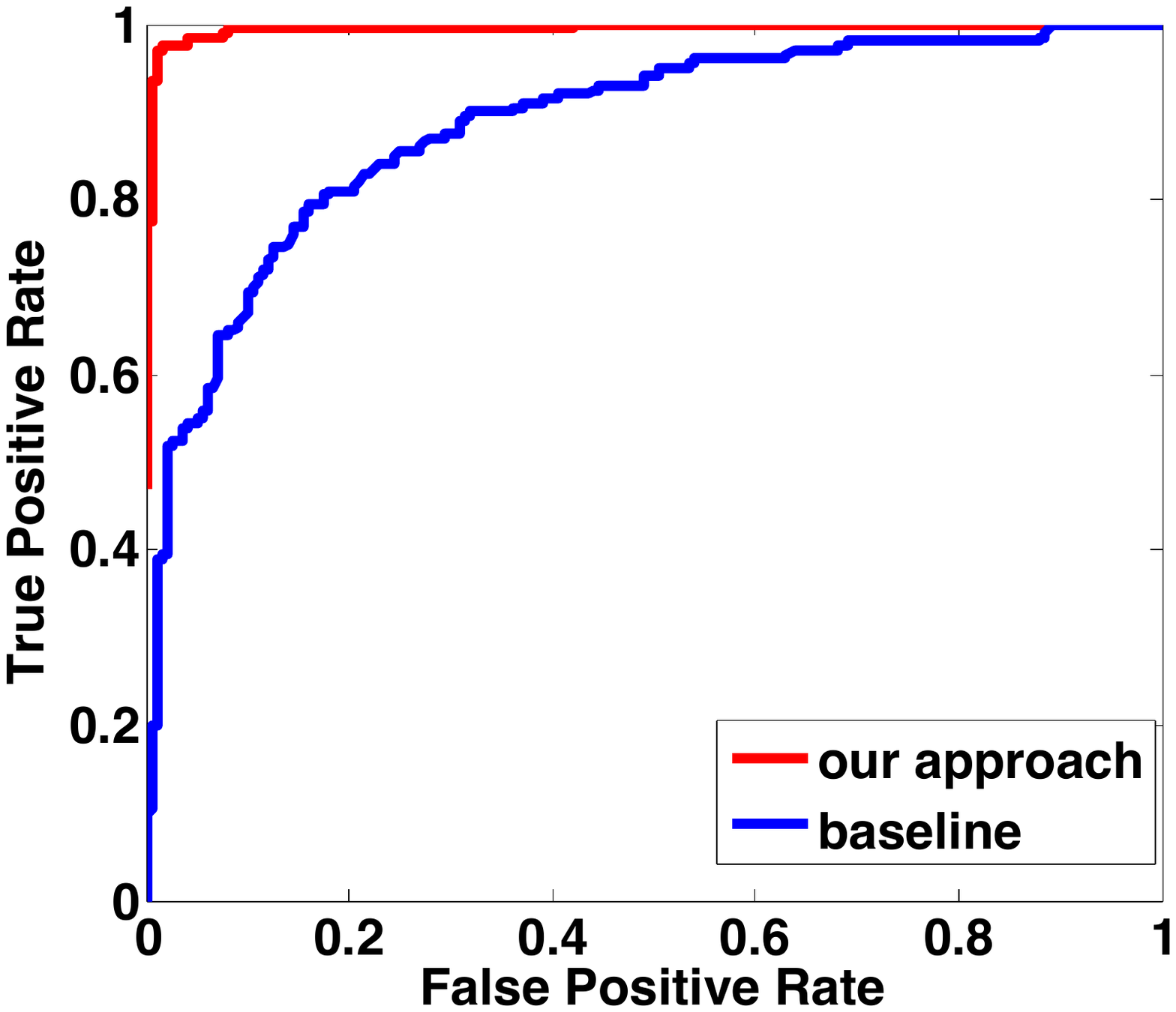}    
               \\
            D.1&D.2&D.3 
                    \end{tabular}
                 \end{center}
              \vspace{-3 mm}
                \caption{ AUC curves: comparison of our method with Baseline 1.}
               \label{AUC}
            \vspace{-3mm}
 \end{figure}
 
 \begin{figure}[t!] 
        \begin{center}
        \begin{tabular}{cc}
                \includegraphics[
                width = 0.38\linewidth
                            ]{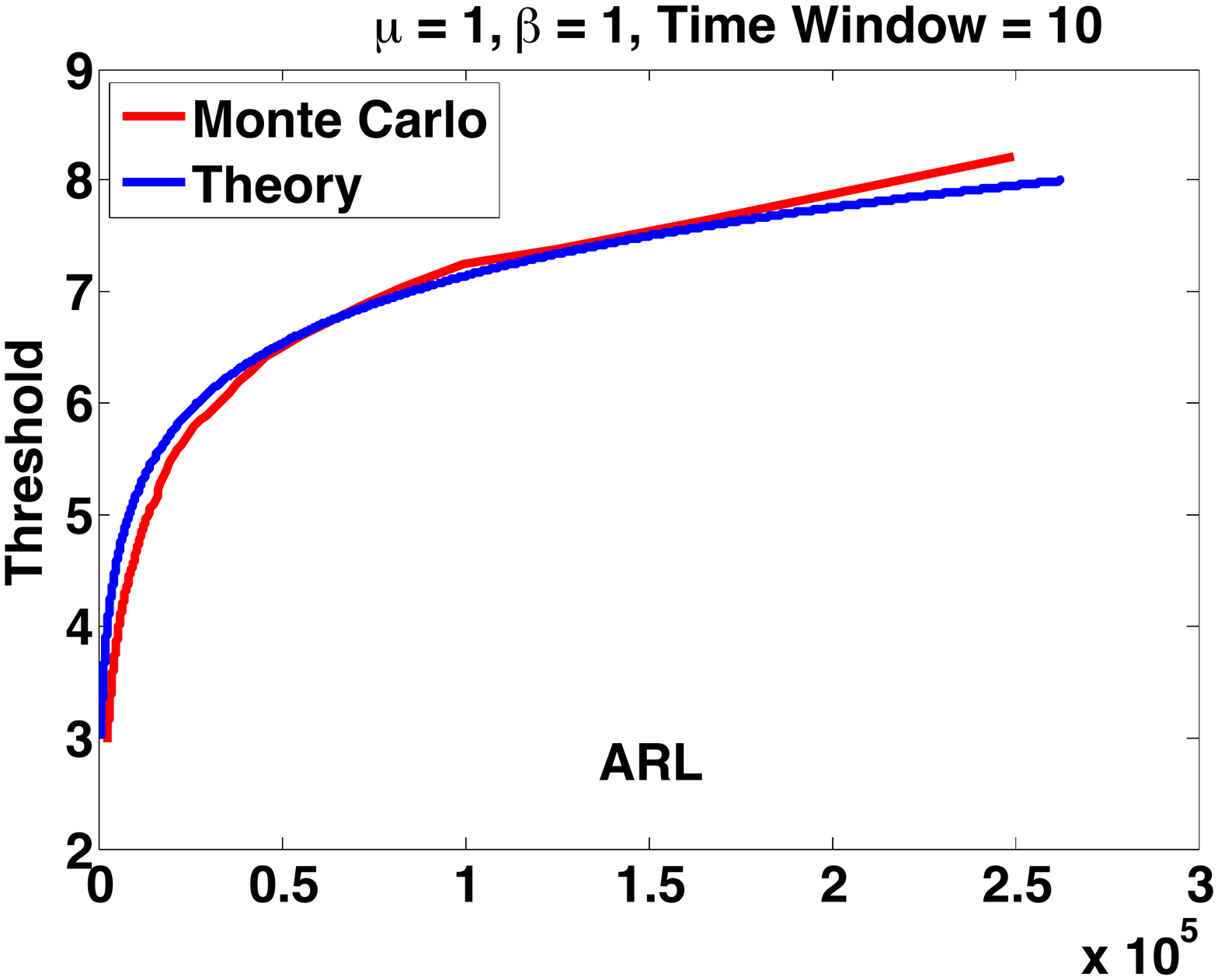} 
                 &
                    \includegraphics[
                     width = 0.4\linewidth
                    ]   {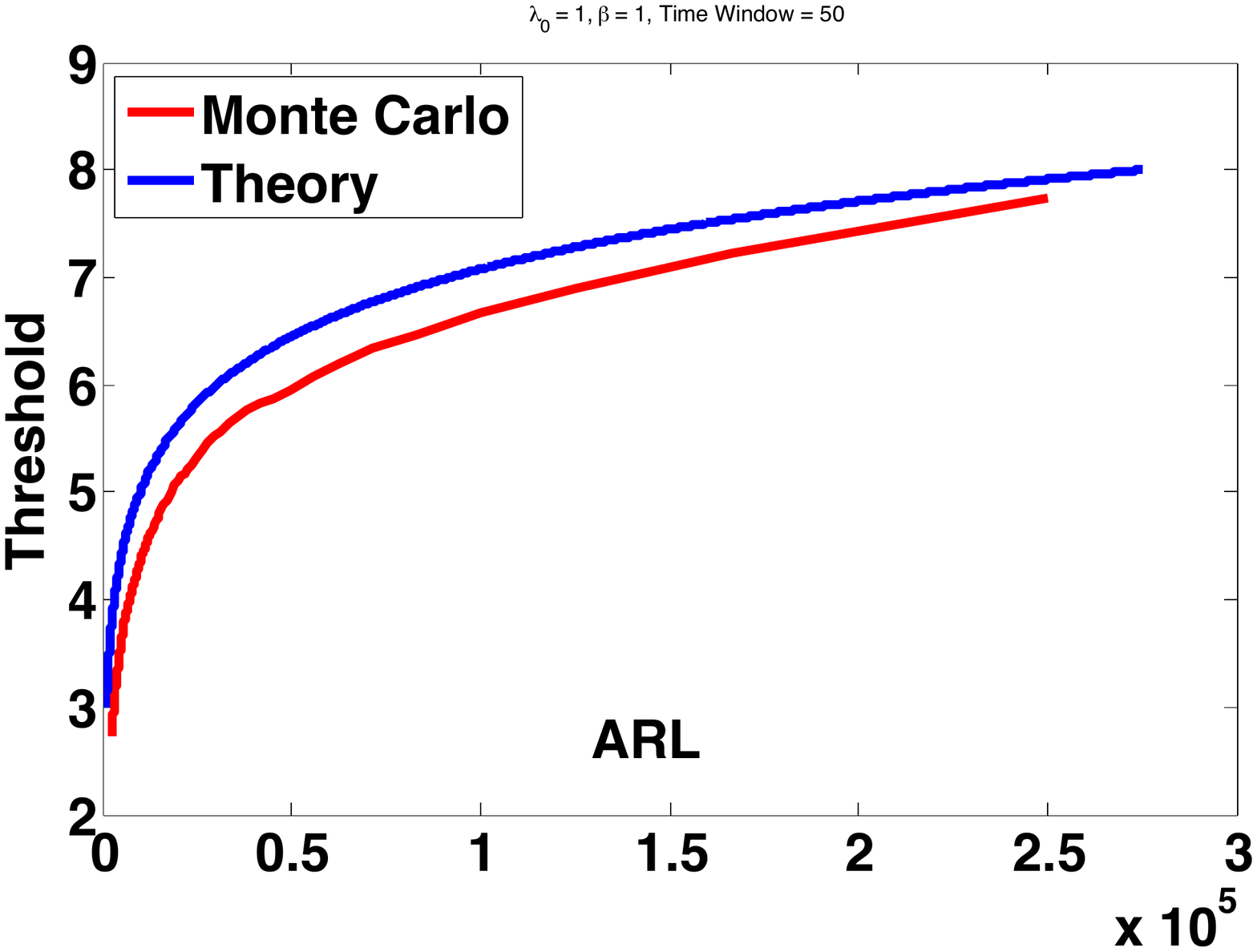}
                  \\
                  (a) & (b) \\
                     \includegraphics[
                      width = 0.38\linewidth
                    ]   {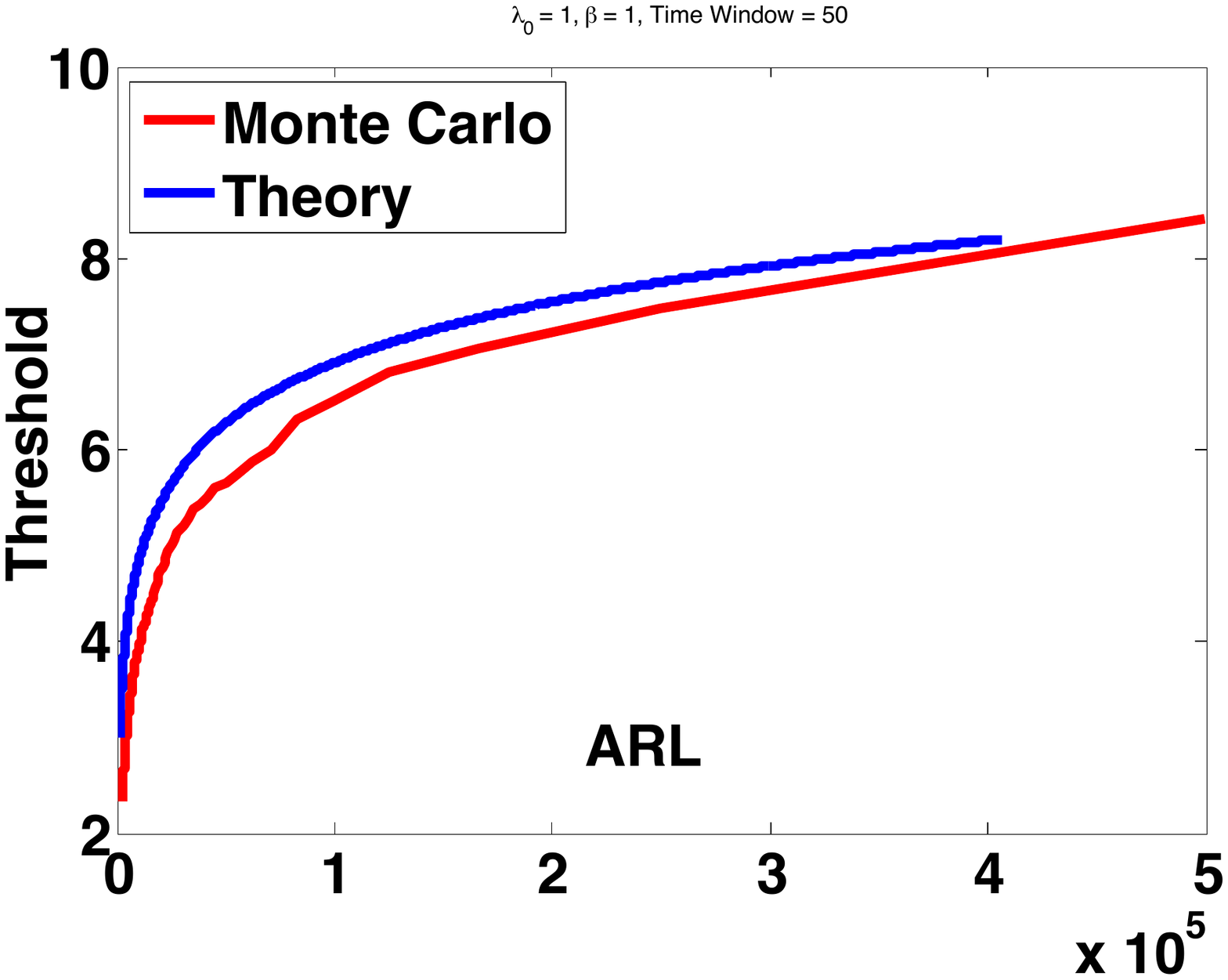} 
     &
                     \includegraphics[
                      width = 0.4\linewidth
                    ]   {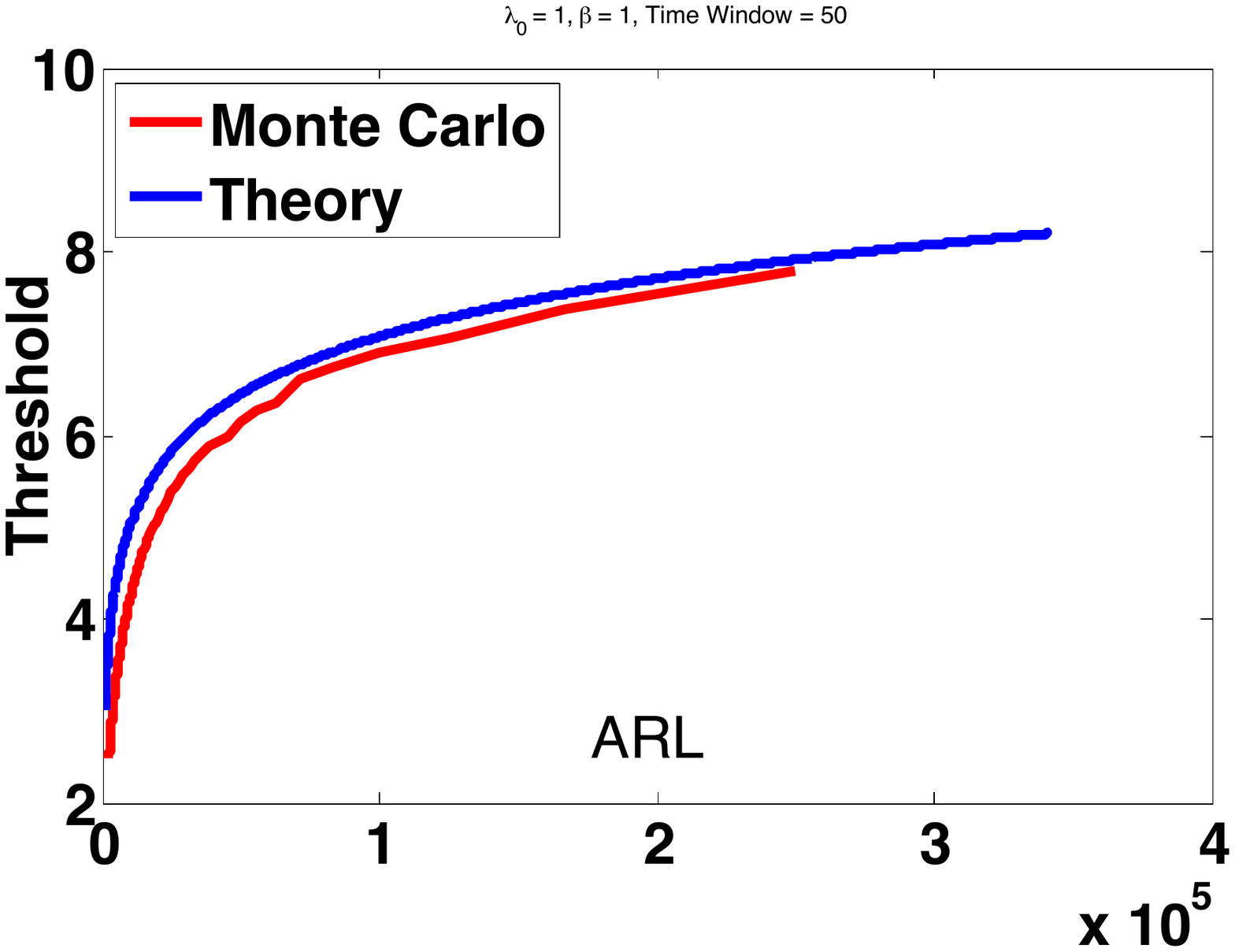}      
                \\
                (c) & (d) \\
                     \includegraphics[
                      width = 0.4\linewidth
                            ]{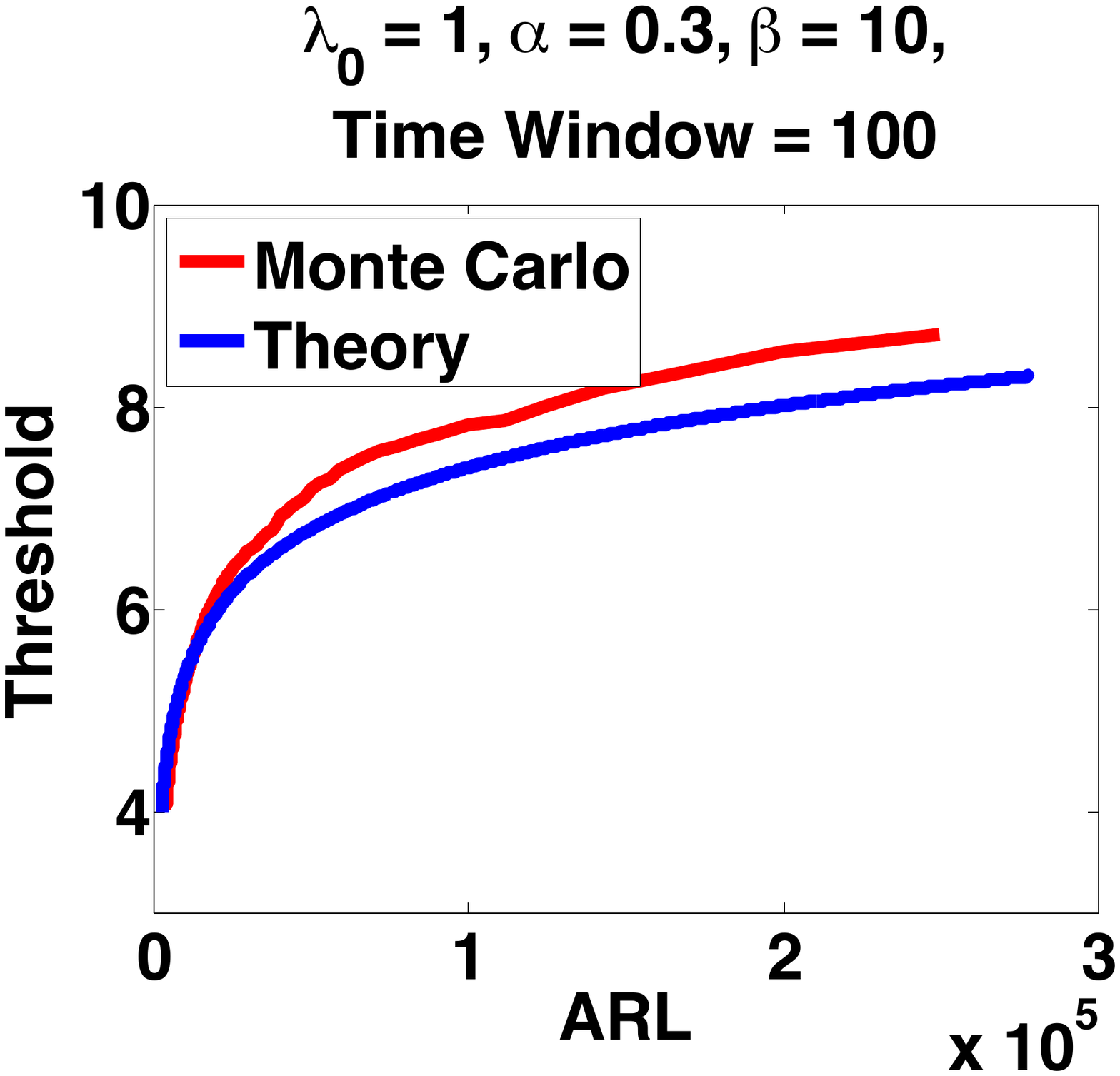} 
                 &
                    \includegraphics[
                     width = 0.4\linewidth
                    ]   {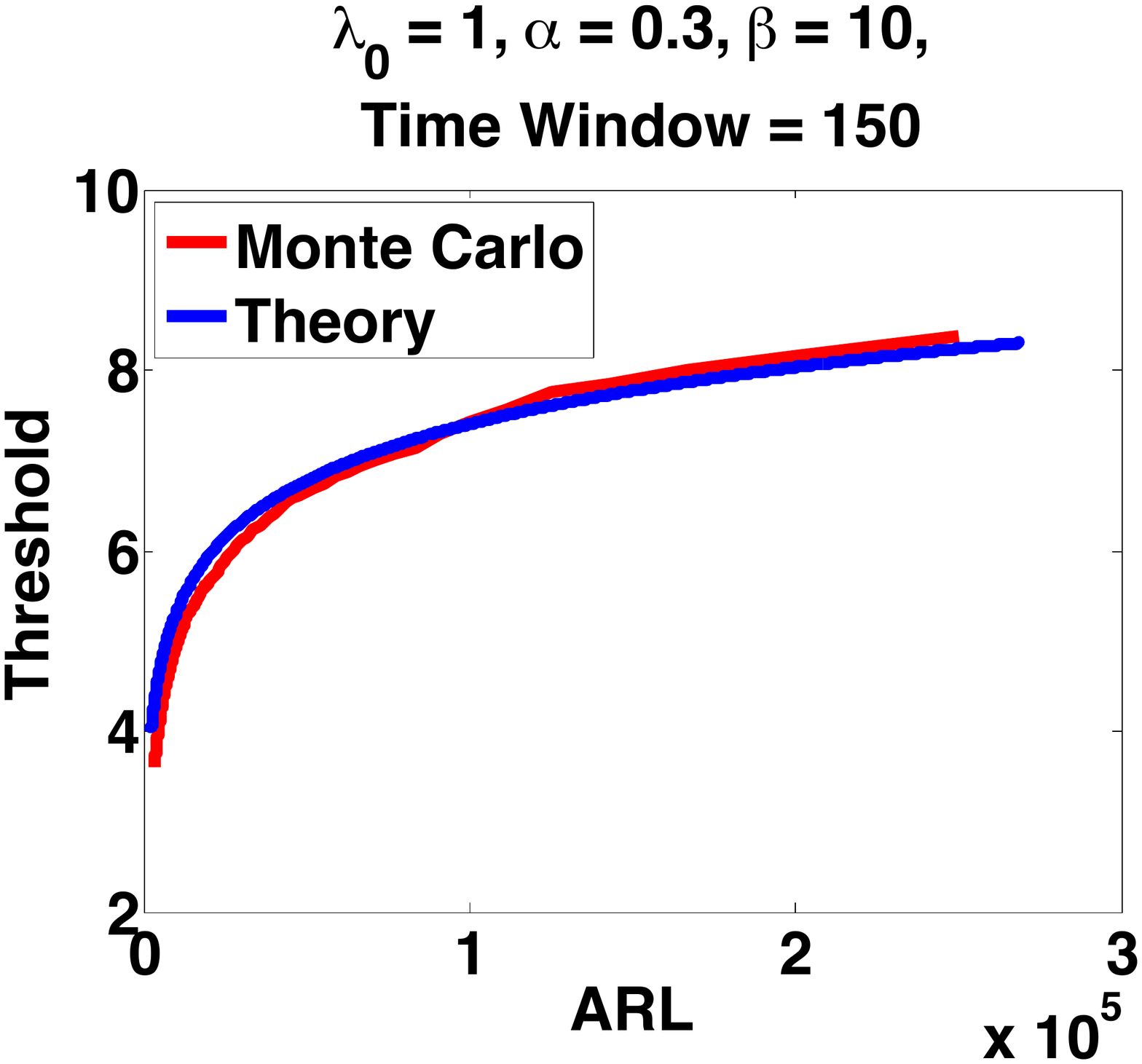}
             \\
              (e) & (f) \\
                     \includegraphics[
                      width = 0.4\linewidth
                    ]   {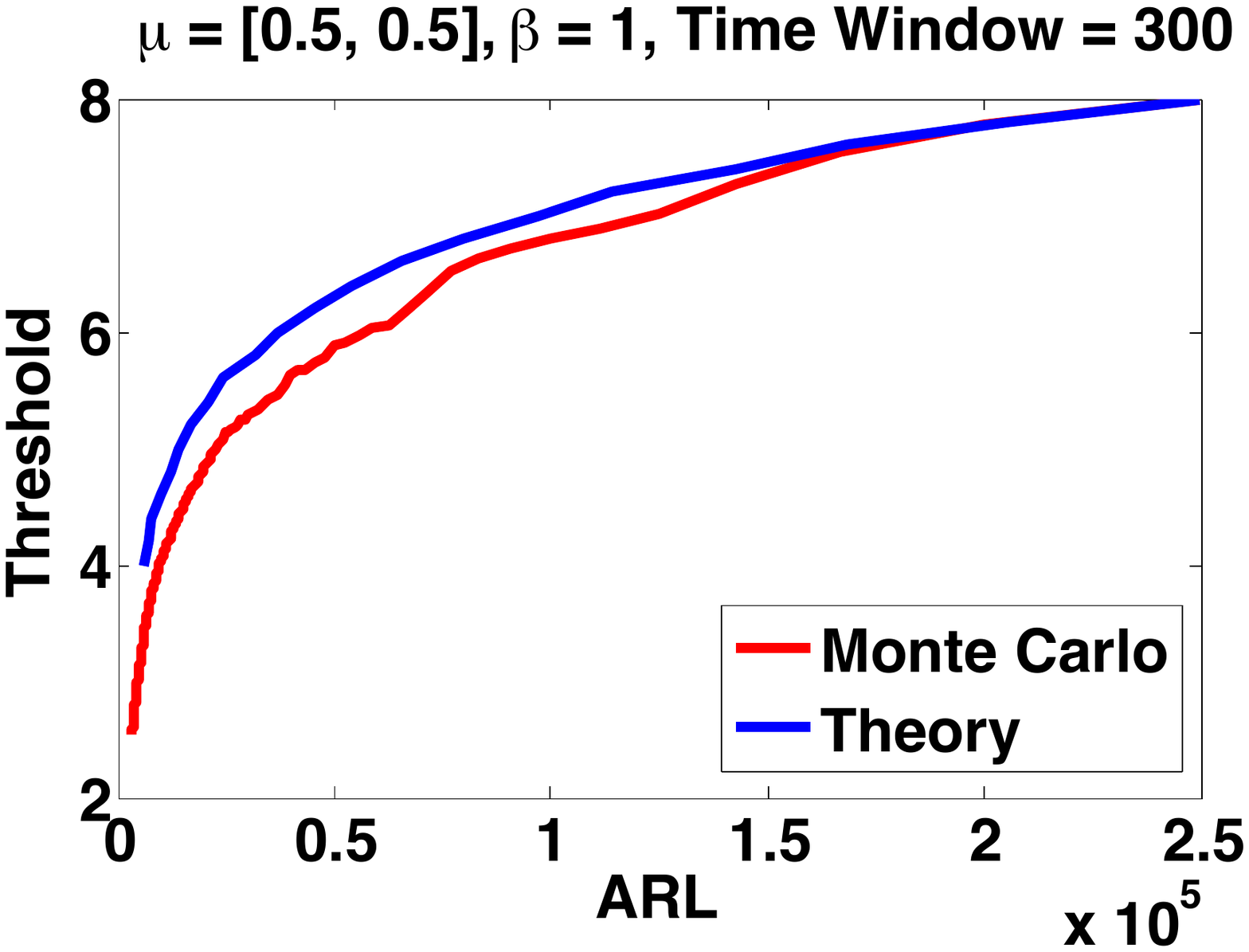} 
                    &
                     \includegraphics[
                      width = 0.4\linewidth
                    ]   {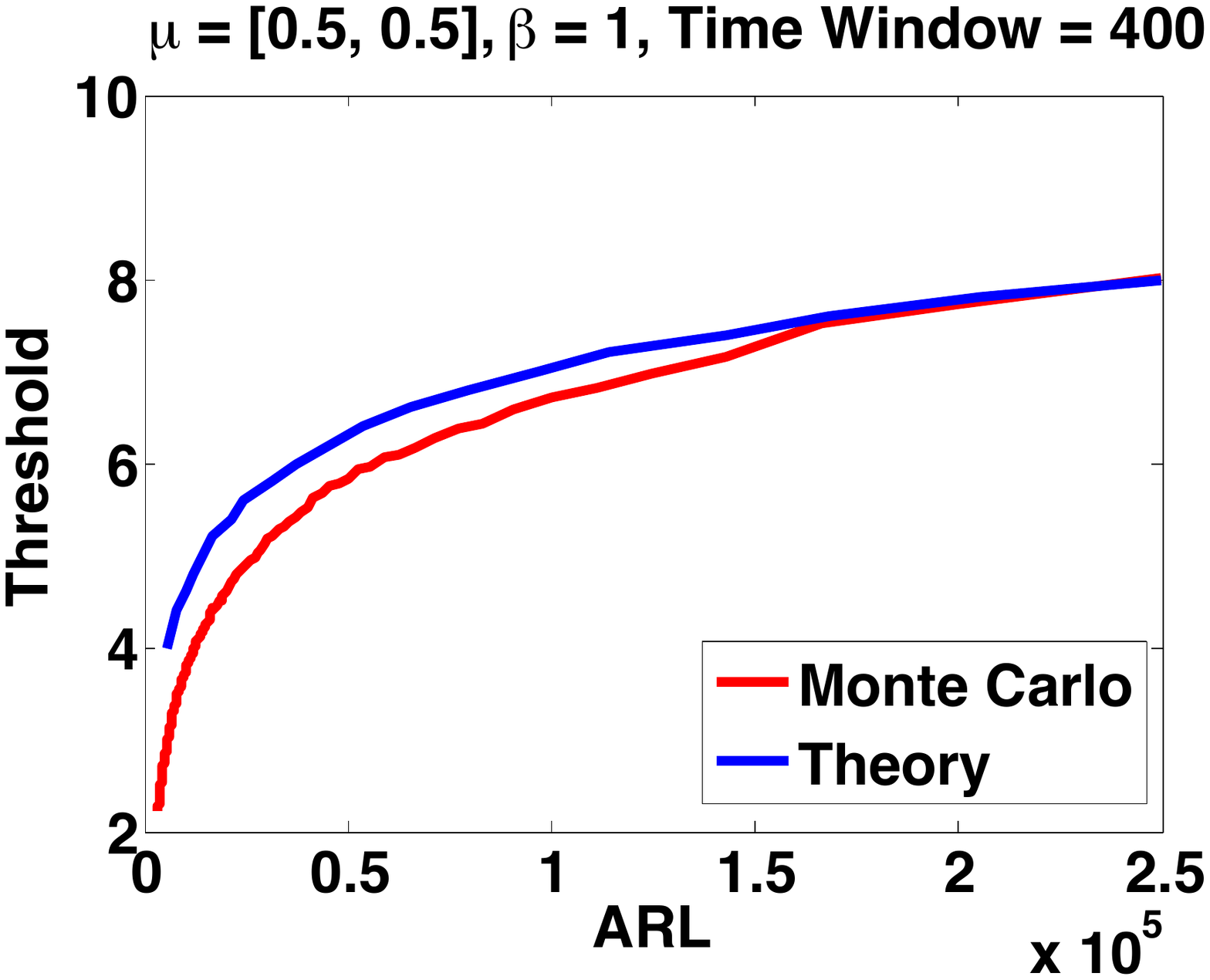}   
                         \\
                 (g) & (h)\\
                    \end{tabular}
                 \end{center}
                \vspace{-3mm}
                \caption{Comparison of theoretical threshold obtained via Theorem \ref{thm:ARL} with simulated threshold.}
                \label{thresh}
               \vspace{-4mm}
 \end{figure}

\subsection{Accuracy of theoretical threshold} \label{eva_ARL}

We evaluate the accuracy of our approximation in Theorem \ref{thm:ARL} by comparing the threshold obtained via Theorem \ref{thm:ARL} with the true threshold obtained by direct Monte Carlo. We consider various scenarios and parameter settings. We demonstrate the results in Fig. \ref{thresh} and list the parameters below. 

For Fig. \ref{thresh}-(a)(b)(c), the null distribution is one-dimensional Poisson process with intensity $\mu=1$. We choose $\beta=1$ as a priori, and vary the length of the sliding time window. We set $L=10, 50, 100$, respectively. For Fig. \ref{thresh}-(d), we select $L=50$ and let $\beta = 10$. By comparing these four examples, we find our approximated threshold is very accurate regardless of $L$ and $\beta$.

For Fig. \ref{thresh}-(e)(f), the null hypothesis is a one-dimensional Hawkes process with base intensity $\mu=1$ and influence parameter $\alpha = 0.3$, $\beta=10$. We vary the sliding window length to be $L=100,150$, respectively. We can see the accurate approximations as before. For Fig. \ref{thresh}-(g)(h), we consider a multi-dimensional case. The null distribution is a two dimensional Poisson processes with base intensity $\bm{\mu}=[0.5,0.5]^{\intercal}$. We set $\beta = 1$ and vary the window length to be $L=300$ and $400$ respectively. The results demonstrate that our analytical threshold is also sharply accurate in the multi-dimensional situation.

\vspace{-0.2in}
\section{Real-data}

We evaluate our online detection algorithm on real Twitter and news websites data. 
By evaluating our log-likelihood ratio statistic on the real twittering events, we see that the statistics would rise up when there is an explanatory major event in actual scenario. By comparing the detected change points to the true major event time, we verify the accuracy and effectiveness of our proposed algorithm.
In all our real experiments, we set the sliding window size to be $L=500$ minutes, and set the kernel bandwidth $\beta$ to be 1. The number of total events for the tested sequences ranges from 3000 to 15000 for every dataset. 
\begin{figure}[h!]
  \centering
  \vspace{-4mm}
  \includegraphics[width=.6\linewidth]
    {}
      \vspace{-3mm}
  \caption{\small AUC for Twitter dataset on 116 important real world events.}
  \label{twitterAUC}                 
  \vspace{-5mm}
\end{figure}

\subsection{Twitter Dataset} 
\vspace{-1mm}
For Twitter dataset we focus on the star network topology. We create a dataset for famous people users and randomly select 30 of their followers among the tens of thousands followers. We assume there is a star-shaped network from the celebrity to the followers, and collect all their re/tweets in late January and early February 2016. Fig. \ref{fig:real-twitter}-(a) demonstrates the statistics computed for the account associated to a TV series named Mr. Robot. We identify that the statistics increase around late January 10-th and early 11-th. This, surprisingly corresponds to the winning of the 2016 Golden Glob Award\footnote{\url{http://www.tvguide.com/news/golden-globe-awards-winners-2016/}}.  Fig. \ref{fig:real-twitter}-(b) shows the statistics computed based on the events of the First lady of the USA  and 30 of her randomly selected followers. The statistics reveal a sudden increase in 13th of January. We find a related event - Michelle Obama stole the show during the president's final State of the Union address by wearing a marigold dress which sold out even before the president finished the speech\footnote{\url{http://www.cnn.com/2016/01/13/living/michelle-obama-dress-marigold-narciso-rodriguez-feat/}}. 
Fig. \ref{fig:real-twitter}-(c) is related to Suresh Raina, an Indian professional cricketer. We selecte a small social circle around him as the center of a star-shaped network. We notice that he led his team to win an important game on Jan. 20\footnote{\url{http://www.espncricinfo.com/syed-mushtaq-ali-trophy-2015-16/content/story/963891.html}}, which corresponds to a sharp increase of the statistics. More results for this dataset can be found in Appendix \ref{app:more_eg}.

We further perform sensitivity analysis using the twitter data. We identify 116 important real life events. Some typical examples of such events are release of a movie/album, winning an award, Pulse Nightclub shooting, etc. Next, we identify the twitter handles associated with entities representing these events. 
We randomly sample 50 followers from each of these accounts and obtain a star topology graph centered around each handle. We collect tweets of all users in all these networks for a window of time before and after the real life event. For each network we compute the statistics. The AUC curves in Fig. \ref{twitterAUC} are obtained by varying the threshold. A threshold value is said to correctly identify the true change-point if the statistic value to the right of the change-point is greater than the threshold. This demonstrates the good performance of our algorithm against two baseline algorithms. 

\begin{figure}[h!]
  \centering
  \vspace{-3mm}
  \includegraphics[width=.6\linewidth]
    {}
      \vspace{-3mm}
  \caption{\small Illustration of the network topology for tracking Obama's first presidency announcement.}
  \label{network}                 
  \vspace{-6mm}
\end{figure}

\begin{figure*}[t!]
\begin{center}
\begin{adjustbox}{width=1.0\textwidth}
\begin{tabular}{ccc}
\includegraphics[width=0.3\textwidth]{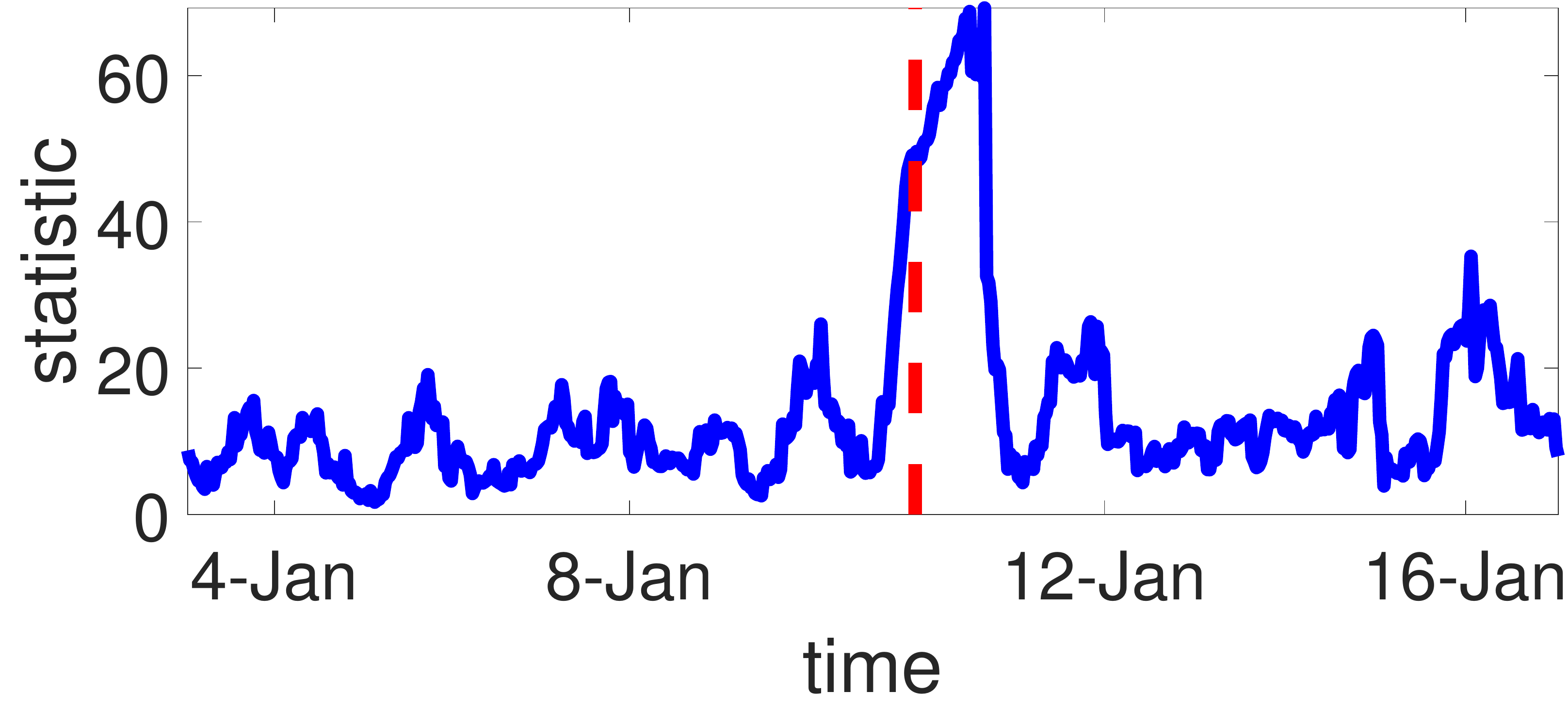} &
\includegraphics[width=0.3\textwidth]{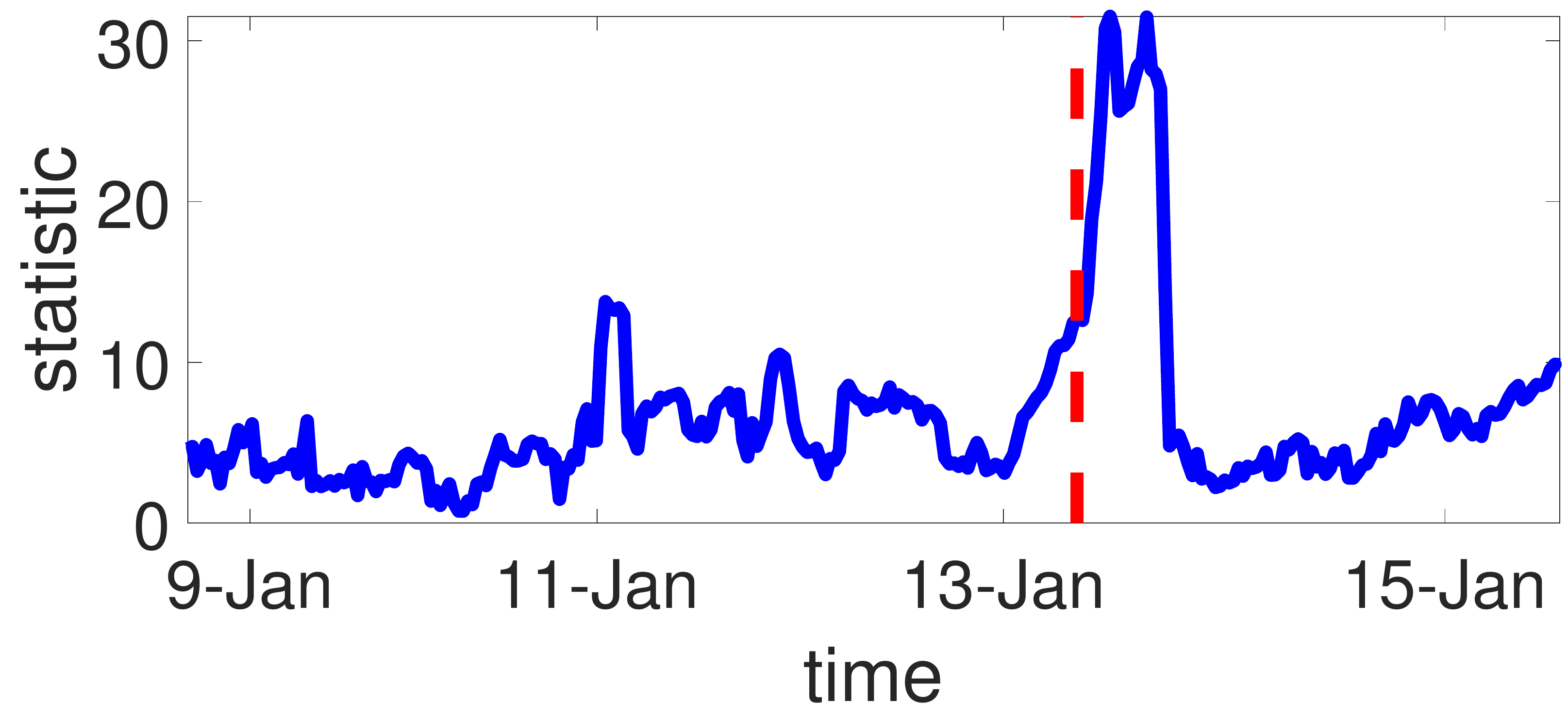} &
\includegraphics[width=0.3\textwidth]{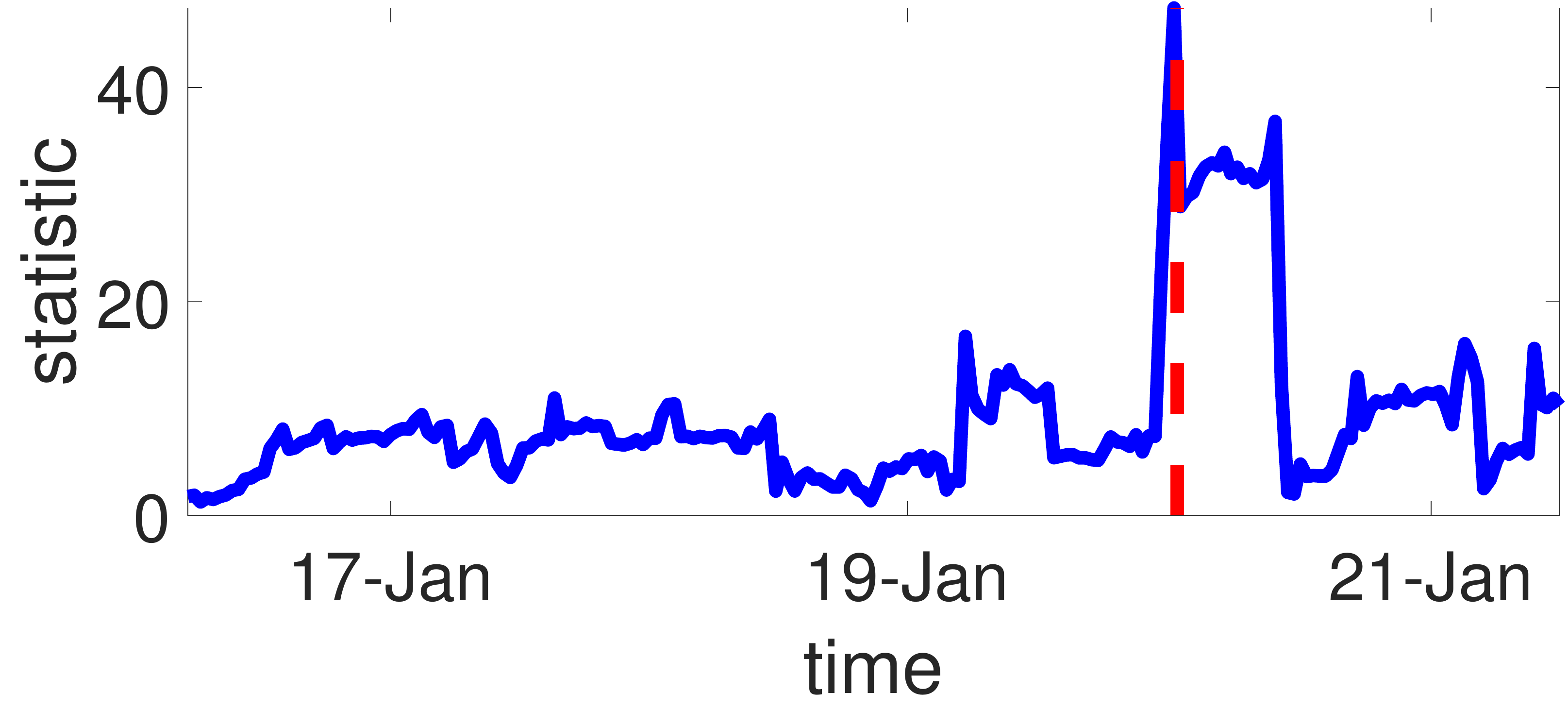} 
\end{tabular}
\end{adjustbox}
\end{center}
\vspace{-3mm}
\caption{\small Exploratory results on Twitter for the detected change points: (left) Mr Robot wins the Golden Globe; (middle) First Lady's dress getting attention; (right) Suresh Raina makes his team won.}
\label{fig:real-twitter}
\vspace{-3mm}
\end{figure*}

\begin{figure*}[h!]
\centering
\begin{tabular}{ccc}
\includegraphics[width=0.31\linewidth]{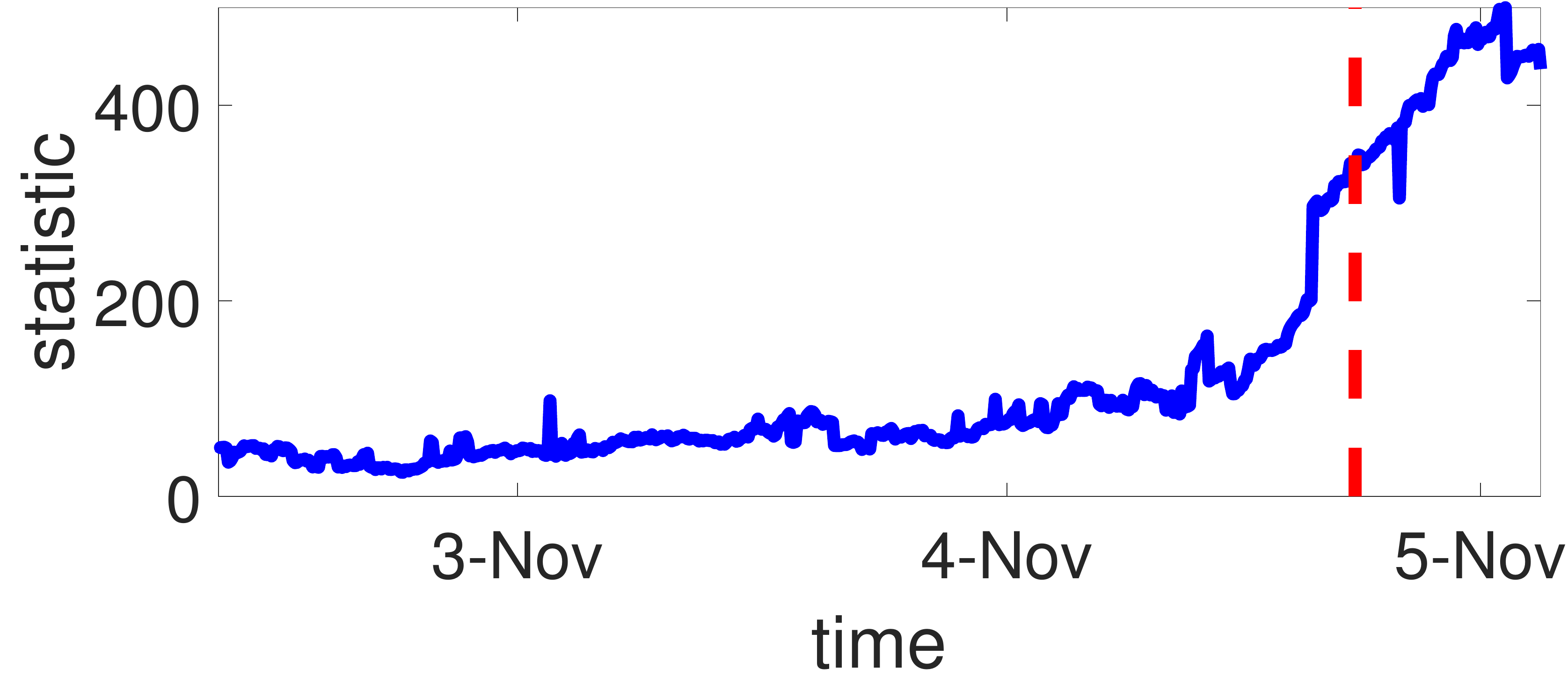} &
 \includegraphics[width=0.31\linewidth]{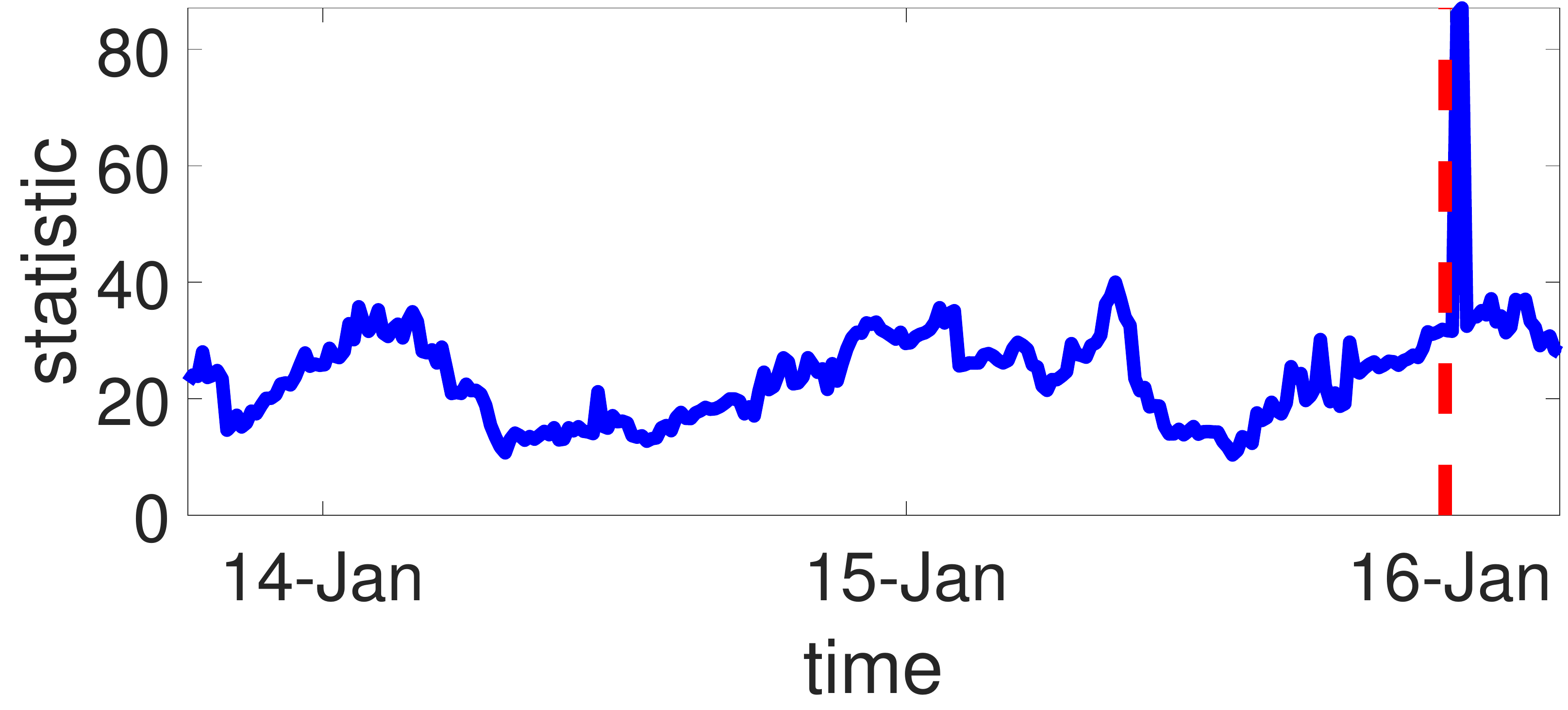}&
 \includegraphics[width=0.31\linewidth]{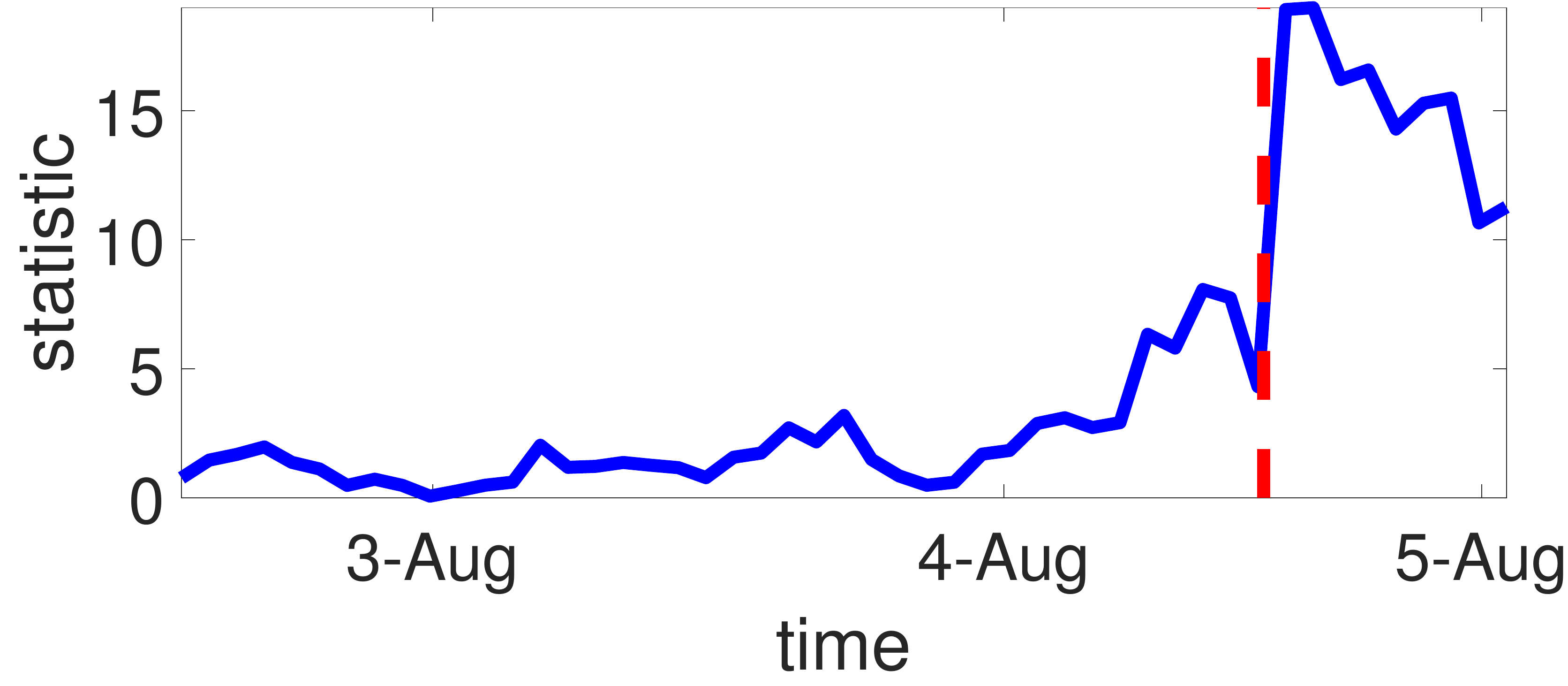}
 \end{tabular}
\vspace{-3mm}
\caption{\small Exploratory results on Memetracker for the detected change points: (left) Obama wins the presidential election; (middle) Israel announces ceasefire; (right) Beijing Olympics starts.}
\label{fig:real-memetracker}
\vspace{-3mm}
\end{figure*}

\subsection{Memetracker Dataset}

As a further illustration of our method, we also experiment with the Memetracker\footnote{\url{http://www.memetracker.org/}} dataset to detect changes in new blogs.
The dataset contains the information  flows captured by hyperlinks between different sites with timestamps during nine months. It tracks short units of texts and short phrases, which are called memes and act as signatures of topics and events propagation and diffuse over the web in mainstream media and blogs~\cite{leskovec2009meme}.
The dataset has been previously used in Hawkes process models of social activity~\cite{farajtabar2016multistage,zhou2013learning}. 

We create three instances of change-point detection scenarios from the Memetracker dataset using the following common procedure. First, we identify a key word associated with a piece of news occurred at $\kappa$. Second, we identify the top $n$ websites which have the most mentions of the selected key word in a time window $[t_{\min}, t_{\max}]$ around the news break time $\kappa$ (i.e., $\kappa \in [t_{\min}, t_{\max}]$). Third, we extract all articles with time stamps within $[t_{\min}, t_{\max}]$ containing the keyword, and each article is treated as an event in the point process. Fourth, we construct the directed edges between the websites based on the reported linking structure. These instances correspond to real world news whose occurrences are unexpected or uncertain, and hence can cause abrupt behavior changes of the blogs. The details of these instances are showed in table~\ref{tb:memetracker}. 

\begin{table}[h]
\vspace{-0.1in}
\centering
  \caption{Summary information for the extracted instance for change point detection from Memetracker dataset. The keywords are highlighted in red.} 
  \label{tb:memetracker}
  \setlength{\tabcolsep}{0.03in}
  \vspace{-2mm}
  \begin{tabular}{|c|c|c|c|c|}
      \hline
      real world news & $n$ & $\kappa$ & $t_{\min}$ & $t_{\max}$ \\
      \hline
      {\color{red}Obama} elected president & 80 & 11/04/08 & 11/02/08 & 11/05/08 \\
      Ceasefire in {\color{red}Israel} & 60 & 01/17/09 & 01/13/09 & 01/17/09 \\
      {\color{red}Olympics} in Beijing & 100 & 08/05/08 &  08/02/08 & 08/05/08 \\
      \hline
  \end{tabular}
  \vspace{-2mm}
\end{table}

The first piece of news corresponds to ``Barack Obama was elected as the 44th president of the United States\footnote{\url{https://en.wikipedia.org/wiki/United\_States\_presidential\_election,\_2008}}''. In this example, we also plot the largest connected component of the network as shown in Fig.~\ref{network}. It is notable that this subset includes the credible news agencies such as BBC, CNN, WSJ, Hufftingtonpost, Guardian, etc. As we show in Fig.~\ref{fig:real-memetracker}-(a), our algorithm can successfully pinpoint a change right at the time that Obama was elected. The second piece of news corresponds to ``the ceasefire in Israel-Palestine conflict back in 2009''. Our algorithm detects a sharp change in the data, which is aligned closely to the time right before the peak of the war and one day before the Israel announces a unilateral ceasefire in the Gaza War back in 2009\footnote{\url{http://news.bbc.co.uk/2/hi/middle\_east/7835794.stm}}. The third piece of news corresponds to ``the summer Olympics game in Beijing''. 
Fig.~\ref{fig:real-memetracker}-(c) shows the evolution of our statistics. The change-point detected is 2-3 days before the opening ceremony where all the news websites started to talk about the event\footnote{\url{https://en.wikipedia.org/wiki/2008\_Summer\_Olympics}}.

\vspace{-0.21in}

\section{Summary}\label{sec:summary}

In this paper, we have studied a set of likelihood ratio statistics for detecting change in a sequence of event data over networks. To the best of our knowledge, our work is the first to study change-point detection for network Hawkes process. We adopted the network Hawkes process for the event streams to model self- and mutual- excitation between nodes in the network, and cast the problem in sequential change-point detection frame, and derive the likelihood ratios under several models. We have also presented an EM-like algorithm, which can efficiently compute the likelihood ratio statistic online. The distributed nature of the algorithm enables it to be implemented on larger networks. 
Highly accurate theoretical approximations for the false-alarm-rate, i.e., the average-run-length (ARL) for our algorithms are derived. 
%
We demonstrated the performance gain of our algorithms relative to two baselines, which represent the current main approaches to this problem. Finally, we also tested the performance of the proposed method on synthetic and real data.



\vspace{-4mm}

\bibliographystyle{ieee}
\bibliography{sample,yao_research_statement}

\begin{thebibliography}{10}
\providecommand{\url}[1]{#1}
\csname url@samestyle\endcsname
\providecommand{\newblock}{\relax}
\providecommand{\bibinfo}[2]{#2}
\providecommand{\BIBentrySTDinterwordspacing}{\spaceskip=0pt\relax}
\providecommand{\BIBentryALTinterwordstretchfactor}{4}
\providecommand{\BIBentryALTinterwordspacing}{\spaceskip=\fontdimen2\font plus
\BIBentryALTinterwordstretchfactor\fontdimen3\font minus
  \fontdimen4\font\relax}
\providecommand{\BIBforeignlanguage}[2]{{%
\expandafter\ifx\csname l@#1\endcsname\relax
\typeout{** WARNING: IEEEtran.bst: No hyphenation pattern has been}%
\typeout{** loaded for the language `#1'. Using the pattern for}%
\typeout{** the default language instead.}%
\else
\language=\csname l@#1\endcsname
\fi
#2}}
\providecommand{\BIBdecl}{\relax}
\BIBdecl

\bibitem{laptev2015generic}
N.~Laptev, S.~Amizadeh, and I.~Flint, ``Generic and scalable framework for
  automated time-series anomaly detection,'' in \emph{Proceedings of the 21th
  ACM SIGKDD International Conference on Knowledge Discovery and Data
  Mining}.\hskip 1em plus 0.5em minus 0.4em\relax ACM, 2015, pp. 1939--1947.

\bibitem{levy2009detection}
C.~Leduc and F.~Roueff, ``Detection and localization of change-points in
  high-dimensional network traffic data,'' \emph{The Annals of Applied
  Statistics}, pp. 637--662, 2009.

\bibitem{christakis2010social}
N.~Christakis and J.~Fowler, ``Social network sensors for early detection of
  contagious outbreaks,'' \emph{PloS one}, vol.~5, no.~9, p. e12948, 2010.

\bibitem{rodriguez2011uncovering}
M.~Rodriguez, D.~Balduzzi, and B.~Sch{\"o}lkopf, ``Uncovering the temporal
  dynamics of diffusion networks,'' in \emph{Proceedings of the 28th
  International Conference on Machine Learning (ICML)}, 2011.

\bibitem{myers2014bursty}
S.~Myers and J.~Leskovec, ``The bursty dynamics of the twitter information
  network,'' in \emph{Proceedings of the 23rd International Conference on World
  Wide Web}.\hskip 1em plus 0.5em minus 0.4em\relax ACM, 2014, pp. 913--924.

\bibitem{spikeTrain2003}
R.~Ratnam, J.~Goense, and M.~E. Nelson, ``Change-point detection in neuronal
  spike train activity,'' \emph{Neurocomputing}, 2003.

\bibitem{milling2015distinguishing}
C.~Milling, C.~Caramanis, S.~Mannor, and S.~Shakkottai, ``Distinguishing
  infections on different graph topologies,'' \emph{IEEE Transactions on
  Information Theory}, vol.~61, no.~6, pp. 3100--3120, 2015.

\bibitem{twitter}
``Twitter statistics,''
  \url{http://www.internetlivestats.com/twitter-statistics/}.

\bibitem{shen2012change}
J.~Shen and N.~Zhang, ``Change-point model on nonhomogeneous poisson processes
  with application in copy number profiling by next-generation dna
  sequencing,'' \emph{The Annals of Applied Statistics}, vol.~6, no.~2, pp.
  476--496, 2012.

\bibitem{zhang2014scanning}
N.~Zhang, B.~Yakir, C.~Xia, and D.~Siegmund, ``Scanning a poisson random field
  for local signals,'' \emph{arXiv preprint arXiv:1406.3258}, 2014.

\bibitem{herberts2004optimal}
T.~Herberts and U.~Jensen, ``Optimal detection of a change point in a poisson
  process for different observation schemes,'' \emph{Scandinavian Journal of
  Statistics}, vol.~31, no.~3, pp. 347--366, 2004.

\bibitem{StimbergRuttor2011}
F.~Stimberg, A.~Ruttor, M.~Opper, and G.~Sanguinetti, ``Inference in
  continuous-time change-point models,'' in \emph{Advances in Neural
  Information Processing Systems (NIPS)}, 2011.

\bibitem{pinto2015trend}
J.~Pinto, T.~Chahed, and E.~Altman, ``Trend detection in social networks using
  hawkes processes,'' in \emph{Proceedings of the 2015 IEEE/ACM International
  Conference on Advances in Social Networks Analysis and Mining 2015}.\hskip
  1em plus 0.5em minus 0.4em\relax ACM, 2015, pp. 1441--1448.

\bibitem{solo2012test}
V.~Solo and A.~Pasha, ``A test for independence between a point process and an
  analogue signal,'' \emph{Journal of Time Series Analysis}, vol.~33, no.~5,
  pp. 824--840, 2012.

\bibitem{FarDuGomValZhaSon14}
M.~Farajtabar, N.~Du, M.~Rodriguez, I.~Valera, H.~Zha, and L.~Song, ``Shaping
  social activity by incentivizing users,'' in \emph{Advances in Neural
  Information Processing Systems (NIPS)}, 2014.

\bibitem{xu2016learning}
H.~Xu, M.~Farajtabar, and H.~Zha, ``Learning granger causality for hawkes
  processes,'' \emph{arXiv preprint arXiv:1602.04511}, 2016.

\bibitem{yang2013mixture}
S.~Yang and H.~Zha, ``Mixture of mutually exciting processes for viral
  diffusion,'' in \emph{Proceedings of the 30th International Conference on
  Machine Learning (ICML)}, 2013.

\bibitem{zhou2013learning}
K.~Zhou, H.~Zha, and L.~Song, ``Learning triggering kernels for
  multi-dimensional hawkes processes,'' in \emph{Proceedings of the 30th
  International Conference on Machine Learning (ICML)}, 2013.

\bibitem{rajaram2005poisson}
S.~Rajaram, T.~Graepel, and R.~Herbrich, ``Poisson-networks: A model for
  structured point processes,'' in \emph{Proceedings of the 10th International
  Workshop on Artificial Intelligence and Statistics}.\hskip 1em plus 0.5em
  minus 0.4em\relax Citeseer, 2005, pp. 277--284.

\bibitem{linderman2014discovering}
S.~Linderman and R.~Adams, ``Discovering latent network structure in point
  process data,'' \emph{arXiv preprint arXiv:1402.0914}, 2014.

\bibitem{hall2014tracking}
E.~Hall and R.~Willett, ``Tracking dynamic point processes on networks,''
  \emph{arXiv preprint arXiv:1409.0031}, 2014.

\bibitem{FarWanGomLiZhaSon15}
M.~Farajtabar, Y.~Wang, M.~Rodriguez, S.~Li, H.~Zha, and L.~Song, ``Coevolve: A
  joint point process model for information diffusion and network
  co-evolution,'' in \emph{Advances in Neural Information Processing Systems
  (NIPS)}, 2015.

\bibitem{detectAbruptChange93}
M.~Basseville and I.~V. Nikiforov, \emph{Detection of Abrupt Changes: Theory
  and Application}.\hskip 1em plus 0.5em minus 0.4em\relax Prentice Hall, 1993.

\bibitem{Tartakovsky2014}
A.~Tartakovsky, I.~Nikiforov, and M.~Basseville, \emph{Sequential Analysis:
  Hypothesis Testing and Changepoint Detection}.\hskip 1em plus 0.5em minus
  0.4em\relax Chapman and Hall/CRC, 2014.

\bibitem{Shewhart31}
A.~W. Shewhart, ``Economic control of quality of manufactured product,''
  \emph{Preprinted by ASQC quality press}, 1931.

\bibitem{Page1954}
E.~S. Page, ``Continuous inspection schemes,'' \emph{Biometrika}, vol.~41, no.
  1/2, pp. 100--115, June 1954.

\bibitem{Page1955}
------, ``A test for a change in a parameter occurring at an unknown point,''
  \emph{Biometrika}, vol.~42, no. 3/4, pp. 523--527, 1955.

\bibitem{Shiryaev1963}
W.~A. Shiryaev, ``On optimal methods in quickest detection problems,''
  \emph{Theory Prob. Appl.}, vol.~8, pp. 22 -- 46, Jan. 1963.

\bibitem{Roberts66}
S.~W. Roberts, ``A comparison of some control chart procedures,''
  \emph{Technometrics}, no.~8, pp. 411--430, 1966.

\bibitem{lai1995sequential}
T.~L. Lai, ``Sequential changepoint detection in quality control and dynamical
  systems,'' \emph{Journal of the Royal Statistical Society. Series B
  (Methodological)}, pp. 613--658, 1995.

\bibitem{ihler2006adaptive}
A.~Ihler, J.~Hutchins, and P.~Smyth, ``Adaptive event detection with
  time-varying poisson processes,'' in \emph{Proceedings of the 12th ACM SIGKDD
  international conference on Knowledge discovery and data mining (KDD)}, 2006.

\bibitem{mei2011early}
Y.~Mei, S.~Han, and K.~Tsui, ``Early detection of a change in poisson rate
  after accounting for population size effects,'' \emph{Statistica Sinica}, pp.
  597--624, 2011.

\bibitem{daley2007introduction}
D.~Daley and D.~Jones, \emph{An introduction to the theory of point processes:
  volume II: general theory and structure}.\hskip 1em plus 0.5em minus
  0.4em\relax Springer Science \& Business Media, 2007.

\bibitem{temporalPointProcess_2011}
J.~G. Ransmussen, ``Temporal point processes: the conditional intensity
  function,'' Lecture Notes, Jan. 2011.

\bibitem{influence_community}
N.~Barbieri, F.~Bonchi, and G.~Manco, ``Influence-based network-oblivious
  community detection,'' in \emph{IEEE 13th Int. Conf. on Data Mining (ICDM)},
  2013.

\bibitem{Siegmund1985}
D.~Siegmund, \emph{Sequential Analysis: Test and Confidence Intervals}.\hskip
  1em plus 0.5em minus 0.4em\relax Springer, Aug. 1985.

\bibitem{changepoint_new_book2014}
A.~Tartakovsky, I.~Nikiforov, and M.~Basseville, \emph{Sequential analysis:
  Hypothesis Testing and Changepoint Detection}.\hskip 1em plus 0.5em minus
  0.4em\relax Chapman and Hall/CRC, 2014.

\bibitem{simma2012modeling}
A.~Simma and M.~Jordan, ``Modeling events with cascades of poisson processes,''
  in \emph{Association for Uncertainty in Artificial Intelligence (UAI)}, 2012.

\bibitem{MCCasella04}
G.~Casella and C.~Robert, \emph{Monte Carlo Statistical Methods}.\hskip 1em
  plus 0.5em minus 0.4em\relax Springer, 2004.

\bibitem{StatsInference01}
G.~Casella and R.~L. Berger, \emph{Statistical inference}.\hskip 1em plus 0.5em
  minus 0.4em\relax Cengage Learning, 2001.

\bibitem{leskovec2009meme}
J.~Leskovec, L.~Backstrom, and J.~Kleinberg, ``Meme-tracking and the dynamics
  of the news cycle,'' in \emph{Proceedings of the 15th ACM SIGKDD
  international conference on Knowledge discovery and data mining}.\hskip 1em
  plus 0.5em minus 0.4em\relax ACM, 2009, pp. 497--506.

\bibitem{farajtabar2016multistage}
M.~Farajtabar, X.~Ye, S.~Harati, L.~Song, and H.~Zha, ``Multistage campaigning
  in social networks,'' \emph{arXiv preprint arXiv:1606.03816}, 2016.

\bibitem{yakir2013extremes}
B.~Yakir, \emph{Extremes in random fields: a theory and its
  applications}.\hskip 1em plus 0.5em minus 0.4em\relax John Wiley \& Sons,
  2013.

\bibitem{SiegmundVenkatraman1995}
D.~Siegmund and E.~S. Venkatraman, ``Using the generalized likelihood ratio
  statistic for sequential detection of a change-point,'' \emph{Ann. Statist.},
  vol.~23, no.~1, pp. 255 -- 271, 1995.

\bibitem{SiegmundYakir2008b}
D.~O. Siegmund and B.~Yakir, ``Detecting the emergence of a signal in a noisy
  image,'' \emph{Statistics and Its Inference}, vol.~1, pp. 3--12, 2008.

\bibitem{hawkes1971spectra}
A.~Hawkes, ``Spectra of some self-exciting and mutually exciting point
  processes,'' \emph{Biometrika}, vol.~58, no.~1, pp. 83--90, 1971.

\bibitem{bacry2014second}
E.~Bacry and J.~Muzy, ``Second order statistics characterization of hawkes
  processes and non-parametric estimation,'' \emph{arXiv preprint
  arXiv:1401.0903}, 2014.

\bibitem{daley2004scoring}
D.~Daley and D.~Jones, ``Scoring probability forecasts for point processes: the
  entropy score and information gain,'' \emph{Journal of Applied Probability},
  pp. 297--312, 2004.

\bibitem{vere1998probabilities}
D.~Jones, ``Probabilities and information gain for earthquake forecasting,''
  \emph{Selected Papers From Volume 30 of Vychislitel'naya Seysmologiya}, pp.
  104--114, 1998.

\end{thebibliography}


\clearpage
\appendices

\section{Proofs of Theorem 1}\label{ARL_proof}

We show the one dimensional case as an example. The following informal derivation justifies the theorem. Let $t$ be the current time, and let the window-length be $L$. Recall our notations: $\mathbb{P}$ and $\mathbb{E}$ denote the probability measure and the expectation under the null hypothesis; $\mathbb{P}_{t, \tau, \alpha}$ and $\mathbb{E}_{t, \tau, \alpha}$ denote the probability measure and the expectation under the alternative hypothesis. We also use the notation use $\mathbb{E} [U; A] =  \mathbb{E}[U\mathbb{I}\{A\}]$ to denote conditional expectation.

First, to evaluate ARL, we study the probability that the detection statistic exceeds the threshold before a given time $m$.  We will use the change-of-measure technique \cite{yakir2013extremes}. Under the null hypothesis, the boundary crossing probability can be written as
\begin{align}
& \mathbb{P} \left[ \sup_{t < m, \alpha \in \Theta}  \ell_{t,\tau, \alpha} >x\right] =\mathbb{E}\left[ 1;  \sup_{ t < m, \alpha \in \Theta}  \ell_{t,\tau, \alpha} >x  \right] \nonumber\\
= &~ \mathbb{E}\left[ \underbrace{  \frac{\int_{t}  \int_{\alpha \in \Theta }e^{ \ell_{t,\tau, \alpha}} dt d\alpha}{ \int_{ t'} \int_{\alpha' \in \Theta } e^{\ell_{t',\tau', \alpha'}} dt' d\alpha'}   }_{=1}; \sup_{t <m, \alpha \in \Theta}  \ell_{t,\tau, \alpha} >x \right] \nonumber\\
= & ~ \int_{t} \int_{\alpha \in \Theta}\mathbb{E}\left[ \frac{e^{ \ell_{t,\tau, \alpha }}}{\int_{t'} \int_{\alpha' \in \Theta }e^{\ell_{t',\tau', \alpha'}} dt'd\alpha'  }; \right.\nonumber\\
& \left.  \qquad \sup_{t < m, \alpha \in \Theta}  \ell_{t,\tau, \alpha} >x  \right]  dt d\alpha\label{last_eqn}
\end{align}
where the last equality follows from changing the order of summation and the expectation. Using change-of-measure
$
d \, \mathbb{P}= e^{ -\ell_{t,\tau, \alpha}} d\,\mathbb{P}_{t, \tau, \alpha},
$, the last equation (\ref{last_eqn}) can be written as 
\begin{equation}
\begin{split}
 &  \int_{t}  \int_{\alpha \in \Theta}\mathbb{E}_{t, \tau, \alpha}\left[ \frac{1}{\int_{t'} \int_{\alpha' \in \Theta} e^{\ell_{t',\tau', \alpha'}}   dt'd\alpha' };  \right.\nonumber\\
& \left.  \qquad \sup_{t < m, \alpha}  \ell_{t<m,\tau, \alpha} >x \right] dt d\alpha 
\end{split}
\label{app_eqn1}
\end{equation}
After rearranging each term and introducing additional notations, the last equation above (\ref{app_eqn1}) can be written as
\begin{equation}  \label{changeofmeasure}
\begin{split}
 e^{-x} \int_{t} \int_{\alpha \in \Theta}\mathbb{E}_{t,\tau, \alpha}
\left[ \frac{\mathcal{M}_{t,\tau, \alpha} }{\mathcal{S}_{t,\tau, \alpha}} e^{-[\tilde{l}_{t,\tau, \alpha} +m_{t,\tau, \alpha}]}; \right.\\
\left.\quad ~~~ \tilde{l}_{t,\tau, \alpha} +M_{t,\tau, \alpha} > 0\right]  dt d\alpha
\end{split}
\end{equation}
where 
\begin{align*}
& \mathcal{M}_{t,\tau, \alpha } = \sup_{t'} e^{   \ell_{t', \tau', \alpha} -\ell_{t, \tau, \alpha} }, \\ &\mathcal{S}_{t,\tau, \alpha } = \int_{t'}  e^{\ell_{t', \tau', \alpha} -\ell_{t,\tau, \alpha}} dt',  \\
& \tilde{l}_{t,\tau, \alpha} =  \ell_{t,\tau, \alpha} -x, \quad M_{t,\tau, \alpha} =  \mbox{log} \mathcal{M}_{t,\tau, \alpha}.
\end{align*}
The final expression is also based on the following approximation. When the interval slightly changes from $(\tau', t')$ to $(\tau, t)$, $\alpha'$ changes little under the null hypothesis since $\alpha'$ is estimated from data stored in $(\tau', t')$. Therefore, in the small neighborhood of $(\tau', t')$, we may regard $\alpha$ as a constant. This leads to an approximation:
\begin{align}
\frac{\sup_{t', \alpha'} e^{   \ell_{t', \tau', \alpha'} -\ell_{t, \tau, \alpha} } }{\int_{t'}\int_{\alpha'}  e^{   \ell_{t', \tau', \alpha'} -\ell_{t, \tau, \alpha} } dt' d\alpha' }
\approx \frac{\sup_{t'} e^{   \ell_{t', \tau', \alpha} -\ell_{t, \tau, \alpha} } }{\int_{t'} e^{   \ell_{t', \tau', \alpha} -\ell_{t, \tau, \alpha} } dt' }.
\end{align}

The representation (\ref{changeofmeasure}) consists of a large deviation exponential decay, given by $e^{-x}$, and lower order contribution that reside in the expectation. The random variables in expectation are further dissected into random variables that are influenced mainly by local perturbations and the random variable that captures the main part of the variability. We can show that the random variable $\tilde{l}_{t,\tau, \alpha}$, which is referred to as the ``global term'',  has an expectation  $(t-\tau)I-x$ under the alternative, and a variance $(t-\tau) \sigma^2$. The other random variables are $\mathcal{M}_{t, \tau, \alpha}$ and $\mathcal{S}_{t, \tau, \alpha}$ and its log $m_{t, \tau, \alpha}$, which are determined by the so-called ``local field'' $\{ \ell_{t', \tau', \alpha}-\ell_{t, \tau, \alpha}\}$ are parameterized by $t'$ when we fix $t-\tau$.

Define $\widehat{\mathcal{M}}_{t, \tau, \alpha}$ and $\widehat{\mathcal{S}}_{t, \tau, \alpha}$ by restricting the integral and maximization only to the range of parameter values that are at most $\epsilon$ away from either $\tau$ or $t$. 
By localization theorem (Theorem 5.2 in \cite{yakir2013extremes}), under certain conditions, the local and global components are asymptotically independent, which informs:
\begin{equation}
\begin{split}
&\mathbb{E}_{t,\tau, \alpha}
\left[ \frac{\mathcal{M}_{t,\tau, \alpha} }{\mathcal{S}_{t,\tau, \alpha}} e^{-[\tilde{l}_{t,\tau, \alpha} +m_{t,\tau, \alpha}]} ; \tilde{l}_{t,\tau, \alpha} +M_{t,\tau, \alpha} > 0\right] \\
&\approx \mathbb{E}_{t, \tau, \alpha}
\left[ \frac{\widehat{\mathcal{M}}_{t, \tau, \alpha}}{\widehat{\mathcal{S}}_{t, \tau, \alpha}} \right]  \frac{1}{\sqrt{(t-\tau)\sigma^2}}\phi\left( \frac{(t-\tau)I-x}{\sqrt{(t-\tau)\sigma^2}} \right).
\end{split}
\end{equation}

We can further prove (see Appendix \ref{localfield}) that the expected local rate $\mathbb{E}_{t, \tau, \alpha}
\left[\mathcal{M}_{t, \tau, \alpha}/\mathcal{S}_{t, \tau, \alpha} \right] $ only depends on $\alpha$ and is independent of $t$:
\begin{equation} \label{ratio}
 \mathbb{E}_{t, \tau, \alpha}
\left[ \frac{\widehat{\mathcal{M}}_{t, \tau, \alpha}}{\widehat{\mathcal{S}}_{t, \tau, \alpha}} \right] \approx \nu \left( \frac{2 \xi}{\eta^2}\right),
\end{equation}
for $\xi$ and $\eta^2$ defined in (\ref{def_xi_eta}). The conditions for which these approximations hold are given on Page 56 of \cite{yakir2013extremes}, and in particular, we need to compute the local rate, which is done in Appendix \ref{localfield}.

Hence, the probability in (\ref{app_eqn1}) should be
\begin{equation}\label{tailprob}
\begin{split}
 & \hspace{-0.1in}   e^{-x} \int_{t}\int_{\alpha \in (0, 1)}   \nu \left( \frac{2 \xi}{\eta^2}\right)  \frac{1}{\sqrt{(t-\tau)\sigma^2}}\phi\left( \frac{(t-\tau)I-x}{\sqrt{(t-\tau)\sigma^2}} \right)     d\alpha dt\\
& \hspace{-0.1in} \approx m   e^{-x} \int_{\alpha \in (0, 1)}   \nu \left( \frac{2 \xi}{\eta^2}\right)  \frac{1}{\sqrt{(t-\tau)\sigma^2}}\phi\left( \frac{(t-\tau)I-x}{\sqrt{(t-\tau)\sigma^2}} \right)   d\alpha. 
\end{split}
\end{equation}
Define $C$ to be the factor that multiplies $m$ in the equation above.

Next, since
\[
\mathbb{P} \left[ \sup_{t<m, \alpha \in (0, 1)}  \ell_{t,\tau, \alpha} >x \right] =
\mathbb{P} \left[ T<m \right],
\]
we can relate (\ref{tailprob}) to the ARL $\mathbb{E}[T]$. Note that we can write the tail probability (\ref{tailprob}) in a form $\mathbb{P} \left[ T<m \right]= mC[1 + o(1)]$. When $x\rightarrow \infty$, from the arguments in \cite{SiegmundVenkatraman1995,SiegmundYakir2008b}, we see that the stopping time $T$ is asymptotically exponentially distributed and $\mathbb{P}[T <m] \to 1- \mbox{exp}(-Cm)$. As a result, $\mathbb{E}[T] \sim C^{-1}$, which is equivalent to (\ref{ARL_expr}). Derivations for $I$, $\sigma^2$, $\xi$ and $\eta^2$ will be talked about in Appendix \ref{KL}.

\section{First- and second-order statistics of Hawkes processes}

We first to characterize the first- and second-order statistics for Hawkes processes, which are useful for  evaluating $I$, $\sigma^2$, $\xi$ and $\eta^2$. For the defined one-dimensional Hawkes processes and multi-dimensional Hawkes processes, if we choose kernel function $\varphi(t)$ with $\int \varphi(t) dt =1$, we will have the following two lemmas that are derived from the results in \cite{hawkes1971spectra}. \cite{bacry2014second}:
\begin{lemma}[First-order statistics for Hawkes processes]\label{firstorder}
If the influence parameters satisfy $\alpha \in (0,1)$ (one-dimension) or the spectral norm $\rho( \bm{A} )<1$ (high-dimension), then the Hawkes processes are asymptotically stationary and with stationary intensity $m_t= \mathbb{E}_{\mathcal{H}_{t} } [\bm{\lambda}_t]$. We further have that for the one-dimensional case
\[\bar{\lambda}:=\lim_{t \to \infty} m_t = \frac{\mu}{1-\alpha} \] and for the multi-dimensional case
\[\bar{\bm{\lambda}} :=\lim_{t \to \infty} \bm{m}_t = (\bm{I}-\bm{A})^{-1} \bm{\mu}.\]\end{lemma}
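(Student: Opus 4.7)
The plan is to derive the stationary intensity by taking expectations in the definition of $\bm{\lambda}_t$ and then passing to the limit $t \to \infty$. Since the multi-dimensional case subsumes the one-dimensional one, I will carry out the argument in the matrix form and then specialize.

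First I would use the innovation property of the counting process to write
\begin{equation}
\mathbb{E}[d\bm{N}_\tau \mid \mathcal{H}_\tau] = \bm{\lambda}_\tau\, d\tau,
\end{equation}
so that taking an unconditional expectation of the intensity equation
\begin{equation}
\bm{\lambda}_t = \bm{\mu} + \bm{A}\int_0^t \varphi(t-\tau)\, d\bm{N}_\tau
\end{equation}
and applying Fubini yields the Volterra integral equation
\begin{equation}
\bm{m}_t = \bm{\mu} + \bm{A}\int_0^t \varphi(t-\tau)\, \bm{m}_\tau\, d\tau,
\end{equation}
where $\bm{m}_t := \mathbb{E}[\bm{\lambda}_t]$. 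This is the workhorse equation; everything else consists of extracting its large-$t$ behaviour.

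Next I would argue existence of the limit $\bar{\bm{\lambda}} := \lim_{t\to\infty}\bm{m}_t$ under the stability condition. The cleanest route is via Laplace transforms: applying the (matrix) Laplace transform on both sides gives $\widehat{\bm{m}}(s) = s^{-1}\bm{\mu} + \bm{A}\widehat{\varphi}(s)\widehat{\bm{m}}(s)$, hence $\widehat{\bm{m}}(s) = (\bm{I} - \bm{A}\widehat{\varphi}(s))^{-1}\, s^{-1}\bm{\mu}$. Because $\widehat{\varphi}(0^+)=\int_0^\infty\varphi(s)\,ds = 1$ and $\rho(\bm{A})<1$ (so that $\bm{I}-\bm{A}$ is invertible with $\|\bm{A}^k\|\to 0$), a Tauberian/final-value argument gives $\lim_{t\to\infty}\bm{m}_t = \lim_{s\to 0^+} s\widehat{\bm{m}}(s) = (\bm{I}-\bm{A})^{-1}\bm{\mu}$. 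Alternatively, one can expand the resolvent as a Neumann series $(\bm{I}-\bm{A}\widehat{\varphi})^{-1}=\sum_{k\ge 0}(\bm{A}\widehat{\varphi})^k$, which inverts term-by-term into a convergent series of convolution powers of $\varphi$, each integrating to $1$; summing the coefficients gives the same limit.

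Finally, once the limit exists, a direct derivation suffices: letting $t\to\infty$ in the Volterra equation, using $\int_0^\infty\varphi(s)\,ds = 1$ and $\bm{m}_\tau\to\bar{\bm{\lambda}}$ (justified by dominated convergence, since the tail of $\varphi$ is integrable and $\bm{m}_\tau$ is uniformly bounded under $\rho(\bm{A})<1$), I obtain the fixed-point equation $\bar{\bm{\lambda}} = \bm{\mu} + \bm{A}\bar{\bm{\lambda}}$, which yields $\bar{\bm{\lambda}} = (\bm{I}-\bm{A})^{-1}\bm{\mu}$. The one-dimensional formula $\bar\lambda = \mu/(1-\alpha)$ is the scalar specialization. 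The main obstacle is the rigorous justification of convergence of $\bm{m}_t$ (not merely the identification of its would-be limit); everything else is mechanical. I would handle this step by the Laplace-transform route above, which also makes transparent why the stationarity condition on the spectral radius is the correct hypothesis — it ensures that the resolvent kernel decays, so that the effect of the deterministic initial condition $\bm{N}_0\equiv 0$ is forgotten.
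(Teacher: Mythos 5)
Your proposal is correct and follows essentially the same route as the paper: take expectations of the intensity equation to obtain the Volterra/renewal equation for $\bm{m}_t$, pass to the Laplace transform (equivalently the Neumann series of convolution powers of $\varphi$, which the paper writes out explicitly as $\Psi$), and apply the final-value theorem to get $(\bm{I}-\bm{A})^{-1}\bm{\mu}$. The only cosmetic difference is that the paper specializes immediately to the exponential kernel with $\widehat{\varphi}(z)=\beta/(z+\beta)$, whereas you keep $\widehat{\varphi}$ general and use only $\widehat{\varphi}(0^+)=\int_0^\infty\varphi=1$, which is a mild and harmless generalization.
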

\begin{lemma}[Second-order statistics for Hawkes processes]\label{secondorder}
For stationary Hawkes processes, the covariance intensity, which is defined as: 
 \begin{align}
\bm{c} (t'-t) = \mbox{Cov} \left[  \bm{ \lambda}_t, \bm{\lambda}_{t'}  \right]=   \frac{\mbox{Cov} \left[ d\bm{N}_t, d\bm{N}_{t'} \right] }{dtdt'}
\end{align}
will only depend on $t'-t$. Then for one-dimensional Hawkes processes, we have:
\begin{align}
c(\tau) = \begin{cases}
\frac{\alpha \beta (2-\alpha) \mu}{2(1-\alpha)^2} e^{-\beta(1-\alpha)\tau}, &  \tau >0;\\
\frac{\mu}{1-\alpha} \delta (\tau), & \tau=0; \\
c(-\tau), &  \tau< 0.\\
\end{cases}
\end{align}
for the multi-dimensional Hawkes processes
\begin{align*}
\bm{c}(\bm{\tau}) = \begin{cases}
 \beta e^{-\beta (\bm{I}- \bm{A}) \tau} \bm{A} \left( \bm{I} +\frac{1}{2} (\bm{I}-\bm{A})^{-1}
\bm{A}  \right) \\
\quad \cdot \mbox{diag} \left(  (\bm{I}-\bm{A})^{-1} \bm{\mu}\right), &  \bm{\tau }>0;\\
\mbox{diag} \left(  (\bm{I}-\bm{A})^{-1} \bm{\mu}\right) \delta(\bm{\tau}), & \bm{\tau}=0; \\
\bm{c}(-\bm{\tau})^{\intercal}, & \bm{ \tau} < 0.\\
\end{cases}
\end{align*}

\end{lemma}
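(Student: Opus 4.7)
The plan is to derive the covariance intensity by combining stationarity with the linear self-exciting structure of the Hawkes intensity, following the spectral program of Hawkes (1971) specialized to the exponential kernel. First, I would establish that $\bm{c}(t'-t)$ depends only on the lag $\tau = t' - t$: since Lemma \ref{firstorder} gives asymptotic stationarity, starting from the stationary regime the joint law of $(d\bm{N}_t, d\bm{N}_{t'})$ is translation-invariant, so the covariance intensity is a function of $\tau$ alone and the stated case $\bm{c}(\tau) = \bm{c}(-\tau)^{\intercal}$ for $\tau < 0$ follows immediately from the definition of covariance.

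Second, I would derive a Wiener--Hopf type integral equation for $\bm{c}$. Decomposing $d\bm{N}_t = \bm{\lambda}_t\, dt + d\bm{M}_t$ with $\bm{M}_t$ the compensated counting martingale, the fact that a simple point process satisfies $(dN_t)^2 = dN_t$ forces $\mathbb{E}[d\bm{M}_t\, d\bm{M}_{t'}^{\intercal}] = \mbox{diag}(\bar{\bm{\lambda}})\,\delta(\tau)\, dt\, dt'$, which produces the Dirac component at $\tau = 0$. For $\tau > 0$, I would expand $\bm{\lambda}_{t+\tau} = \bm{\mu} + \bm{A}\int_{-\infty}^{t+\tau} \varphi(t+\tau - s)\, d\bm{N}_s$ and take covariance with $d\bm{N}_t$, obtaining
\begin{equation*}
\bm{c}(\tau) = \bm{A}\int_{-\infty}^{\tau} \varphi(\tau - u)\, \bm{c}(u)\, du,
\end{equation*}
where the atomic piece $\bm{c}(0) = \mbox{diag}(\bar{\bm{\lambda}})\,\delta$ contributes a $\varphi(\tau)\bm{A}\,\mbox{diag}(\bar{\bm{\lambda}})$ boundary term.

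Third, for $\varphi(u) = \beta e^{-\beta u}$, I would split the integral into $(-\infty, 0)$, whose contribution is obtained from the symmetry $\bm{c}(u) = \bm{c}(-u)^{\intercal}$, and $[0,\tau]$, then differentiate in $\tau$ to convert the integral equation into the linear matrix ODE $\bm{c}'(\tau) = -\beta(\bm{I} - \bm{A})\,\bm{c}(\tau)$ on $\tau > 0$. The unique decaying solution is $\bm{c}(\tau) = \beta e^{-\beta(\bm{I}-\bm{A})\tau}\,\bm{K}$, and I would pin down $\bm{K}$ by matching the jump condition inherited from the $\delta$ at the origin; this yields $\bm{K} = \bm{A}\bigl(\bm{I} + \tfrac{1}{2}(\bm{I}-\bm{A})^{-1}\bm{A}\bigr)\mbox{diag}(\bar{\bm{\lambda}})$, matching the claim. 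The one-dimensional formula is then the scalar specialization with $\bm{A} = \alpha$ and $\bar{\bm{\lambda}} = \mu/(1-\alpha)$, which collapses the matrix expression to $\frac{\alpha\beta(2-\alpha)\mu}{2(1-\alpha)^2}e^{-\beta(1-\alpha)\tau}$.

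The main obstacle, and where the proof requires real care, is the distributional bookkeeping of the Dirac mass at $\tau = 0$: one must consistently treat $\mbox{Cov}[d\bm{N}_t, d\bm{N}_{t'}]/(dt\, dt')$ as a matrix-valued tempered distribution, track how its atom propagates through the convolution on the right-hand side of the integral equation, and recover the exact normalization of the exponential factor. A clean cross-check, which I would use to verify the coefficient, is to compute the Laplace or Fourier transform of $\bm{c}$ (the Bartlett spectrum) and confirm it equals $(\bm{I} - \bm{A}\hat{\varphi}(\omega))^{-1}\mbox{diag}(\bar{\bm{\lambda}})(\bm{I} - \bm{A}^{\intercal}\hat{\varphi}(-\omega))^{-1}$, whose inverse transform under the exponential kernel produces precisely the stated closed form.
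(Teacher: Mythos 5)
Your proposal is correct, and it sets up the problem exactly as the paper does: both arguments reduce the claim to the stationary renewal equation $\bm{c}(\tau) = \bm{A}\int_{-\infty}^{\tau}\varphi(\tau-u)\,\bm{c}(u)\,du$ for $\tau>0$, split the integral into the atomic contribution $\varphi(\tau)\bm{A}\,\mbox{diag}(\bar{\bm{\lambda}})$, the $(-\infty,0)$ piece handled via $\bm{c}(-u)=\bm{c}(u)^{\intercal}$, and the $[0,\tau]$ convolution. (Your martingale justification of the atom, via $(dN_t)^2=dN_t$, is actually cleaner than the paper's, which simply asserts $\bm{c}(0)=\mbox{diag}(\bar{\bm{\lambda}})\delta$.) Where you genuinely diverge is in solving this equation: the paper Laplace-transforms both sides, evaluates the resulting identity at $z=\beta$ to eliminate the unknown constant $\widehat{\bm{c}}(\beta)=\tfrac{1}{2}(\bm{I}-\bm{A})^{-1}\bm{A}\,\mbox{diag}(\bar{\bm{\lambda}})$, and then inverts; you instead exploit memorylessness of the exponential kernel to differentiate the equation into the matrix ODE $\bm{c}'(\tau)=-\beta(\bm{I}-\bm{A})\bm{c}(\tau)$ and fix the constant by self-consistency at $\tau\to 0^{+}$. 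Both work: I checked that your route gives $\bm{c}(0^{+})=\tfrac{\beta}{2}(2\bm{I}-\bm{A})(\bm{I}-\bm{A})^{-1}\bm{A}\,\mbox{diag}(\bar{\bm{\lambda}})$, which equals the paper's $\beta\bm{A}\bigl(\bm{I}+\tfrac{1}{2}(\bm{I}-\bm{A})^{-1}\bm{A}\bigr)\mbox{diag}(\bar{\bm{\lambda}})$ since all factors are functions of $\bm{A}$ and commute. One small caution: determining $\bm{K}$ is not purely a ``jump condition'' at the origin --- the term $\bm{A}\int_{0}^{\infty}\varphi(\tau+s)\bm{c}(s)\,ds$ coming from the negative-lag half-line involves the entire unknown solution, so you must substitute the exponential ansatz back into the full integral equation and solve a linear matrix equation for $\bm{c}(0^{+})$; also, uniqueness comes from this self-consistency rather than from selecting ``the decaying solution,'' since every solution of the ODE decays when $\rho(\bm{A})<1$. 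The ODE route is more elementary but tied to the exponential kernel; the Laplace-transform route generalizes more readily to other kernels and yields the Bartlett-spectrum cross-check you mention as a byproduct.
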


\begin{proof}[Proof of Lemma \ref{firstorder}]
For multi-dimensional Hawkes processes, by mean field approximation and define $\bm{m}_t = \mathbb{E}_{\mathcal{H}_{t}} [\bm{\lambda}_t]$, we have:
\begin{align}
\bm{m}_t =  \bm{\mu} + \bm{A} \int_{-\infty}^{t}\varphi (t-s) \bm{m}_s ds
\end{align}
which can be written as
\begin{align} \label{meanintensity}
\bm{m}_t = \left( \bm{I} + \sum_{n=1}^{\infty} \bm{A}^n \int_{-\infty}^t \varphi^{(\star n)}(s) ds \right) \bm{\mu}.
\end{align}
where $\star$ denotes the convolution operation, and $\varphi^{(\star n)}$ denote the $n$-fold convolution. 
Let
$\Psi (t) = \bm{A} \varphi(t)+ \bm{A} ^2 \varphi(t)\star\varphi(t)+\bm{A} ^3\varphi(t)\star\varphi(t)\star\varphi(t)+\cdots=\sum_{n=1}^{\infty} \bm{A}^{n}\varphi^{(\star n)}(t).
$
And we can write (\ref{meanintensity}) as:
\[
\bm{m}_t = \left( \bm{I} + \int_{-\infty}^t \Psi (s)  ds \right) \bm{\mu}.
\]
Given a function $f(t)$, we denote its Laplace transform $\mathcal{L}(\cdot)$ as:
\[
\widehat{f}(z) =\mathcal{L} (f(t))=\int_{-\infty}^{\infty} f(t)e^{-zt}dt.
\]
Next, apply Laplace transform to both sides of equation (\ref{meanintensity}). Clearly
\[
\widehat{\bm{m}}(z)=\frac{1}{z}(\bm{I}-\frac{ \beta}{z+\beta}\bm{A})^{-1} \bm{\lambda}_0,
\]
where
\[
\widehat{\Psi}(z) =\sum_{n=1}^{\infty} \left( \frac{\beta}{z+\beta} \right)^n\cdot \bm{A}^n
=(\bm{I}-\frac{ \beta}{z+\beta}\bm{A})^{-1}-\bm{I}.
\]
By the property of Laplace transformation,
\begin{align}
\bar{ \bm{\lambda} }:=\lim_{t \to \infty} \bm {m}_t= \lim_{z\to 0} z\widehat{\bm{m}}(z)=\left( \bm{I}-\bm{A}\right)^{-1} \bm{\mu}.
\end{align}
For a special case where $d=1$, we have $\bar{ \lambda} = \mu/(1-\alpha)$.
\end{proof}
\begin{proof}[Proof for Lemma \ref{secondorder}]
For $\bm{\tau}>0$, we have:
\begin{equation*}
\begin{split}
\bm{c}(\bm{\tau})
&=\frac{\mathbb{E} \left[ d\bm{N}_{t+\tau} d\bm{N}_{t}^{\intercal} \right]}{(dt)^2}-\bar{ \bm{\lambda}} \bar{ \bm{\lambda}}^{ \intercal} = \mathbb{E} \left[ \bm{\lambda}_{t+\tau}  \frac{d \bm{N}_{t}^{\intercal} }{dt}\right]-\bar{ \bm{\lambda}} \bar{ \bm{\lambda}}^{\intercal} \\
& = \mathbb{E} \left[ \left( \bm{\mu}+\bm{A}\int_{-\infty}^{t+\tau} \varphi (t+\tau-s) d\bm{N}_s \right) \frac{ d\bm{N}_{t}^{\intercal} }{dt}\right]-\bar{ \bm{\lambda}} \bar{ \bm{\lambda}}^{\intercal} \\
& = \bm{A} \int_{-\infty}^{\tau} \varphi(\tau-s)\bm{c}(s)ds \\
& = \bm{A}\varphi(\tau)\mbox{diag}(\bar{ \bm{ \lambda}}) +\bm{A}\int_{-\infty}^{\tau} \varphi(\tau-s) \bm{c}(s) ds \\
& = \bm{A}\varphi(\tau)\mbox{diag}(\bar{ \bm{\lambda}}) +\bm{A}\int_{0}^{\infty} \varphi(\tau+s) \bm{c}(s) ds \\
&~~+\bm{A} \int_0^{\tau} \varphi(\tau-s) \bm{c}(s) ds.
\end{split}
\end{equation*}
For the last two equalities, we are using the relation, $\bm{c}(-\tau)=\bm{c}(\tau)^{\intercal}$ and the fact that when $\tau = 0$ 
$
\bm{c} (\tau)=\mbox{diag}(\bar{ \bm{\lambda}}) \delta(\tau).
$
Note that for Poisson processes, we have $\bm{c}(\tau)=\mbox{diag}(\bm{\lambda}) \delta(\tau)$.
Now substituting $\varphi(\tau)=\beta e^{-\beta \tau}$ into the above, we have:
\begin{equation} \label{cov}
\begin{split}
&\bm{c}(\tau) = \bm{A}\beta e^{-\beta \tau}\mbox{diag}(\bar{ \bm{\lambda}}) +\bm{A}\int_{0}^{\infty} \beta e^{-\beta ( \tau+s) }\bm{c}(s) ds \\
&~~+\bm{A} \int_0^{\tau} \beta e^{-\beta (\tau-s)} \bm{c}(s) ds.
\end{split}
\end{equation}
Applying Laplace transform to both sides of (\ref{cov}), we obtain
\[
\widehat{\bm{c} }(z)=\frac{\beta}{z+\beta} \bm{A} \mbox{diag}(\bar{ \bm{\lambda}}) + \frac{\beta}{z+\beta} \bm{A}  \widehat{\bm{c} }(\beta) +\frac{\beta}{z+\beta} \bm{A}  \widehat{\bm{c} }(z),
\]
where 
\begin{equation}
\begin{split}
 &\mathcal{L} \left(  \int_{0}^{\infty} \beta e^{-\beta ( \tau+s) }\bm{c}(s) ds \right) 
=  \mathcal{L} \left(  \beta e^{-\beta \tau} \int_{0}^{\infty}  e^{-\beta s}\bm{c}(s) ds \right) \\
& = \mathcal{L} \left( \beta e^{-\beta \tau}  \widehat{\bm{c} }(\beta) \right)
 =  \frac{\beta}{z+\beta} \widehat{\bm{c} }(\beta).
\end{split}
\end{equation}
Replacing $z$ with $\beta$, we obtain 
\[
\widehat{\bm{c} }(\beta)=\frac{1}{2} (\bm{I}-\bm{A})^{-1} \bm{A}\mbox{diag}(\bar{ \bm{\lambda}}).
\]
Therefore, 
\begin{equation*}
\begin{split}
& \widehat{\bm{c}}(z) =\left( (z+\beta) \bm{I}-\beta \bm{A} \right)^{-1} \beta \bm{A} \left( \bm{I} +\frac{1}{2} (\bm{I}-\bm{A})^{-1}
\bm{A}  \right) \\
&~~\cdot\mbox{diag} \left(  (\bm{I}-\bm{A})^{-1} \bm{\mu}\right)
\end{split}
\end{equation*}
Using inverse Laplace transform for $\widehat{\bm{c}}(z)$, we obtain
\begin{equation*}
\begin{split}
&\bm{c}(\tau)=\mathcal{L}^{-1} \left(  \widehat{\bm{c}}(z)  \right) = \beta e^{-\beta (\bm{I}- \bm{A}) \tau} \bm{A} \left( \bm{I} +\frac{1}{2} (\bm{I}-\bm{A})^{-1}
\bm{A}  \right) \\
&~~\cdot \mbox{diag} \left(  (\bm{I}-\bm{A})^{-1} \bm{\mu}\right), \quad \tau>0.
\end{split}
\end{equation*}
For a special case $d=1$, we obtain:
\[
c(\tau)= \frac{\alpha \beta (2-\alpha) \mu}{2(1-\alpha)^2} e^{-\beta(1-\alpha)\tau}, \quad \tau>0.
\]
\end{proof}

\vspace{-0.3in}
\section{Approximate local rate} \label{localfield}
To show (\ref{ratio}), we need to evaluate the mean and variance of the local field $\{\ell_{t+\epsilon, \tau+\epsilon, \alpha} -\ell_{t,\tau, \alpha}\}$ after change-of-measures.
From (\ref{loglikelihood_poi}) we see the the log-likelihood ratio $\ell_{t, \tau, \alpha}$ is an integration from time $\tau$ to $t$. Thus, we can rewrite $\ell_{t+\epsilon, \tau+\epsilon, \alpha}$ into several parts by dissecting the integration region: 
\begin{align}
\int_{\tau+\epsilon}^{t+\epsilon} = \int_{\tau+\epsilon}^{\tau+\epsilon^{+}} +\int_{\tau+\epsilon^{+}}^{t+\epsilon^{-}} +\int_{t+\epsilon^{-}}^{t+\epsilon}.
\end{align}
From this we the only overlap of data between $\ell_{t+\epsilon, \tau+\epsilon, \alpha}$ and $\ell_{t, \tau, \alpha}$ is the integration over the interval $(\tau+\epsilon^{+}, t+\epsilon^{-})$. Therefore, we have
\begin{equation*}
\begin{split}
&\mathbb{E}_{t,\tau, \alpha} [\ell_{t+\epsilon, \tau+\epsilon, \alpha}] = \mathbb{E}\left[\ell_{t+\epsilon, \tau+\epsilon, \alpha} e^{\ell_{t, \tau, \alpha}}\right] \\
=&~ \mathbb{E}\left[ \ell_{\tau+\epsilon^{+}, \tau+\epsilon, \alpha} e^{\ell_{t, \tau, \alpha}}\right]
+\mathbb{E}\left[ \ell_{t+\epsilon^{-}, \tau+\epsilon^{+}, \alpha} e^{\ell_{t, \tau, \alpha}}\right] \\
 & ~~ +\mathbb{E}\left[ \ell_{t+\epsilon,t+\epsilon^{-}, \alpha} e^{\ell_{t, \tau, \alpha}}\right] \\
= &~\mathbb{E}\left[ \ell_{\tau+\epsilon^{+}, \tau+\epsilon, \alpha}\right]\mathbb{E}\left[ e^{\ell_{t, \tau, \alpha}}\right]
+\mathbb{E}_{t, \tau, \alpha} \left[  \ell_{t+\epsilon^{-}, \tau+\epsilon^{+}} \right]
\\
&~~+\mathbb{E}\left[ \ell_{t+\epsilon,t+\epsilon^{-}, \alpha} \right] \mathbb{E}\left[e^{\ell_{t, \tau, \alpha}}\right].
\end{split}
\end{equation*}
Due to the property of the likelihood ratio, $\mathbb{E}\left[ e^{\ell_{t, \tau, \alpha}}\right] = 1$. 
Thus, we have:
\begin{equation*}
\begin{split}
&~\mathbb{E}_{t,\tau, \alpha} [\ell_{t+\epsilon, \tau+\epsilon, \alpha}] \\
&= \mathbb{E}\left[ \ell_{\tau+\epsilon^{+}, \tau+\epsilon, \alpha}\right]
+\mathbb{E}_{t, \tau, \alpha} \left[  \ell_{t+\epsilon^{-}, \tau+\epsilon^{+}} \right]
+\mathbb{E}\left[ \ell_{t+\epsilon,t+\epsilon^{-}, \alpha} \right] \\
& = -\epsilon^{-} \frac{\mathbb{E}[\ell_{t, \tau, \alpha}]}{t-\tau} +(t+\epsilon^{-}-\tau-\epsilon^{+}) 
\frac{\mathbb{E}_{t, \tau, \alpha}[\ell_{t, \tau, \alpha}]}{t-\tau} \\
& ~~+\epsilon^{+} \frac{\mathbb{E}[\ell_{t, \tau, \alpha}]}{t-\tau}.
\end{split}
\end{equation*}
For the last equality, we use the fact the both $\mathbb{E}[\ell_{t, \tau, \alpha}]$ and $\mathbb{E}_{t, \tau, \alpha}[\ell_{t, \tau, \alpha}]$ are linear with time interval $(t-\tau)$, which will be proven in Appendix \ref{KL}.
Finally we have:
\begin{equation*}
\begin{split}
&\mathbb{E}_{t, \tau, \alpha}[\ell_{t+\epsilon, \tau+\epsilon, \alpha}-\ell_{t, \tau, \alpha}]\\
&=(-\epsilon^{-}+\epsilon^{+})\frac{\mathbb{E}[\ell_{t, \tau, \alpha}]}{t-\tau}  -(\epsilon^{+}-\epsilon^{-}) \frac{\mathbb{E}_{t, \tau, \alpha}[\ell_{t, \tau, \alpha}]}{t-\tau} \\
&=\underbrace{ \frac{\mathbb{E}[\ell_{t, \tau, \alpha}]-\mathbb{E}_{t, \tau,\alpha}[\ell_{t, \tau, \alpha}]}{t-\tau}}_{-\xi<0} |\epsilon|.
\end{split}
\end{equation*}
By Jensen's inequality, we can prove that $\mathbb{E}[\ell_{t, \tau, \alpha}]<0$ and $\mathbb{E}_{t, \tau, \alpha}[\ell_{t, \tau, \alpha}]>0$. 

\vspace{0.1in}
Similarly, we derive the variance of the local field:
\begin{equation*}
\begin{split}
&\quad \rm{Var}_{t, \tau, \alpha}[\ell_{t+\epsilon, \tau+\epsilon, \alpha}-\ell_{t, \tau, \alpha}] \\
&= \rm{Var}_{t, \tau, \alpha}\left[ \left( \ell_{\tau+\epsilon^{+}, \tau+\epsilon, \alpha} 
+ \ell_{t+\epsilon^{-}, \tau+\epsilon^{+}, \alpha} 
+\ell_{t+\epsilon,t+\epsilon^{-}, \alpha}  \right) -\ell_{t,\tau, \alpha}\right] \\
& = \rm{Var}_{t, \tau, \alpha}\left[ \ell_{\tau+\epsilon^{+}, \tau+\epsilon, \alpha} 
-\left( \ell_{\tau, \tau+\epsilon^{+},\alpha} +\ell_{t+\epsilon^{-}, t, \alpha}  \right)
~~+\ell_{t+\epsilon,t+\epsilon^{-}, \alpha}  \right] \\
&=\rm{Var}_{t, \tau, \alpha}\left[ \ell_{\tau+\epsilon^{+}, \tau+\epsilon, \alpha} \right] \\
&~~+\rm{Var}_{t,\tau, \alpha} \left[ \ell_{\tau, \tau+\epsilon^{+},\alpha} +\ell_{t+\epsilon^{-}, t, \alpha}  \right]
+\rm{Var}_{t,\tau, \alpha} \left[ \ell_{t+\epsilon,t+\epsilon^{-}, \alpha}  \right] \\
&=\rm{Var}\left[ \ell_{\tau+\epsilon^{+}, \tau+\epsilon, \alpha} \right]
+\rm{Var}_{t,\tau, \alpha} \left[ \ell_{\tau, \tau+\epsilon^{+},\alpha}\right. \\
&~~ \left.+\ell_{t+\epsilon^{-}, t, \alpha}  \right]
+\rm{Var} \left[ \ell_{t+\epsilon,t+\epsilon^{-}, \alpha}  \right] \\
&=(\epsilon^{+} -\epsilon^{-}) \frac{\rm{Var}[\ell_{t, \tau, \alpha}]}{t-\tau} +(\epsilon^{+}-\epsilon^{-}) 
\frac{\rm{Var}_{t, \tau, \alpha}[\ell_{t, \tau, \alpha}]}{t-\tau} \\
& =\underbrace{ \frac{\rm{Var}[\ell_{t, \tau, \alpha}]
+\rm{Var}_{t, \tau, \alpha}[\ell_{t, \tau, \alpha}] }  {t-\tau} }_{\eta^2}|\epsilon|.
\end{split}
\end{equation*}
Above, we use the fact that both $\mbox{Var}[\ell_{t, \tau, \alpha}]$ and $\mbox{Var}_{t, \tau, \alpha}[\ell_{t, \tau, \alpha}]$ are approximately linear with time interval $(t-\tau)$, which will be proven in Appendix \ref{KL}.

The above derivations show that the asymptotic distribution of $\{\ell_{t+\epsilon, \tau+\epsilon, \alpha}-\ell_{t,\tau, \alpha} \}$, for small $|\epsilon|$ is a two-sided Brownian motion with a negative drift $-\xi$. The variance of an increment of this Brownian motion is $\eta^2$.
That is, the re-centered process:
\begin{align}
\ell_{t+\epsilon, \tau+\epsilon, \alpha}-\ell_{t, \tau, \alpha} = B(\eta^2 |\epsilon|)-\xi|\epsilon|
\end{align}
with the equality meaning equality in distribution, where $B$ is a two-sided random walk with negative drift. According to Chapter 3 in \cite{yakir2013extremes}, we obtain (\ref{ratio}).

\section{Expectation and variance of log-likelihood ratio under null and alternative distributions} \label{KL}
The calculations $I$, $\sigma^2$, $\xi$ and $\eta^2$ boil down to evaluating $\mathbb{E}_{t, \tau, \alpha} [\ell_{t, \tau, \alpha}]$, $\mbox{Var}_{t, \tau, \alpha} [\ell_{t, \tau, \alpha}]$, $\mathbb{E}[\ell_{t, \tau, \alpha}]$ and $\mbox{Var}[\ell_{t, \tau, \alpha}]$, i.e., the expectation and variance of log-likelihood ratio under null and alternative distributions. Below, we will perform the calculation for all likelihoods considered in our paper. The main techniques used are mean-field approximation, Delta method, and Lemma \ref{firstorder} and \ref{secondorder}. Below, let $\mathbb{E}_{\mathcal{H}_{t-}}[\cdot]$ denote the conditional expectation for the Hawkes process given the past history. 

\subsubsection{One-dimension: Poisson to Hawkes.}


Assuming stationary and $(t-\tau)$ is large, we can approximate the stationary intensity for the Hawkes process to be $\bar{\lambda}^*$, which is defined as \[\bar{\lambda}^*=\lim_{t\to \infty} m_t^*= \lim_{t\to \infty}  \mathbb{E}_{\mathcal{H}_{t-} } [\lambda_t^*].\] We use mean field approximation, which assumes each stochastic process $\lambda_t^*$ has small fluctuations around its mean $\bar{ \lambda}^*$: $|\lambda_t^*-\bar{ \lambda}^* |/\bar{ \lambda}^* \ll 1$. Then we compute the expectation of log-likelihood ratio under alternative hypothesis 
\begin{align}
&\mathbb{E}_{t, \tau, \alpha}[\ell_{t, \tau, \alpha}] \nonumber \\
& = \mathbb{E}_{t,\tau, \alpha} \left[ \int_{\tau}^t \mbox{log} \left( \lambda^*_s \right) dN_s -\int_{\tau}^t \mbox{log} \left( \lambda_s \right) dN_s   \right.  \nonumber  \\
&\left. ~~ - \int_{\tau}^t \left( \lambda^*_s-\lambda_s\right) ds  \right]  \label{e1}\\
& \approx \mathbb{E}_{\mathcal{H}_{t-}}\left[ \int_{\tau}^t \lambda^*_s \mbox{log} \left( \lambda^*_s\right) ds \right.\nonumber \\
&~~ \left.-\int_{\tau}^t \lambda^*_s  \mbox{log} \left( \lambda_s\right) ds      \right]    - \int_{\tau}^t \left( m^*_s-\lambda_s \right) ds  \label{e2} .
\end{align}
From (\ref{e1}) to (\ref{e2}), we use the fact that under $\mathbb{P}_{t, \tau, \alpha}$, $Ns$ is a Hawkes random field with conditional intensity $\lambda^*_s$. From (\ref{e1}) to (\ref{e2}), more justifications can be found in \cite{daley2004scoring,vere1998probabilities,daley2007introduction}. 

Next, when $(t-\tau)$ is large, we can approximate the stationary intensity for Hawkes process to be $\bar{\lambda}^*$. To approximate $\mathbb{E}_{\mathcal{H}_{t-}}\left[ \int_{\tau}^t \lambda^*_s \mbox{log} \left( \lambda^*_s\right) ds\right]$, we perform the first order taylor expansion for a new defined function $ f(\lambda^*_s)=\lambda^*_s \mbox{log} \left( \lambda^*_s \right)$ around $\mathbb{E}_{\mathcal{H}_{t-}}[ \lambda^*_s]=\bar{ \lambda}^* $ (this is based on the Delta method):
\begin{align}
\lambda^*_s\mbox{log} \left( \lambda^*_s \right) \approx \bar{\lambda}^* \mbox{log} \left(\bar{  \lambda} ^* \right) +  \left[   \mbox{log}(\bar{ \lambda} ^*)+1\right] (\lambda^*_s-\bar{ \lambda} ^*).
\end{align}
Taking expectation on both sides of the equation and using $\mathbb{E}_{\mathcal{H}_{t-}}[ \lambda^*_s]=\bar{ \lambda}^*$, we have
\[
\mathbb{E}_{\mathcal{H}_{t-}}\left[   \int_{\tau}^t \lambda^*(s ) \mbox{log} \left( \lambda^*(s) \right) ds    \right] 
\approx  \,  \int_{\tau}^t \bar{\lambda}^* \mbox{log} (\bar{  \lambda} ^*) ds.
\]
Finally, we have:
\begin{equation*}
\begin{split}
&\mathbb{E}_{t, \tau, \alpha}[\ell_{t, \tau, \alpha}] \approx (t-\tau) \left[  \bar{\lambda}^*\mbox{log} \left( \frac{\bar{ \lambda}^*}{\mu}\right) -(\bar{ \lambda} ^*-\mu)\right]\\
&=(t-\tau) \underbrace{ \left[   \frac{\mu}{1-\alpha} \mbox{log} \left( \frac{1}{1-\alpha}\right) -\frac{\alpha}{1-\alpha}\mu\right] }_{I}.
\end{split}
\end{equation*}

On the other hand, under the null distribution and given stationary assumption, we have:
\begin{align*}
&\mathbb{E}[\ell_{t,\tau, \alpha}]\\ & = \mathbb{E}\left[     \int_{\tau}^t \mbox{log} \left( \frac{\lambda^*_s}{\lambda_s} \right) dN_s - \int_{\tau}^t \left( \lambda^*_s-\lambda_s \right) ds  \right] \\
& \approx  \mathbb{E}_{\mathcal{H}_{t-}} \left[  \int_{\tau}^t \lambda_s\mbox{log} \left( \frac{\lambda^*_s}{\lambda_s} \right) ds - \int_{\tau}^t \left( \lambda^*_s-\lambda_s \right) ds  \right] \\
& \approx  (t-\tau) \underbrace{  \left[   \mu \mbox{log} \left( \frac{1}{1-\alpha}\right) -\frac{\alpha}{1-\alpha}\mu\right] }_{I_0}.
\end{align*}
For the second equality we use the fact that under $\mathbb{P}$, $Ns$ is a Poisson random field with intensity $\lambda_s$. For the last equality, we use mean-field approximation.

Next, we compute the variance of log-likelihood ratio under null distribution and alternative distribution, respectively. Under the alternative distribution,
\begin{align*}
 &\int_{\tau}^t \mbox{log} \left( \frac{\lambda^*_s}{\lambda_s} \right) dN_s - \int_{\tau}^t \left( \lambda^*_s-\lambda_s \right) ds  \\
 &\approx \,   \int_{\tau}^t \left[  \lambda^*_s \mbox{log} \left( \frac{\lambda^*_s}{\lambda_s} \right) - \lambda^*_s \right]  ds + \lambda_s (t-\tau).
\end{align*}
Then the only random part is $\int_{\tau}^t \left[  \lambda^*_s \mbox{log} \left( \frac{\lambda^*_s}{\lambda_s} \right) - \lambda^*_s \right]  ds  $. Therefore, 
\begin{align}
\mbox{Var}_{t, \tau, \alpha}[\ell_{t, \tau, \alpha}] \approx \mbox{Var}_{\mathcal{H}_{t-}} \left[    \int_{\tau}^t \left[  \lambda^*_s\mbox{log} \left( \frac{\lambda^*_s}{\lambda_s} \right) - \lambda^*_s\right]  ds   \right].
\end{align}
Again, to use Delta method, we consider a function with respect to $\lambda^*_s$:
\[
f(\lambda^*_s) =  \lambda^*_s \mbox{log} \left( \frac{\lambda^*_s}{\lambda_s} \right) - \lambda^*_s,
\]
and apply the first order taylor expansion around $\mathbb{E}_{\mathcal{H}_{t-}}[\lambda^*_s]=\bar{ \lambda} ^*$:
\begin{equation} \label{e3}
f(\lambda^*_s) \approx f(\bar{ \lambda} ^*) + \mbox{log} \left( \frac{\bar{ \lambda}^*}{\lambda_s} \right) \left( \lambda^*_s-\bar{ \lambda}^* \right) .
\end{equation}
From (\ref{e3}), we obtain
\begin{align*}
&\mbox{Var}_{\mathcal{H}_{t-}} \left[   f(\lambda^*_s)   \right]  \approx \mathbb{E}_{\mathcal{H}_{t-}} \left[   \left(   f(\lambda^*_s)    -   f(\lambda^*) \right)^2    \right] \\
&\approx \left[ \mbox{log} \left(  \frac{\bar{ \lambda}^*}{\lambda_s} \right) \right]^2 \mathbb{E}_{\mathcal{H}_{t}} \left[  (\lambda^*_s -\bar{\lambda}^*)^2  \right],
\end{align*}
where $\mathbb{E}_{\mathcal{H}_{t-}} \left[  (\lambda^*_s  -\bar{\lambda}^*)^2  \right] = \mbox{Var}_{\mathcal{H}_{t}} [\lambda^*_s]$.
Note that the log-likelihood ratio is  an integration from $\tau$ to $t$. When computing the variance, we need to consider $\mbox{Cov}[ \lambda^*_s, \lambda^*_{s+\tau}  ]$. Under the stationary assumption, from Lemma \ref{secondorder}, we obtain an expression for $c(\tau):=\rm{Cov}_{\mathcal{H}_{t-}} [\lambda^*_s, \lambda^*_{s+\tau}] $, which only depends on $\tau$.  
Therefore,
\begin{align*}
& \mbox{Var}_{t, \tau, \alpha}[\ell_{t, \tau, \alpha}] \\
& \approx  \left[ \mbox{log} \left( \frac{\bar{\lambda}^*}{\lambda_s} \right) \right]^2 
\int_{\tau}^t \int_{\tau}^t c(s'-s)dsds'\\
& =\left[ \mbox{log} \left(\frac{\bar{\lambda}^*}{\lambda_s} \right) \right]^2 \left[    \int_{0}^{t-\tau} \lambda^*  ds + 2 \int_{0}^{t-\tau} \int_{0}^{s}c(v)dvds  \right] \\
& =  (t-\tau)\left[ \mbox{log} \left( \frac{1}{1-\alpha} \right) \right]^2 
\left[ \frac{\mu}{1-\alpha}+ \frac{\alpha (2-\alpha)\mu}{(1-\alpha)^3} \right.\\
&\left.~~ +
  \frac{\alpha (2-\alpha)\mu e^{-\beta(1-\alpha) (t-\tau)} }{\beta (1-\alpha)^4(t-\tau)}  -     \frac{\alpha (2-\alpha)\mu }{\beta (1-\alpha)^4(t-\tau)}     \right].
\end{align*}
Moreover, since $\alpha$ is usually a small number, when $(t-\tau)$ is a large number, we may ignore the small terms and further approximate:
\begin{equation*}
\begin{split}
&\mbox{Var}_{t, \tau, \alpha}[\ell_{t, \tau, \alpha}] \\
&\approx (t-\tau)  \underbrace{  \left[ \mbox{log} \left( \frac{1}{1-\alpha} \right) \right]^2 
\left[ \frac{\mu}{1-\alpha}+ \frac{\alpha (2-\alpha)\mu}{(1-\alpha)^3} \right] }_{\sigma^2}.
\end{split}
\end{equation*}
On the other hand, under the null distribution, we have the variance of the log-likelihood ratio
\begin{equation*}
\begin{split}
\mbox{Var}[\ell_{t, \tau, \alpha}] &\approx \left[ \mbox{log} \left( \frac{\bar{ \lambda}^*}{\lambda_s} \right) \right]^2  \int_{\tau}^t  \lambda_s ds \\
&= (t-\tau) \underbrace{  \mu \left[ \mbox{log}\left(  \frac{1}{1-\alpha} \right) \right]^2}_{\sigma^2_0}.
\end{split}
\end{equation*}

\vspace{-0.6in}

\subsubsection{Multi-dimension: Poisson to Hawkes}

The derivations for the multi-dimensional case would follow the same strategy as the one-dimensional case. So we just put the key results here. For the expectation of the log-likelihood ratio under alternative distribution, we have:
\begin{align*}
&\mathbb{E}_{t, \tau, \bm{A} }[ \ell_{t, \tau, \alpha }] \\
& \approx (t-\tau)  \left[\bar{ \bm{ \lambda}} ^{*\intercal} \left( \mbox{log} (\bar{\bm{\lambda}} ^* )-   \mbox{log} (\bm{\mu}) \right) -\bm{e}^{\intercal}(\bar{\bm{\lambda}} ^*-\bm{\mu}) \right]   \\
& = (t-\tau) \left[ (\bm{I}-\bm{A})^{-1}\bm{\mu} \left( \mbox{log} ((\bm{I}-\bm{A})^{-1}\bm{\mu} )-   \mbox{log} (\bm{\mu}) \right)\right.\\ 
&~~\left.-\bm{e}^{\intercal}((\bm{I}-\bm{A})^{-1}\bm{\mu}-\bm{\mu}) \right] .
\end{align*}
where the quantity inside $[\cdot]$ above corresponds to $I$ in this case.
Under null, we have
\begin{align*}
\mathbb{E}[ \ell_{t, \tau, \alpha }] & \approx 
(t-\tau) \left[  \bm{ \mu}^{\intercal} \left( \mbox{log} (\bm{\lambda}^*)-   \mbox{log} (\bm{\mu}) \right) -\bm{e}^{\intercal}(\bm{\lambda}^*-\bm{\mu}) \right] \\
& = (t-\tau) \left[   \bm{ \mu}^{\intercal} \left( \mbox{log} ( (\bm{I-A})^{-1}\bm{\mu})-   \mbox{log} (\bm{\mu}) \right)\right. \\
& ~~\left.-\bm{e}^{\intercal}(( \bm{I-A})^{-1}-\bm{I} )\bm{\mu} \right],
\end{align*}
where the quantity inside $[\cdot]$ above corresponds to $I_0$ in this case.
For the variance of the log-likelihood ratio under alternative, we have
\begin{align}
& \mbox{Var}_{t, \tau, \bm{A}}[ \ell_{t, \tau, \bm{A} }] \nonumber\\
 =&~\mbox{Var}_{t, \tau, \bm{A}}  \left[\sum_{i=1}^d  \int_{\tau}^t \mbox{log} \left(  \frac{\lambda_i (s)}{\mu_i} \right) dN_s^i \right] \nonumber \\
=& ~\sum_{i=1}^d   \mbox{Var}_{t, \tau, \bm{A}}  \left[ \int_{\tau}^t \mbox{log} \left(  \frac{\lambda_i (s)}{\mu_i} \right) dN_s^i \right] \nonumber \\
&\hspace{-0.3in}+ 2 \sum_{i<j} \mbox{Cov}_{t, \tau, \bm{A}} \left[  \int_{\tau}^t \mbox{log} \left(  \frac{\lambda_i (s)}{\mu_i} \right) dN_s^i, \int_{\tau}^t \mbox{log} \left(  \frac{\lambda_j (s)}{\mu_j} \right) dN_s^j      \right]  .\label{var} 
\end{align}
From Lemma \ref{secondorder}, for $s>0$
\begin{equation*}
\begin{split}
\bm{c}(s)
= &~\beta e^{-\beta (\bm{I}- \bm{A}) s} \bm{A} \left( \bm{I} +\frac{1}{2} (\bm{I}-\bm{A})^{-1}
\bm{A}  \right) \\
&~~\cdot \mbox{diag} \left(  (\bm{I}-\bm{A})^{-1} \bm{\mu}\right).
\end{split}
\end{equation*}
To compute (\ref{var}), we also need 
\begin{align*}
& \int_{\tau}^t \int_{\tau}^t \bm{c}(s'-s) dsds' 
=   2 \int_0^{t-\tau} \int_0^s \bm{c}(v) dv ds \\
= & \, 2 \beta \int_0^{t-\tau} \int_0^s e^{-\beta (\bm{I}- \bm{A}) v}   dv ds  \\
&~~\bm{A} \left( \bm{I} +\frac{1}{2} (\bm{I}-\bm{A})^{-1}
\bm{A}  \right) \mbox{diag} \left(  (\bm{I}-\bm{A})^{-1} \bm{\mu}\right) \\
= &\, 2 \beta \int_0^{t-\tau} \left( -\frac{1}{\beta} (\bm{I}-\bm{A})^{-1}  \left( e^{-\beta (\bm{I}-\bm{A}) s} -\bm{I}  \right) \right) ds  \\
&~~\bm{A} \left( \bm{I} +\frac{1}{2} (\bm{I}-\bm{A})^{-1}
\bm{A}  \right) \mbox{diag} \left(  (\bm{I}-\bm{A})^{-1} \bm{\mu}\right) \\
= &\, 2  (\bm{I}-\bm{A})^{-1} \int_0^{t-\tau}   \left( \bm{I} - e^{-\beta (\bm{I}-\bm{A}) s}  \right) ds \\
&~~ \bm{A} \left( \bm{I} +\frac{1}{2} (\bm{I}-\bm{A})^{-1}
\bm{A}  \right) \mbox{diag} \left(  (\bm{I}-\bm{A})^{-1} \bm{\mu}\right) \\
\approx  & \, (t-\tau)   (\bm{I}-\bm{A})^{-1}  \bm{A} \left( 2 \bm{I} +(\bm{I}-\bm{A})^{-1}
\bm{A}  \right)\\
&~~\cdot \mbox{diag} \left(  (\bm{I}-\bm{A})^{-1} \bm{\mu}\right) .
\end{align*}
Note that when computing $\mbox{Cov} [dN_s^i, dN_{s'}^i]$, we need to consider an extra term:
\begin{align}
 \int_{\tau}^t \int_{\tau}^t \bar{ \bm{\lambda}} ^* \delta(s'-s) dsds' = \int_0^{t-\tau} \bar{ \bm{\lambda} }^* ds = (t-\tau)  \bar{ \bm{\lambda} }^*.
\end{align}
After rearranging  terms, using the mean-field approximation and Delta method, we  obtain 
\begin{align}
 \mbox{Var}_{t, \tau, \bm{A}}[ \ell_{t, \tau, \bm{A} }] 
\approx  (t-\tau)  \underbrace{  \bm{e}^{\intercal} \left( \bm{H} \circ \bm{C} \right) \bm{e} }_{\sigma^2},
\end{align}
where $\bm{H}$ and $\bm{C}$ are defined in Table \ref{table1}.

We compute the variance of the log-likelihood under null distribution. Note that when the data follow Poisson processes, we have $\mbox{Cov} [N^i_t, N^j_{t'}]_{ t \neq t'}=0$. Therefore, 
\begin{align}
\mbox{Var}[ \ell_{t, \tau, \bm{A} }] &  \approx (t-\tau) \left[  \bm{ \mu}^{\intercal} \left( \mbox{log} (\bar{ \bm{\lambda}} ^*)-   \mbox{log} (\bm{\mu} )\right)^{(2)}  \right]  \\
& \approx (t-\tau) \underbrace{ \left[  \bm{ \mu}^{\intercal} \left( \mbox{log} ((\bm{I-A})^{-1}\bm{\mu})-   \mbox{log} (\bm{\mu} )\right)^{(2)}  \right]  }_{\sigma^2_0}. 
\end{align}

\subsubsection{One-dimension: Hawkes to Hawkes.}

Similarly, we compute the expectation of the log-likelihood ratio under alternative distribution
\begin{align*}
& \mathbb{E}_{t, \tau, \alpha}[\ell_{t, \tau, \alpha}] \\
& = \mathbb{E}_{t,\tau, \alpha} \left[ \int_{\tau}^t \mbox{log} \left( \lambda^*_s\right) dN_s -\int_{\tau}^t \mbox{log} \left( \lambda_s \right) dN_s        - \int_{\tau}^t \left( \lambda^*_s-\lambda_s\right) ds  \right]  \\
& \approx \mathbb{E}_{\mathcal{H}_{t-}} \left[ \int_{\tau}^t \lambda^*_s \mbox{log} \left( \lambda^*_s \right) ds -\int_{\tau}^t \lambda^*_s\mbox{log} \left( \lambda_s \right) ds        - \int_{\tau}^t \left( \lambda^*_s-\lambda_s\right) ds  \right]  \\
& \approx (t-\tau) \left[ \bar{ \lambda} ^* \mbox{log}(\bar{\lambda}^*) -\bar{\lambda}^* \mbox{log} (\bar{\lambda}) -(\bar{\lambda}^*-\bar{\lambda})                    \right] \\
& \approx (t-\tau) \underbrace{  \left[  \frac{\mu}{1-\alpha^*} \mbox{log} \left( \frac{1-\alpha}{1-\alpha^*} \right)  -\frac{\mu}{1-\alpha^*} +\frac{\mu}{1-\alpha}     \right] }_{I},
\end{align*}
where the first approximation is due to that under $\mathbb{P}_{t, \tau, \alpha}$, $N(ds)$ is a Hawkes random field with intensity $\lambda^*_s$, and for the latter approximation, we are using mean field approximation and (multivariate) Delta Method given $\mathbb{E}_{\mathcal{H}_{t-}}[\lambda^*(s)]=\bar{\lambda}^*$ and $\mathbb{E}_{\mathcal{H}_{t-}}[\lambda_s]=\bar{\lambda}$. And for the stationary intensity, we have $\bar{ \lambda} = \mu/(1-\alpha)$ and $\bar{ \lambda} ^*= \mu/(1-\alpha^*)$. 

Next, the expectation of the log-likelihood ratio under null distribution is given by
\begin{align*}
&\mathbb{E} [\ell_{t, \tau, \alpha}] \\
& = \mathbb{E} \left[ \int_{\tau}^t \mbox{log} \left( \lambda^*_s \right) dN_s -\int_{\tau}^t \mbox{log} \left( \lambda_s\right) dN_s        \right.\\
&\left.~~- \int_{\tau}^t \left( \lambda^*_s-\lambda_s \right) ds  \right]  \\
& \approx \mathbb{E}_{\mathcal{H}_{t-}}\left[ \int_{\tau}^t \lambda_s \mbox{log} \left( \lambda^*_s  \right) ds -\int_{\tau}^t \lambda_s\mbox{log} \left( \lambda_s \right) ds        \right.\\
&~~\left.- \int_{\tau}^t \left( \lambda^*_s-\lambda_s \right) ds  \right]  \\
& \approx (t-\tau) \left[ \bar{ \lambda} \mbox{log}(\bar{\lambda}^*) -\bar{ \lambda} \mbox{log} ( \bar{ \lambda}) -(\bar{ \lambda}^*-\bar{ \lambda} )                    \right] \\
& = (t-\tau) \underbrace{ \left[  \frac{\mu}{1-\alpha} \mbox{log} \left(  \frac{1-\alpha}{1-\alpha^*} \right) -\frac{\mu}{1-\alpha^*} +\frac{\mu}{1-\alpha}       \right]}_{I_0},
\end{align*}
and the variance of the log-likelihood ratio under alternative distribution is given by\begin{align*}
 \ell_{t, \tau, \alpha} 
&=  \int_{\tau}^t \mbox{log} \left( \lambda^*_s \right) dN_s -\int_{\tau}^t \mbox{log} \left( \lambda_s \right) dN_s    \\
&~~    - \int_{\tau}^t \left( \lambda^*_s-\lambda_s \right) ds  \\
&\approx     \int_{\tau}^t  \underbrace{   \left[   \lambda^*_s \mbox{log} \left( \lambda^*_s  \right) - \lambda^*_s  \mbox{log}\left( \lambda_s \right)     -  \lambda^*_s+\lambda_s  \right] }_{f(   \lambda^*_s, \lambda_s )}ds.
\end{align*}
Next, we perform the first order taylor expansion to the newly defined multivariate function with respect to $\lambda^*_s$ and $\lambda_s$:
\begin{align*}
&f (\lambda^*_s, \lambda_s )\\
&\approx f (\bar{\lambda} ^*, \bar{ \lambda}) + \mbox{log} \left( \frac{\bar{\lambda}^*}{\bar{\lambda} } 
    \right)\left(  \lambda^*_s- \bar{\lambda^*} \right)  +\left(  1-\frac{\bar{ \lambda}^*}{\bar{\lambda}}\right) (\lambda_s -\bar{ \lambda} ).
\end{align*}
Based on this, we have
\begin{align*}
&\mbox{Var} \left[  f (\lambda^*_s, \lambda_s )   \right]  = \mathbb{E}\left[   \left(    f (\lambda^*_s, \lambda_s )-   f (\bar{\lambda}^*, \bar{\lambda}  ) \right)^2 \right] \\
& \approx  \left[   \mbox{log} \left( \frac{\bar{\lambda}^*}{\bar{ \lambda} } 
    \right)  \right]^2 \mbox{Var}[\lambda^*_s] +\left(  1-\frac{\bar{\lambda}^*}{\bar{ \lambda} }\right)^2 \mbox{Var}[\lambda_s].
\end{align*}
Note that the null intensity $\lambda_s $ is independent of the alternative intensity $\lambda^*_s$. Finally, we have:
\begin{align*}
& \mbox{Var}_{t, \tau, \alpha} [\ell_{t, \tau, \alpha}] \\
 \approx &~ \mbox{Var}_{t, \tau, \alpha}  \left[        \int_{\tau}^t \lambda^*_s \mbox{log} \left( \lambda^*_s \right) ds -\int_{\tau}^t \lambda^*_s \mbox{log} \left( \lambda_s \right) ds  \right.     \\
 &\left.~~ - \int_{\tau}^t \left( \lambda^*_s-\lambda_s \right) ds         \right] \\
 \approx & \left[   \mbox{log} \left( \frac{\bar{\lambda}^*}{\bar{\lambda}} 
    \right)  \right]^2 \int_{\tau}^t \int_{\tau}^t c ^* (s'-s)dsds' \\
    &~~+ \left(  1-\frac{\bar{\lambda}^*}{\bar{\lambda} }\right)^2 \int_{\tau}^t \int_{\tau}^t c(s'-s)dsds'  \\
  \approx &~  (t-\tau) \left(     \left[ \mbox{log} \left( \frac{1-\alpha}{1-\alpha^*} \right) \right]^2 
\left[ \frac{\mu}{1-\alpha^*}+ \frac{\alpha^* (2-\alpha^*)\mu}{(1-\alpha^*)^3}    \right] \right.\\
&~~\left.+\left(  1-\frac{1-\alpha}{1-\alpha^*}\right)^2 \left[ \frac{\mu}{1-\alpha}+ \frac{\alpha(2-\alpha)\mu}{(1-\alpha)^3}    \right]   \right).
\end{align*}
The factor in the last equation that multiplies $(t-\tau)$ corresponds to $\sigma^2$ in this setting. 
Again, we've ignored some small terms.

Similarly, we can compute the variance of the log-likelihood ratio under null distribution. Under null distribution, 
\begin{align}
  \ell_{t, \tau, \alpha} \approx \int_{\tau}^t  \underbrace{ \lambda_s  \mbox{log} ( \lambda^*_s ) - \lambda_s  \mbox{log} ( \lambda_s) -\lambda^*_s+\lambda_s  }_{f(\lambda^*_s, \lambda_s)}ds.
\end{align}
Still perform the first order taylor expansion to the new defined function:
\[
f( \lambda^*_s, \lambda_s ) \approx  f(\bar{\lambda} ^*, \bar{\lambda} )+\left( \frac{\bar{ \lambda} }{\bar{ \lambda} ^*}-1 \right)
(\lambda^*_s-\bar{\lambda}^*) +\mbox{log} \left( \frac{\bar{\lambda}^*}{\bar{\lambda} } \right) (\lambda_s-\bar{\lambda} ).
\]
Therefore, using multivariate Delta method
\begin{align*}
&\mbox{Var}[  f( \lambda^*_s, \lambda_s )   ] \\
& =\mathbb{E} \left[   \left(    f( \lambda^*_s, \lambda_s ) -   f( \bar{\lambda}^*, \bar{\lambda}  )\right)^2 \right] \\
& \approx \left( \frac{\bar{ \lambda} }{\bar{\lambda}^*}-1 \right)^2 \mbox{Var}[\lambda^*_s] + \left[ \mbox{log} \left( \frac{\bar{\lambda}^*}{\bar{\lambda} } \right)    \right]^2 \mbox{Var} [\lambda _s].
\end{align*}
Finally we obtain
\begin{align*}
& \mbox{Var} [\ell_{t, \tau, \alpha}] 
 \approx  \left( \frac{\bar{\lambda} }{\bar{\lambda}^*}-1 \right)^2  \int_{\tau}^t \int_{\tau}^t c^{*}(s'-s)dsds' \\
 &~~+ \left[ \mbox{log} \left( \frac{\bar{\lambda}^*}{\bar{\lambda}} \right)    \right]^2 \int_{\tau}^t \int_{\tau}^t c(s'-s)dsds'  \\
  \approx &  (t-\tau)   \left(   \left[  1-\frac{1-\alpha^*}{1-\alpha}\right]^2  
\left[ \frac{\mu}{1-\alpha^*}+ \frac{\alpha^* (2-\alpha^*)\mu}{(1-\alpha^*)^3}    \right] \right.
\\&~~\left.+\left[ \mbox{log} \left( \frac{1-\alpha}{1-\alpha^*} \right) \right]^2  \left[ \frac{\mu}{1-\alpha}+ \frac{\alpha(2-\alpha)\mu}{(1-\alpha)^3}    \right]   \right) .
\end{align*}
The factor in the last equation that multiplies $(t-\tau)$ corresponds to $\sigma_0^2$ in this setting. 

The proof for multi-dimensional case with a transition from the Hawkes process to a Hawkes process is similar and omitted here. 

\section{More real-data examples}\label{app:more_eg}

The scenario for Fig.~\ref{fig:real-twitter}(d) is also interesting as it reflects the activity on the network surrounding Mr. Shkreli, the former chief executive of Turing Pharmaceuticals, who is facing federal securities fraud charges. At Feb. 4th he was invited to congress for a hearing to be questioned about drug price hikes\footnote{\url{http://www.nytimes.com/2016/02/05/business/drug-prices-valeant-martin-shkreli-congress.html}}. 

The fifth example, Fig.~\ref{fig:real-twitter}(e) is about Rihanna who announced the release of her new album in a tweet on Jan. 25th. That post was retweeted 170K times and received 280K likes and creates a sudden change in network of her followers.\footnote{\url{http://jawbreaker.nyc/2016/01/is-rihannas-anti-album-finally-done/}} 

The last example, in Fig.~\ref{fig:real-twitter}(f), demonstrates an increase in the statistic related to the network of Daughter around 25th of January who is attributed to releasing his new album at Jan. 25th.\footnote{\url{http://www.nme.com/news/daughter/79540}}

\begin{figure}[h!]
\begin{center}
\begin{tabular}{c}
\includegraphics[width=0.32\textwidth]{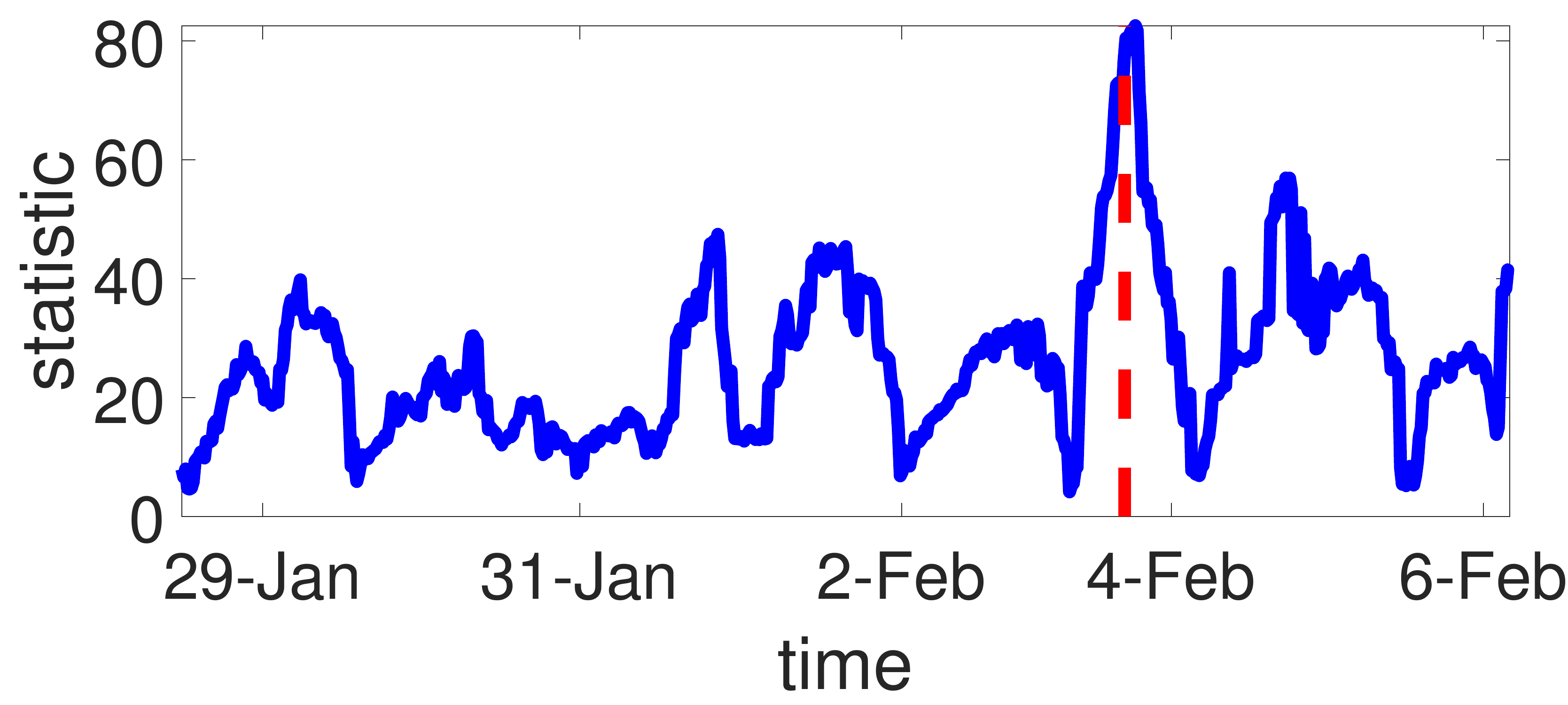} \\(a)\\
\includegraphics[width=0.32\textwidth]{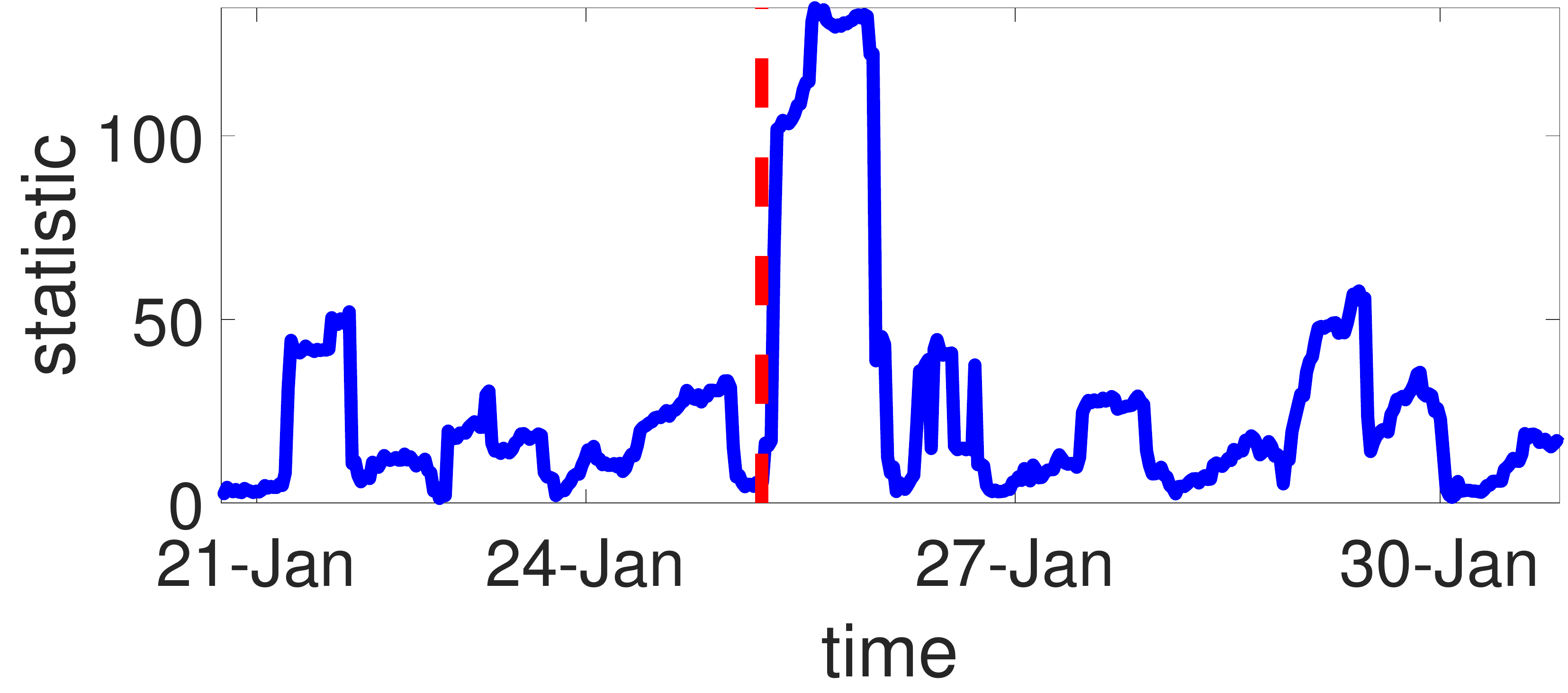} \\(b)\\
\includegraphics[width=0.32\textwidth]{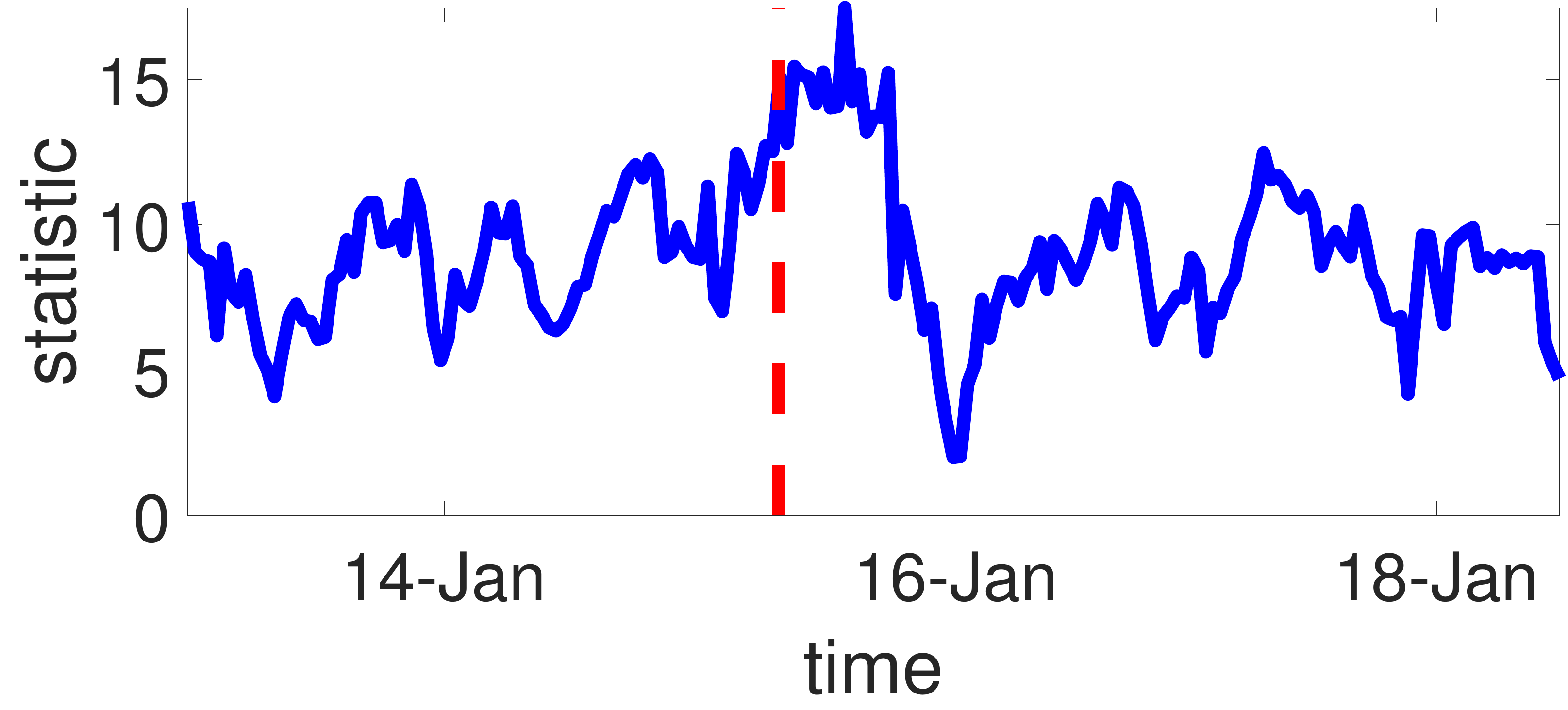} \\(c)
\end{tabular}
\caption{\small Exploratory results on Twitter for the detected change points: (a) Court hearing  on Martin Shkreli; (b) Rihanna listens to ANTI; (c) Daughter releases his new album.}
\end{center}
\end{figure}

\end{document}